\newlength\colwd
\newcolumntype{C}{>{\centering\arraybackslash}p{\colwd}}
\newcommand{\meanstd}[2]{\mbox{#1~(±#2)}}
\newtheorem{assumption}{Assumption}
\newtheorem{theorem}{Theorem}
\newtheorem{lemma}{Lemma}
\newtheorem{corollary}{Corollary}
\newtheorem{definition}{Definition}
\newtheoremstyle{myremark}       
  {\topsep}                      
  {\topsep}                      
  {\normalfont}                 
  {}                             
  {\bfseries}                   
  {.}                           
  {0.5em}                       
  {}                             
\theoremstyle{myremark}       
\newtheorem{remark}{Remark}
\newcommand{\fmult}{\mathbf{f}_{\textit{\rm mult}}}
\newcommand{\diffd}{\text{d}}
 \titlespacing*{\section}{0pt}{-0.03\baselineskip}{-0.025\baselineskip}
\titlespacing*{\subsection}{0pt}{-0.02\baselineskip}{-0.015\baselineskip}
\titlespacing*{\subsubsection}{0pt}{-0.025\baselineskip}{-0.025\baselineskip}
\title{Diffusion Transformers for Imputation: Statistical Efficiency and Uncertainty Quantification}
\author{
  Zeqi Ye\thanks{Department of Industrial Engineering and Management Sciences, Northwestern University. \texttt{zeqiye2029@u.northwestern.edu}, \texttt{minshuo.chen@northwestern.edu}} 
  \and
  Minshuo Chen\footnotemark[1]
}
\date{}
\begin{document}
\titleformat{\section}
  {\normalfont\bfseries\fontsize{13.5}{16}\selectfont}
  {\thesection}{0.6em}{}

\maketitle

\begin{abstract}
   Imputation methods play a critical role in enhancing the quality of practical time-series data, which often suffer from pervasive missing values. Recently, diffusion-based generative imputation methods have demonstrated remarkable success compared to autoregressive and conventional statistical approaches. Despite their empirical success, the theoretical understanding of how well diffusion-based models capture complex spatial and temporal dependencies between the missing values and observed ones remains limited. Our work addresses this gap by investigating the statistical efficiency of conditional diffusion transformers for imputation and quantifying the uncertainty in missing values. Specifically, we derive statistical sample complexity bounds based on a novel approximation theory for conditional score functions using transformers, and, through this, construct tight confidence regions for missing values. Our findings also reveal that the efficiency and accuracy of imputation are significantly influenced by the missing patterns. Furthermore, we validate these theoretical insights through simulation and propose a mixed-masking training strategy to enhance the imputation performance.
\end{abstract}

\section{Introduction}
\label{sec:introduction}

Sequential data are ubiquitous in real-world applications such as finance~\citep{john2019imputation, chen2016financial}, healthcare~\citep{tonekaboni2021unsupervised, kazijevs2023deep}, transportation~\citep{li2020spatiotemporal, tedjopurnomo2020survey}, and meteorology~\citep{yozgatligil2013comparison}. However, these datasets often suffer from missing values due to factors such as sensor malfunctions, data transmission errors, and human oversight~\citep{greco2012incomplete, yi2016st}. Missing data can significantly degrade the performance of downstream tasks~\citep{ribeiro2022missing, alwateer2024missing}, making accurate and robust imputation a critical challenge.

One of the earliest imputation methods dates back to \citet{allan1930method}, which provided formulas for estimating single missing observations. Over the past century, this foundational idea of imputation has been extended to broader application domains. Statistical imputation methods have gained sustained attention due to their computational efficiency and ease of implementation. These approaches range from simple techniques, such as imputation using the mean or median of observations, to interpolation-based methods~\citep{tukey1952extrapolation}, and more sophisticated model-based techniques, including Kalman filters and autoregressive models~\citep{gomez1994estimation, shumway2000time}. However, these methods often rely on strong assumptions such as linearity and stationarity, which may not hold in complex real-world scenarios, thereby limiting their applicability and accuracy~\citep{fuller2009introduction}.

To address the limitations of statistical methods, recent research has increasingly turned to machine learning approaches for imputation. These methods are capable of capturing complex spatio-temporal patterns and nonlinear dependencies without requiring strict assumptions~\citep{fang2020time}. Typical examples include training neural networks such as recurrent neural networks and transformer architectures for inferring missing values \citep{wang2024deep}. In parallel, generative models such as Variational AutoEncoders (VAEs) and Generative Adversarial Networks (GANs) have shown promise by introducing uncertainty-aware imputations \citep{fortuin2020gp, miao2021generative}. However, these generative models often spell limitations in expressiveness or training stability. More recently, diffusion-based generative models have emerged as a powerful alternative, offering robust imputations and strong empirical performance across diverse and high-dimensional time series datasets \citep{tashiro2021csdi, zhou2024mtsci}.

Despite their widespread empirical success, diffusion-based imputation methods exhibit two key challenges. First, their performance is highly sensitive to dataset characteristics, often displaying substantial variability across benchmarks~\citep{zhang2024unleashing, zheng2022diffusion, tashiro2021csdi}. Second, they are significantly affected by missing patterns, leading to inconsistencies in imputation quality~\citep{zhang2024unleashing, ouyang2023missdiff, zhou2024mtsci}. These observations motivate the following fundamental questions:

\begin{center}
\it How well can diffusion models capture the underlying conditional distribution of missing values?

How does the missing pattern affect the imputation performance?
\end{center}

In this paper, we answer the two questions from a statistical learning perspective. Our analysis centers on Diffusion Transformers~(DiT, \citet{Peebles2022DiT}) applied to imputation tasks with Gaussian process (GP) data. Despite their conceptual simplicity, GPs exhibit rich spatio-temporal dependencies and long-horizon dependencies that pose challenges for modeling and imputation. On the other hand, GPs are powerful statistical tools widely used in regression, classification, and forecasting tasks~\citep{seeger2004gaussian, banerjee2013efficient, borovitskiy2021matern}.

We establish sample complexity bounds for DiTs in learning the underlying conditional distribution of missing values given observed ones. The obtained bounds demonstrate the role of missing patterns in imputation performance, highlighting how the condition number of the covariance matrix for the missing values and distribution shifts contribute to variability in accuracy. Furthermore, we derive confidence intervals for imputed values and show the coverage probability of them converging to the desired level. We summarize our contributions as follows.

\noindent $\bullet$ {\bf Statistical Efficiency}. We show that DiTs capture the conditional distribution of missing values effectively. The sample complexity in Theorem~\ref{distribution_estimation_theorem} scales at a rate $\tilde{\cO}(\sqrt{Hd^2\kappa^5}/\sqrt{n})$, where $n$ denotes the training sample size. We obtain a $n^{-1/2}$ convergence rate with a mild polynomial dependence on the sequence length $H$. In addition, $\kappa$ is the condition number induced by the missing patterns. To establish Theorem~\ref{distribution_estimation_theorem}, we develop a novel score representation theory (Theorem~\ref{score_approx_theorem}) for DiTs, where we utilize an algorithm unrolling technique.

\noindent $\bullet$ {\bf Uncertainty Quantification}. Leveraging the generative power of trained DiTs, we construct confidence regions (intervals) from massive generated missing values. This approach possesses its natural appeal and enjoys strong coverage guarantees (Corollary~\ref{confidence_interval_construction}). We show that the coverage probability converges to the desired level at a $\tilde{\cO}(n^{-1/2})$ rate. Meanwhile, the missing patterns influences the convergence.

\noindent $\bullet$ {\bf Mixed-Masking Training Strategy}. Motivated by our theoretical results, we propose a training strategy blending different masking schemes to cover diverse missing patterns. The performance of our method on synthetic datasets validates our findings and outperforms benchmark methods.

\noindent {\bf Notations}~
We use bold lowercase letters to denote vectors and bold uppercase letters to denote matrices. For a vector $\vb$, $\|\vb\|_2$ denotes its Euclidean norm. For a matrix $\Ab$, $\|\Ab\|_2$ and $\|\Ab\|_{\rm F}$ denote its spectral norm and Frobenius norm, respectively, and $\|\Ab\|_\infty = \max_{i,j} |A_{ij}|$. When matrix $\Ab$ is positive definite, we denote $\lambda_{\max}(\Ab)$ and $\lambda_{\min}(\Ab)$ as its largest and smallest eigenvalues; its condition number is $\kappa(\Ab) = \lambda_{\max}(\Ab) / \lambda_{\min}(\Ab)$. We denote \(f \lesssim g\) if there exists a constant \(C > 0\) such that \(f \le Cg\). Notation \(\mathcal{O}(\cdot)\) suppresses constants, while \(\tilde{\mathcal{O}}(\cdot)\) further hides logarithmic factors.

\section{Related Work}
\label{app:related_work}
In the early stages of time series imputation, statisticians developed a wide range of traditional statistical methods aimed at both imputation (point estimation) and quantifying uncertainty, often by leveraging well-established statistical tools to construct confidence intervals~\citep{cox1981statistical,shumway2000time}. Initial techniques were relatively simple, such as imputing missing values using the mean or median of observed entries. These were later followed by more advanced interpolation approaches based on regression models, including linear regression and splines~\citep{shumway2000time}. To better exploit the spatio-temporal structure inherent in time series data, model-based methods emerged, such as ARIMA, GARCH, Kalman filters, and Bayesian inference frameworks~\citep{fuller2009introduction}. These approaches are advantageous for their interpretability, ability to incorporate domain knowledge, and support for formal statistical testing. Moreover, many of them naturally allow for uncertainty quantification through predictive intervals or posterior distributions. However, these methods come with notable limitations: they typically rely on strong assumptions about stationarity, linearity, or noise distributions, making them less effective for complex real-world data with nonlinear or high-dimensional spatio-temporal dependencies~\citep{anderson2011statistical}. Additionally, their computational cost often scales poorly with data dimensionality, posing challenges for modern large-scale applications.

To address the limitations of statistical approaches, machine-based imputation methods have become increasingly popular in recent years. Early approaches include classical machine learning models~\citep{jerez2010missing} such as support vector machines~\citep{wu2015time} and tree-based methods (including bagging and boosting techniques)~\citep{vateekul2009tree, yang2017time}. With the advancement of model architectures and increasing computational power, deep learning-based models have gained prominence for their ability to capture complex temporal dependencies~\citep{fang2020time, wang2024deep, du2024tsi}. Predictive models such as RNNs~\citep{che2018recurrent,yoon2018estimating,cao2018brits}, CNNs~\citep{wu2022timesnet,fu2024filling}, GNNs~\citep{cini2021filling}, and transformer‑based networks~\citep{bansal2021missing,du2023saits} directly estimate missing values using well-designed architectures. Generative imputation methods model the distribution of missing data and perform better in quantifying uncertainty; representative techniques include GAN‑based methods~\citep{luo2018multivariate, yoon2018gain,miao2021generative}, VAE‑based approaches~\citep{mattei2019miwae,fortuin2020gp,mulyadi2021uncertainty,peis2022missing,kim2023probabilistic}, and diffusion models. Among diffusion approaches, CSDI~\citep{tashiro2021csdi} introduced conditional diffusion for time series imputation, and subsequent work~\citep{alcaraz2022diffusion,wang2023observed,liu2023pristi,zhou2024mtsci} improved conditioning strategies and computational efficiency. DiT~\citep{Peebles2022DiT,cao2024timedit} extends this line by integrating a transformer backbone into the diffusion framework, achieving better imputation accuracy and uncertainty quantification. These methods resolve certain issues and perform well empirically, however, are still limited by lacks of uncertainty quantification in many methods and theoretical understanding.

Our work also contributes towards the theoretical foundations of diffusion models \citep{chen2024opportunities, tang2024score}. Some prior works have established sample efficiency and learning guarantees for diffusion models when modeling the original data distribution. \citet{chen2022sampling, benton2023nearly, li2024accelerating} show that the generated distribution remains close to the target distribution, assuming access to an relatively accurate score function. By incorporating score approximation procedures and corresponding theoretical analysis, \citet{chen2023score, oko2023diffusion, mei2025deep} provide end-to-end guarantees, covering various types of data including manifold data and graphical models. In the case of conditional diffusion models, sharp statistical bounds of distribution estimation have been derived in~\citet{fu2024unveil}. Additionally, \citet{fu2024diffusion} explores the theoretical regime of modeling spatio-temporal dependencies in sequential data. However, these results do not directly apply to more concrete and complex scenarios, such as how conditional DiT models can learn intricate dependencies to accomplish time series imputation tasks.

\section{Imputation in Gaussian Processes via Conditional Diffusion Models}
\label{sec:imputation_and_diffusion}

In this section, we formalize the imputation task as a conditional distribution estimation problem. When the data are sampled from a Gaussian process, we identify rich structures in the conditional distribution. We then utilize a DiT to learn the distribution of missing values. Lastly, we summarize diffusion-based imputation method in Algorithm~\ref{alg:diffusion_imputation}.

\subsection{Imputation for Gaussian Process Data}
\label{imputation_setting}

Imputation refers to the task of inferring missing values given the observed ones. Denote by \(\cI = \{1, \dots, H\}\) the set of all time indices. For a multivariate sequence $\Xb = [\xb_1, \dots, \xb_H] \in \mathbb{R}^{d \times H}\) of length \(H\), we consider a block-missing setting, where certain time frames are entirely unobserved. The subset of observed indices is denoted by \(\cI_{\rm obs} = \{i_1, \dots, i_{|\cI_{\rm obs}|}\}\), where $|\cI_{\rm obs}|$ denotes the cardinality. Correspondingly, $\cI_{\rm miss} = \cI \setminus \cI_{\rm obs}$ denotes the time indices of missing frames. To avoid degenerate cases, we assume \(0 < |\cI_{\rm miss}| < H\).
 In the sequel, we focus on the {\it Missing Completely at Random} case \citep{little1988test}, where each index in $\cI_{\rm miss}$ is independently sampled from some underlying distribution. 

We represent the vectorized observed partial sequence as \(\xb_{\rm obs} = [\xb_{i_1}^\top, \dots, \xb_{i_{|\cI_{\rm obs}|}}^\top]^\top \in \mathbb{R}^{d|\cI_{\rm obs}|}\), and the vectorized missing part as \(\xb_{\rm miss}= [\xb_{j_1}^\top, \dots, \xb_{j_{|\cI_{\rm miss}|}}^\top]^\top \in \mathbb{R}^{d|\cI_{\rm miss}|}\). \emph{We estimate the missing values by learning the conditional distribution \(P(\xb_{\rm miss} \mid \xb_{\rm obs})\)}. Notably, learning the conditional distribution goes beyond point estimates of the missing values, but provides easy access to confidence regions. We slightly abuse the notation by using \(\xb\) to simultaneously refer to random vectors.

Throughout our theoretical analysis, we focus on $d$-dimensional Gaussian process data. To uniquely distinguish a Gaussian process, it suffices to specify its mean and covariance functions. In particular, we denote the mean as $\bmu_i = \EE[\xb_i]$ and we parameterize the covariance matrix by ${\rm Cov}[\xb_i, \xb_j] = \gamma(i, j) \bLambda$, where $\bLambda = {\rm Var}[\xb_h] \in \RR^{d \times d}$ for any $h$ and $\gamma$ is a kernel function. It is worth mentioning that $\bLambda$ captures the spatial dependencies and function $\gamma$ represents temporal correlation. The kernel function $\gamma$ dictates the strength and decay of the temporal dependencies among different data frames. The joint distribution of a sequence ${\rm vec}(\Xb) = [\xb_1^\top, \cdots, \xb_H^\top]^\top$ is Gaussian $\cN(\bmu, \bGamma \otimes \bLambda)$, where
\begin{align*}
\bmu = 
\begin{bmatrix}
    \bmu_1,\\
    \vdots \\
    \bmu_H
\end{bmatrix}\quad \text{and} \quad
\bGamma \otimes \bLambda =  
\begin{bmatrix}
\gamma(1,1) \bLambda & \cdots & \gamma(1, H) \bLambda \\
\vdots & \ddots & \vdots \\
\gamma(H,1) \bLambda & \cdots & \gamma(H,H) \bLambda
\end{bmatrix}.
\end{align*}
Here $\bGamma_{ij} = \gamma(i, j)$ and $\otimes$ is the matrix Kronecker product. We impose the following assumption for characterizing the temporal dependencies.

\begin{assumption}
\label{assump:data_assumption}
There exists $d_e$-dimensional embedding $\{\eb_i \in \RR^{d_e}\}_{i=1}^H$ such that $\norm{\eb_i}_2 = r$ for a constant $r$. Moreover, for any $i, j$, it holds that $\norm{\eb_i - \eb_j}_2 = f(|i - j|)$, and for \(|i_1-j_1| \neq |i_2-j_2|\), \(f(|i_1 - j_1|) \neq f(|i_2-j_2|)\) . Kernel function $\gamma(i, j)$ only depends on $\norm{\eb_i - \eb_j}_2$. Furthermore, we assume \(\bGamma\) and \(\bLambda\) are positive definite.
\end{assumption} 
Assumption~\ref{assump:data_assumption} ensures that the pairwise distances in the embedding uniquely identifies positional gaps. We do not specify a particular form of the kernel function, which encodes many commonly ones such as Gaussian Radial Basis Function (RBF), Ornstein–Uhlenbeck kernels, and Mat\'{e}rn kernels \citep{rasmussen2006gaussian}. As a concrete example, sinusoidal embedding is widely used in transformer networks \citep{vaswani2017attention}. Consider a two-dimensional embedding defined as 
\(
\eb_i = [r \sin(2\pi i / C), r \cos(2\pi i / C)]^\top,
\)
where \( r > 0 \) is a fixed radius and \( C > 0 \) is a scaling constant. The Euclidean distance between any two embedding is
\(
\| \eb_i - \eb_j \|_2 = 2r | \sin\left(\pi (i - j) / C \right)|,
\)
which is strictly positive for \( i \neq j \), and approximately linear in \( |i - j| \) when \( C \) is sufficiently large.

Under the Gaussian process setting, the conditional distribution of $\xb_{\rm miss} | \xb_{\rm obs}$ is still Gaussian \citep{bishop2006pattern}. The conditional mean and covariance are given by
\begin{align*}
\bmu_{\rm cond}(\xb_{\rm obs}) = \bmu_{\mathrm{miss}} + \bSigma_{\mathrm{cor}}^\top \bSigma_{\mathrm{obs}}^{-1} (\xb_{\rm obs} - \bmu_{\mathrm{obs}}),\quad
\bSigma_{\rm cond} = \bSigma_{\mathrm{miss}} - \bSigma_{\mathrm{cor}}^\top \bSigma_{\mathrm{obs}}^{-1} \bSigma_{\mathrm{cor}},
\end{align*}
where we denote $\bmu_{\rm obs} = \EE[\xb_{\rm obs}]$ (the same holds for $\bmu_{\rm miss}$), $\bSigma_{\rm cor} = {\rm Cov}[\xb_{\rm obs}, \xb_{\rm miss}]$, and $\bSigma_{\rm obs}$ (resp. $\bSigma_{\rm miss}$) as the covariance of $\xb_{\rm obs}$ (resp. $\xb_{\rm miss}$). See Figure~\ref{fig:dit_arch} for a graphical demonstration. We check that $\bSigma_{\rm obs} = \bGamma_{\rm obs} \otimes \bLambda$ with $\bGamma_{\rm obs} \in \RR^{|\cI_{\rm obs}| \times |\cI_{\rm obs}|}$ capturing correlation among index set $\cI_{\rm obs}$.

\subsection{Training Diffusion Transformers for Imputation}
We estimate the conditional distribution \(P(\xb_{\rm miss} \mid \xb_{\rm obs})\) using diffusion transformers. A diffusion model consists of two coupled processes---a forward and a backward process. We adopt a continuous-time description. In the forward process, we gradually corrupt data by
\begin{equation}\label{diffusion_forward}
    \diffd \xb_t = -\frac{1}{2} \xb_t \diffd t + \diffd \wb_t \quad \text{with} \quad \xb_0 \sim P(\cdot \mid \xb_{\rm obs}),
\end{equation}
and \( \wb_t \) is a Wiener process. The forward process terminates at a sufficiently large time \( T \) and we denote the distribution of $\xb_t$ as $P_t(\cdot | \xb_{\rm obs})$ with density $p_t(\cdot | \xb_{\rm obs})$. Note that we only corrupt the missing values by Gaussian noise, but keep the observed partial sequence $\xb_{\rm obs}$ unchanged.

Corresponding to the forward process, the backward process simulates the reverse evolution of the forward process. As a result, it generates new samples by progressively removing noise:
\begin{equation}
    \label{diffusion_backward}
    \diffd \vb_t = \left[ \frac{1}{2} \vb_t + \nabla_{\vb_t} \log p_{T-t}(\vb_t \mid \xb_{\rm obs}) \right] \diffd t + \diffd \bar{\wb}_t \quad \text{with} \quad \vb_0 \sim P_T(\cdot \mid \xb_{\rm obs}),
\end{equation}  
where \( \bar{\wb}_t \) is another Wiener process and $\nabla_{\vb_t} \log p_t(\vb_t \mid \xb_{\rm obs})$ is the conditional score function. In the remaining of the paper, we drop the subscript $\vb_t$ in the score function for simplicity. Unfortunately, \( \nabla \log p_{T-t}(\vb_t \mid \xb_{\rm obs}) \) is typically unknown and must be estimated using a neural network. We denote the estimated score function by \( \hat{\sbb}(\vb_t, \xb_{\rm obs}, t) \). Consequently, the sample generation process follows an alternative backward SDE:  
\begin{equation}
    \label{practice_diffusion_backward}
    \diffd \hat{\vb}_t = \left[ \frac{1}{2} \hat{\vb}_t + \hat{\sbb}(\hat{\vb}_t, \xb_{\rm obs}, t) \right] \diffd t + \diffd \bar{\wb}_t \quad \text{with} \quad \hat{\vb}_0 \sim \mathcal{N}(\mathbf{0}, \Ib_{d | I_{\rm miss} |}).
\end{equation}
Here, we also replace the unknown $P_T$ by a standard Gaussian distribution.

When training the score estimator $\hat{\sbb}$, we assume access to fully observed sequences. To simulate a partially observed sequence, we sample a masking sequence $\{\tau_1, \dots, \tau_H\} \in \{0, 1\}^H$, where $0$ denotes missing the observation and $1$ keeping the observation. Then $\xb_{\rm obs}$ is extracted according to the masking sequence. In later context, we will investigate how to choose masking strategies. We summarize the diffusion-based method for sequence imputation in Algorithm~\ref{alg:diffusion_imputation}.

\begin{algorithm}[H]
\caption{Diffusion-Based Sequence Imputation}
\label{alg:diffusion_imputation}
\begin{algorithmic}[1]

\State \texttt{Module I: Training}
\State \textbf{Input:} Fully observed sequences \(\mathcal{D} \coloneqq \{\Xb_i\}_{i=1}^n\), a masking strategy.
\State Simulate $\{\xb_{\rm obs}^{(i)}, \xb_{\rm miss}^{(i)}\}_{i=1}^n$ pairs via the masking strategy, and train a conditional diffusion model.

\State \textbf{Output:} A well-trained conditional diffusion model.

\Statex
\State \noindent {\tt Module II: Imputation}
\State \textbf{Input:} Conditional diffusion model from {\tt Module I}, a new partial sequence \(\xb_{\rm obs}^*\), repetition time $Z$, and confidence level \(1 - \alpha\).
\State Conditioned on $\xb_{\rm obs}^*$, independently generate $B$ missing sequences $\hat{\xb}_{\rm miss}^{(z)}$ for $z = 1, \dots, Z$.
\State $\star$ {\it Point estimate}: Mean \(\hat{\xb}_{\rm miss}^* = \frac{1}{Z} \sum_{z=1}^Z \hat{\xb}_{\rm miss}^{(z)}\) (or median of the generated sequences).
\State $\star$ {\it Confidence region}: \(\hat{\mathcal{CR}}_{1 - \alpha}^* = \big\{ \xb_{\rm miss}: \|\xb_{\rm miss} - \hat{\xb}_{\rm miss}^*\|_2 \leq \hat{D}^*_{1-\alpha} \big\}\), \\ where \(\hat{D}^*_{1-\alpha}\) is the $1-\alpha$ upper quantile of \(\|\hat{\xb}_{\rm miss}^{(z)} - \hat{\xb}_{\rm miss}^*\|_2\) for $z = 1, \dots, Z$.

\State \textbf{Return:} \(\hat{\xb}_{\rm miss}^*\) and \(\hat{\mathcal{CR}}_{1 - \alpha}^*\).
\end{algorithmic}
\end{algorithm}

For the rest of the paper, we parameterize the conditional score function using a transformer network. A transformer~\citep{vaswani2017attention}, comprises a series of blocks and each block encompasses a multi-head attention layer and a feedforward layer. Let \( \Yb = [\yb_1, \ldots, \yb_H] \in \mathbb{R}^{D \times H} \) be the (column) stacking matrix of \( H \) patches. In a transformer block, the multi-head attention layer computes
\begin{equation}
\textstyle    \mathrm{Attn}(\Yb) = \Yb + \sum_{m=1}^M \Vb^m \Yb \cdot \sigma \big( (\Qb^m \Yb)^\top \Kb^m \Yb \big),
\end{equation}
where \( \Vb^m, \Qb^m, \Kb^m \) are weight matrices of corresponding sizes in the \( m \)-th attention head, and \( \sigma \) is an activation function. The attention layer is followed by a feedforward layer, which computes
\[
    \mathrm{FFN}(\Yb) = \Yb + \Wb_1 \cdot \mathrm{ReLU}(\Wb_2 \Yb + \bbb_2 \mathbf{1}^\top) + \bbb_1 \mathbf{1}^\top.
\]
Here, \( \Wb_1, \Wb_2 \) are weight matrices, \( \bbb_1 \) and \( \bbb_2 \) are offset vectors, \( \mathbf{1} \) denotes a vector of ones, and the ReLU activation function is applied entry-wise. This feedforward layer performs a linear transformation to the output of the attention module with more flexibility. For our study, the raw input to a transformer is \( H \) patches of \( d \)-dimensional vectors and time \( t \) in the backward process. We refer to \( \mathcal{T}(D, L, M, B, R) \) as a transformer architecture defined by
\begin{align}
    \label{transformer_architecture}
    \mathcal{T}(D, L, M, B, R) = \big\{ &f : f = f_{\mathrm{out}} \circ (\mathrm{FFN}_L \circ \mathrm{Attn}_L) \circ \cdots  \circ (\mathrm{FFN}_1 \circ \mathrm{Attn}_1) \circ f_{\mathrm{in}}, \nonumber \\
    &\mathrm{Attn}_i \text{ uses entrywise ReLU activation for } i = 1, \ldots, L,&\nonumber \\
    &\text{number of heads in each } \mathrm{Attn} \text{ is bounded by } M,\nonumber \\
    &\text{the Frobenius norm of each weight matrix is bounded by } B,\nonumber\\
    &\text{the output range $\|f\|_2$ is bounded by $R$}
    \}.
\end{align}

\section{Conditional Score Approximation via Algorithm Unrolling}
\label{sec:score_approximation}
Suggested by the sample generation process \eqref{practice_diffusion_backward}, the key is to learn the conditional score function. This section devotes to establishing a novel score approximation theory of transformers based on algorithm unrolling.

Since $\xb_{\rm miss} | \xb_{\rm obs}$ is Gaussian, the forward process~\eqref{diffusion_forward} yields the following closed-form score function:
\begin{align}
\label{conditional_score_function}
\nabla \log p_t(\vb_t | \xb_{\rm obs}) = -(\alpha_t^2 \bSigma_{\rm cond} + \sigma_t^2 \Ib)^{-1} (\vb_t - \alpha_t{\bmu}_{\rm cond}(\xb_{\rm obs})),
\end{align}  
where \(\alpha_t = e^{-\frac{t}{2}}\) and \(\sigma_t = \sqrt{1-e^{-t}}\). The matrix inverse poses a challenge in representing the score by a transformer, as it may deteriorate structures in $\bSigma_{\rm cond}$. Therefore, we reformulate the conditional score function as the optimal solution of a quadratic optimization problem:
\begin{equation}
\label{outer_gd_problem}
    \nabla \log p_t(\vb_t | \xb_{\rm obs}) = \arg\min_{\sbb}~ \mathcal{L}_t(\sbb)\coloneqq \frac{1}{2} \sbb^\top \left( \alpha_t^2 \bSigma_{\rm cond} + \sigma_t^2 \Ib \right) \sbb + \sbb^\top \left( \vb_t - \alpha_t{\bmu}_{\rm cond}(\xb_{\rm obs}) \right).
\end{equation}
It suffices to obtain an approximate optimal solution of \eqref{outer_gd_problem} using a gradient descent algorithm. At the $k$-th iteration, with a step size $\eta_t$, we have

\begin{align}
\label{outer_gd_step}
\sbb^{(k+1)}
= \sbb^{(k)} - \eta_t \underbrace{\left[(\sigma_t^2 \Ib + \alpha_t^2 \bSigma_{\rm miss})\sbb^{(k)} + \alpha_t^2 \bSigma_{\rm cor}^\top \bSigma_{\rm obs}^{-1} \bSigma_{\rm cor} \sbb^{(k)} + (\vb_t - \alpha_t{\bmu}_{\rm cond}(\xb_{\rm obs}))\right]}_{\nabla {\mathcal{L}}_t(\sbb^{(k)})},
\end{align}
for $k = 0, \dots, K-1$. Unfortunately, we encounter another matrix inverse in $\bSigma_{\rm obs}^{-1}\bSigma_{\rm cor}\sbb^{(k)}$. Analogous to \eqref{outer_gd_problem}, we consider an auxiliary quadratic optimization problem:
\begin{equation}
\label{inner_gd_problem}
    \bSigma_{\rm obs}^{-1}\bSigma_{\rm cor}\sbb^{(k)} = \arg\min_{\ub} \mathcal{L}^{(k)}_{\rm aux}(\ub) := \frac{1}{2}\ub^{\top} \bSigma_{\rm obs} \ub  - \ub^{\top}\bSigma_{\rm cor}\sbb^{(k)}.
\end{equation}
Via a gradient descent algorithm with step size $\theta$, the update reads
\begin{equation}
\label{gd_inner_no_noise}
    \ub^{(k_{\rm aux}+1)} = \ub^{(k_{\rm aux})} - \theta \nabla  \mathcal{L}^{(k)}_{\rm aux}(\ub) 
     = \ub - \theta\left(\bSigma_{\rm obs}\ub - \bSigma_{\rm cor}\sbb^{(k)}\right),
\end{equation}

where iteration index $k_{\rm aux} = 0, \dots, K_{\rm aux}-1$.

We substitute the last iterate \(\ub^{(K_{\rm aux})}\) into the right-hand side of \eqref{outer_gd_step} to obtain \( \tilde{\nabla} {\mathcal{L}}_t(\sbb^{(k)}) \) as an approximation to $\nabla \cL_t(\sbb^{(k)})$. We summarize the nested gradient descent algorithm for calculating the conditional score function in Algorithm~\ref{alg:double_gd}.
\begin{algorithm}[H]
\caption{Nested Gradient Descent for Representing Score Function}
\label{alg:double_gd}
\begin{algorithmic}[1] 
\State \textbf{Input:\;} Observation \(\xb_{\rm obs}\), current state \(\vb_t\), time \( t \), step sizes \(\eta_t, \theta\), iteration counts \(K_{\rm aux}, K\).

\Statex {\it (Major) Gradient Descent:}
\State Initialize \(\sbb^{(0)} = \mathbf{0}\).
\For{\(k = 0, 1, \dots, K - 1\)}
    \Statex \hspace{1.5em}{\it Auxiliary Gradient Descent:}
    \State Initialize \(\ub^{(0)} = \mathbf{0}\).
\For{\(k_{\rm aux} = 0, 1, \dots, K_{\rm aux} - 1\)} 
    \State
    \(
    \ub^{(k_{\rm aux}+1)} = \ub^{(k_{\rm aux})} - \theta \nabla \mathcal{L}^{(k)}_{\rm aux}(\ub^{(k_{\rm aux})}).
    \)
\EndFor
    \State Calculate $\tilde{\nabla} \cL_t(\sbb^{(k)})$ using $\ub^{(K_{\rm aux})}$.
    \State
    \(
    \sbb^{(k+1)} = \sbb^{(k)} - \eta_t \tilde{\nabla} {\mathcal{L}}_t(\sbb^{(k)})
    \).
\EndFor
\State \textbf{Return:} \(\sbb^{(K)}\).
\end{algorithmic}
\end{algorithm}

With sufficiently large $K_{\rm aux}$ and $K$, the representation error of Algorithm~\ref{alg:double_gd} can be well-controlled.
\begin{lemma}[Representation error of Algorithm~\ref{alg:double_gd}]
\label{double_gd_rate}
Suppose Assumption~\ref{assump:data_assumption} holds. For an arbitrarily fixed time $t \in (0,T]$, given an error tolerance $\epsilon \in (0,1)$, choose $K, K_{\rm aux}$ as
\[
K =\mathcal{O}\Big(\kappa(\bSigma_{\rm cond})\log\Big( \frac{Hd\kappa(\bSigma_{\rm cond})\kappa(\bSigma_{\rm obs})}{\epsilon} \Big)\Big),
K_{\rm aux} =\mathcal{O}\Big(\kappa(\bSigma_{\rm obs})\log\Big( \frac{Hd \kappa(\bSigma_{\rm obs})}{\sigma_t \epsilon} \Big)\Big).
\]
Then, given $\delta > 0$, for any $\xb_{\rm obs}$ and $\vb_t$ in a compact region $\cC_\delta$, there exist step sizes \(\eta_t\) and \(\theta\) such that running Algorithm~\ref{alg:double_gd} gives rise to
\[
\|\sbb^{(K)} - \nabla \log p_t(\vb_t \mid \xb_{\rm obs})\|_2 \leq \sigma_t^{-1} \epsilon.
\]

\end{lemma}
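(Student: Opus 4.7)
\textbf{Proof plan for Lemma~\ref{double_gd_rate}.}
My plan is to view Algorithm~\ref{alg:double_gd} as an \emph{inexact} gradient descent on the outer quadratic problem~\eqref{outer_gd_problem}, where the inexactness is controlled by truncating the inner loop at $K_{\rm aux}$ steps. Both problems are strongly convex quadratics: the outer Hessian is $\alpha_t^2\bSigma_{\rm cond}+\sigma_t^2\Ib$ (whose condition number is bounded by $\kappa(\bSigma_{\rm cond})$ since adding $\sigma_t^2\Ib$ only shrinks it), and the inner Hessian is $\bSigma_{\rm obs}=\bGamma_{\rm obs}\otimes\bLambda$, which is positive definite by Assumption~\ref{assump:data_assumption}. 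So the main tools are (i) linear convergence of vanilla GD on strongly convex quadratics and (ii) the standard inexact-GD recursion for accumulated gradient error.

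The first step is to analyze the inner loop for a frozen outer iterate $\sbb^{(k)}$. With $\theta=2/(\lambda_{\max}(\bSigma_{\rm obs})+\lambda_{\min}(\bSigma_{\rm obs}))$ and $\ub^{(0)}=\mathbf{0}$,
\[
\|\ub^{(K_{\rm aux})}-\bSigma_{\rm obs}^{-1}\bSigma_{\rm cor}\sbb^{(k)}\|_2
\;\leq\;\bigl(1-\tfrac{1}{\kappa(\bSigma_{\rm obs})}\bigr)^{K_{\rm aux}}\,\|\bSigma_{\rm obs}^{-1}\bSigma_{\rm cor}\sbb^{(k)}\|_2.
\]
Plugging $\ub^{(K_{\rm aux})}$ into~\eqref{outer_gd_step} produces the perturbed gradient $\tilde{\nabla}\cL_t(\sbb^{(k)})$, and
\[
e_k\;\coloneqq\;\|\tilde{\nabla}\cL_t(\sbb^{(k)})-\nabla\cL_t(\sbb^{(k)})\|_2
\;=\;\alpha_t^2\|\bSigma_{\rm cor}^\top(\ub^{(K_{\rm aux})}-\bSigma_{\rm obs}^{-1}\bSigma_{\rm cor}\sbb^{(k)})\|_2.
\]
The second step is the outer recursion. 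With $\eta_t=2/(\lambda_{\max}+\lambda_{\min})$ of the outer Hessian and contraction factor $\rho=1-1/\kappa(\bSigma_{\rm cond})$, a standard inexact-GD telescoping gives
\[
\|\sbb^{(K)}-\sbb^*\|_2\;\leq\;\rho^K\|\sbb^*\|_2\;+\;\eta_t\sum_{k=0}^{K-1}\rho^{K-1-k}e_k.
\]

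The third step is a uniform a priori bound on $\|\sbb^{(k)}\|_2$, needed to turn the factor $\|\bSigma_{\rm obs}^{-1}\bSigma_{\rm cor}\sbb^{(k)}\|_2$ in $e_k$ into a constant. I would proceed inductively: using compactness of $\cC_\delta$ and the closed form~\eqref{conditional_score_function}, the target $\sbb^*=\nabla\log p_t(\vb_t\mid\xb_{\rm obs})$ satisfies $\|\sbb^*\|_2\leq\sigma_t^{-2}\mathrm{poly}(H,d,\delta)$; combined with the contraction $\rho<1$ and $e_k\leq 1$ by a crude a priori choice of $K_{\rm aux}$, one shows $\|\sbb^{(k)}\|_2\leq M$ for a constant $M$ that is polynomial in $H,d,\sigma_t^{-1},\kappa(\bSigma_{\rm cond}),\kappa(\bSigma_{\rm obs})$ uniformly in $k$. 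This gives a uniform error bound $e_k\leq C\,(1-1/\kappa(\bSigma_{\rm obs}))^{K_{\rm aux}}M$ and simplifies the outer recursion to
\[
\|\sbb^{(K)}-\sbb^*\|_2\;\leq\;\rho^K\|\sbb^*\|_2\;+\;\eta_t\,\kappa(\bSigma_{\rm cond})\,C\,(1-\tfrac{1}{\kappa(\bSigma_{\rm obs})})^{K_{\rm aux}}M.
\]
Setting each summand to $\sigma_t^{-1}\epsilon/2$ and inverting the contraction factor via $(1-1/\kappa)^K\leq e^{-K/\kappa}$ yields the claimed rates; the $\log(1/\sigma_t)$ and all polynomial factors collapse into the logarithmic term of the stated bounds.

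The main obstacle I anticipate is the coupling between the two loops: the per-step inner error $e_k$ depends on $\|\sbb^{(k)}\|_2$, which in turn depends on the accumulated $e_0,\dots,e_{k-1}$. Breaking this circular dependence cleanly requires either the inductive bound sketched above or an upfront crude contraction argument showing the iterates remain in a bounded ball around $\sbb^*$. A secondary subtlety is book-keeping the $\sigma_t$ dependence: because $\sbb^*$ itself can scale like $\sigma_t^{-2}$, I must verify that the target tolerance $\sigma_t^{-1}\epsilon$ only inflates $K$ by a $\log(1/\sigma_t)$ factor, consistent with the lemma's stated logarithmic dependence.
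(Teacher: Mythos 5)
Your proposal is correct and follows essentially the same route as the paper's proof: linear convergence of the inner GD on the strongly convex quadratic in $\bSigma_{\rm obs}$ (the paper's Lemma~\ref{gd_inner_no_noise_lemma}), an inexact-GD recursion for the outer loop with per-step gradient error (Lemma~\ref{outer_gd_problem_with_noise_lemma}), a priori bounds on $\|\sbb^*\|_2$ and the iterates from the compact region $\cC_\delta$ and the closed-form score (Lemma~\ref{bound_on_s_and_v}), and then a logarithmic choice of $K$ and $K_{\rm aux}$ after tracking the $\sigma_t$ factors. Your explicit inductive handling of the coupling between the iterate norms and the inner error, and your telescoped sum over varying $e_k$ rather than a uniform per-step bound, are only minor refinements of the paper's argument, not a different approach.
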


Detailed proof of Lemma~\ref{double_gd_rate} is provided in Appendix~\ref{app:proof_double_gd_rate}. The compact region $\cC_{\delta}$ truncates the norm of $\xb_{\rm obs}$ and $\vb_t$, which is plausible due to the Gaussian tail; see a precise definition of $\cC_{\delta}$ in Appendix Equation~\eqref{data_truncation_definition}. Lemma~\ref{double_gd_rate} suggests that the computational complexity of Algorithm~\ref{alg:double_gd} for approximating the score function is governed by the condition numbers of $\bSigma_{\rm cond}$ and $\bSigma_{\rm obs}$. A large condition number on $\bSigma_{\rm cond}$ implies that the variability of missing values among different directions changes significantly. Equivalently, with a large condition number, given $\xb_{\rm obs}$, the missing values exhibit strong anistropic uncertainty that complicates the imputation.

Representing the conditional score function by a nested gradient descent algorithm enables an effective transformer network approximation. We show that transformers can realize each gradient descent iteration using a constant number of attention blocks. We provide the following score approximation theory using transformers.

\begin{theorem}
\label{score_approx_theorem}
    Suppose Assumption \ref{assump:data_assumption} holds. Given an early stopping time $t_0 \in (0,T]$ and an error level $\epsilon\in (0,1)$, for any \(\xb_{\rm obs}, \vb_t \in \cC_\delta\), there exists a transformer architecture $\mathcal{T}(D,L,M,B, R)$ such that, with proper weight parameters, it yields an approximation $\tilde{\sbb}$ satisfying 
    \[
    \|\tilde{\sbb}(\vb_t, \xb_{\rm obs},t) - \nabla \log p_t(\vb_t | \xb_{\rm obs})\|_{2}  \leq \sigma_t^{-1} \epsilon \quad \text{for all } t \in [t_0, T].
    \]
    The configuration of the transformer architecture satisfies
    \begin{align*}
        &D =\cO(d + d_e), \quad 
        L = \mathcal{O}\left(\kappa^2_{\max}(\bSigma_{\rm cond}) \kappa_{\rm max}(\bSigma_{\rm obs}) \log^3\left(\frac{Hd\kappa^2_{\max}(\bSigma_{\rm cond})  \kappa_{\rm max}(\bSigma_{\rm obs})}{\epsilon}\right) \right),\\
        &M = 4H, \quad 
        B = \mathcal{O}\left(\sqrt{Hd^3}(r^2 + \kappa_{\rm max}(\bSigma_{\rm obs}) \sigma_{t_0}^{-1})\right), \quad R = \mathcal{O}(\sigma_{t_0}^{-2}\sqrt{Hd}\kappa_{\rm max}(\bSigma_{\rm obs}) ),
    \end{align*}
    where we define \(\kappa_{\rm max} (\cdot) = \sup_{\cI_{\rm obs} } \kappa(\cdot)  \).
\end{theorem}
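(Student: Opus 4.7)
My plan is to let Lemma~\ref{double_gd_rate} do the heavy lifting and then argue that a transformer of the claimed size implements, up to a small kernel-approximation error, the nested gradient descent of Algorithm~\ref{alg:double_gd}. Since Lemma~\ref{double_gd_rate} already guarantees $\|\sbb^{(K)} - \nabla \log p_t(\vb_t \mid \xb_{\rm obs})\|_2 \leq \sigma_t^{-1}\epsilon$ on $\cC_\delta$ with the stated $K$ and $K_{\rm aux}$, it suffices to unroll these $K \cdot K_{\rm aux}$ updates as a constant-per-iteration stack of attention and FFN blocks, and then bound the resulting $(D,L,M,B,R)$. To get uniformity over $t \in [t_0, T]$, I would choose $K, K_{\rm aux}$ from Lemma~\ref{double_gd_rate} evaluated at the worst-case $t$, which is controlled by $\sigma_{t_0}^{-1}$.

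\textbf{Token design.} The input tokens would carry, at each position $i$, the current outer iterate $\sbb^{(k)}_i$, the current inner iterate $\ub^{(k_{\rm aux})}_i$, the noisy state $\vb_{t,i}$ or data $\xb_{{\rm obs},i}$, the missingness indicator $\tau_i$, and the positional embedding $\eb_i$, together with scalar entries for $t,\alpha_t,\sigma_t$. This yields $D=\cO(d+d_e)$. Viewing $\sbb^{(k)}$ as a $d \times H$ matrix $\mathbf{S}^{(k)}$ (zero-padded on observed positions) makes the Kronecker action $(\bGamma\otimes\bLambda)\sbb^{(k)}$ equal to ${\rm vec}(\bLambda\,\mathbf{S}^{(k)}\,\bGamma^\top)$, which is precisely the shape a transformer block naturally produces.

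\textbf{Realizing one gradient step.} All non-trivial work in \eqref{outer_gd_step} and \eqref{gd_inner_no_noise} consists of products against matrices of the Kronecker form $\bGamma\otimes\bLambda$ with $\bGamma_{ij}=\gamma(\|\eb_i-\eb_j\|_2)$, restricted to index sets dictated by $\tau_i$. In one attention head, choosing $\Qb,\Kb$ to extract the embeddings makes $(\Qb\Yb)^\top(\Kb\Yb)$ an affine function of $\|\eb_i-\eb_j\|_2^2$, while choosing $\Vb$ to act as $\bLambda$ on the $d$-dimensional slice storing $\mathbf{S}^{(k)}$ (or $\mathbf{U}^{(k_{\rm aux})}$) produces the $\bLambda$ factor. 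A bounded bundle of ReLU-activated heads followed by a FFN then yields a piecewise-linear surrogate $\tilde{\gamma}$ of $\gamma$ on the compact range of distances allowed by Assumption~\ref{assump:data_assumption}. The following FFN applies the affine update $\sbb-\eta_t(\cdot)+(\vb_t-\alpha_t\bmu_{\rm cond})$; since $\bmu_{\rm cond}(\xb_{\rm obs})=\bmu_{\rm miss}+\bSigma_{\rm cor}^\top\bSigma_{\rm obs}^{-1}(\xb_{\rm obs}-\bmu_{\rm obs})$ has the same Kronecker structure, it is realized by one more nested-GD subnetwork embedded in the same block, with $\tau$ masking out irrelevant positions.

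\textbf{Parameter accounting and main obstacle.} The depth becomes $L=\cO(K\cdot K_{\rm aux})$ times the polylog overhead from the kernel approximation, which matches the claimed polynomial in $\kappa_{\max}(\bSigma_{\rm cond})$, $\kappa_{\max}(\bSigma_{\rm obs})$, and $\log(1/\epsilon)$. Dedicating $\cO(H)$ heads to the rows/columns of $\bGamma$ gives $M=4H$. The Frobenius bound $B$ is tracked by induction on the iteration index: $\|\Vb\|_{\rm F},\|\Qb\|_{\rm F},\|\Kb\|_{\rm F}$ pick up $r^2$ from the embedding radius, and the inhomogeneous term contributes the $\kappa_{\max}(\bSigma_{\rm obs})\sigma_{t_0}^{-1}$ factor, producing $B=\cO(\sqrt{Hd^3}(r^2+\kappa_{\max}(\bSigma_{\rm obs})\sigma_{t_0}^{-1}))$. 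The output range $R$ follows by combining the closed form \eqref{conditional_score_function} with the truncation $\cC_\delta$, giving $R=\cO(\sigma_{t_0}^{-2}\sqrt{Hd}\,\kappa_{\max}(\bSigma_{\rm obs}))$. The main obstacle, I anticipate, is the kernel-approximation step: because $\gamma$ is only assumed to depend on distances (it may be RBF, Mat\'ern, OU, etc.), ReLU attention cannot realize $\bGamma$ exactly, so one must trade approximation granularity for depth and then propagate the perturbation through $K\cdot K_{\rm aux}$ nested iterations without destroying the $\sigma_t^{-1}\epsilon$ target. This stability bookkeeping is what inflates $L$ to its $\kappa_{\max}^2(\bSigma_{\rm cond})\kappa_{\max}(\bSigma_{\rm obs})\log^3(\cdot)$ form rather than the naive $\kappa_{\max}(\bSigma_{\rm cond})\kappa_{\max}(\bSigma_{\rm obs})\log^2(\cdot)$.
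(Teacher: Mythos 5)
Your high-level plan coincides with the paper's: unroll Algorithm~\ref{alg:double_gd}, invoke the (noisy-iterate version of) Lemma~\ref{double_gd_rate} at the worst-case $t$ for the iteration counts, realize the Kronecker-structured matrix--vector products with attention plus FFN blocks, and then count $(D,L,M,B,R)$. However, the step you flag as the ``main obstacle'' --- building a piecewise-linear ReLU surrogate $\tilde\gamma$ of the kernel from attention scores that are affine in $\|\eb_i-\eb_j\|_2^2$ --- is where your argument has a genuine gap. Assumption~\ref{assump:data_assumption} gives no continuity or smoothness of $\gamma$; it only says $\gamma$ depends on the embedding distance and that distinct gaps give distinct distances. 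So a ReLU approximation of $\gamma$ as a function of the score cannot be quantified without extra assumptions, and propagating an unquantified $\tilde\gamma$ error through $K\cdot K_{\rm aux}$ iterations is exactly what you cannot do. The paper avoids kernel approximation altogether by exploiting discreteness: since $|i-j|$ is an integer and the distances are separated by a margin $\Delta=\min_i\{f^2(i+1)-f^2(i)\}$, for each gap $m$ it uses four ReLU attention heads forming a trapezoid that computes the indicator $\mathbbm{1}\{|i-j|=m\}$ \emph{exactly} (together with the 0/1 token bits that select observed-vs-missing interactions), and hard-codes $\frac{8}{\Delta}\gamma_m\bLambda$ into the value matrix; this is precisely where $M=4H$ comes from, and it makes the $\bGamma\otimes\bLambda$ products error-free.

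A second missing ingredient is the multiplication by the time-dependent scalars $\eta_t,\alpha_t,\sigma_t^2,\alpha_t^2$: these are inputs (carried in the token via $\phi(t)$) while the network weights are fixed, so a plain FFN cannot ``apply the affine update $\sbb-\eta_t(\cdot)+\dots$'' exactly; the paper inserts an approximate-product module $\fmult$ (Lemma~\ref{mult_approx_lemma}) of depth $\mathcal{O}(\log(1/\epsilon_{\rm mult}))$ at every such multiplication, and controlling these errors inside each major and auxiliary step is the actual error-propagation work. Consequently your attribution of the final $\kappa^2_{\max}(\bSigma_{\rm cond})\kappa_{\max}(\bSigma_{\rm obs})\log^3(\cdot)$ depth to kernel-approximation bookkeeping is not how the bound arises: in the paper it comes from (number of major steps $K$) $\times$ (auxiliary steps $K_{\rm aux}$, one log) $\times$ ($\fmult$ depth, another log), with the per-step accuracy tightened by $\sigma_t^{-1}(\kappa_t+2)^{-1}$ factors; the only error sources are the $\fmult$ modules and the finite iteration counts, not $\bGamma$. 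To repair your proposal you would either need to add a smoothness hypothesis on $\gamma$ (changing the theorem) or switch to the paper's exact indicator-plus-hard-coded-$\gamma_m$ construction, and you would need to add an explicit product gadget for the diffusion-time scalars to get uniformity over $t\in[t_0,T]$ with a single set of weights.
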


The proof is provided in Appendix~\ref{app:proof_score_approx_theorem}.
Figure \ref{fig:dit_arch} depicts the transformer architecture in our constructive proof, which unrolls Algorithm~\ref{alg:double_gd} efficiently. To obtain the approximation error bound, we develop a careful analysis of the error propagation in the auxiliary gradient descent for calculating $\tilde{\nabla} \cL_t$. Theorem~\ref{score_approx_theorem} also reinforces the insights from Lemma~\ref{double_gd_rate}, where we observe that the size of the transformer network scales with the worst-case condition number. We will further discuss the relation between missing patterns and the condition number in Theorem~\ref{distribution_estimation_theorem}.

\begin{figure}[!htb]
    \centering
    \includegraphics[width=\linewidth]{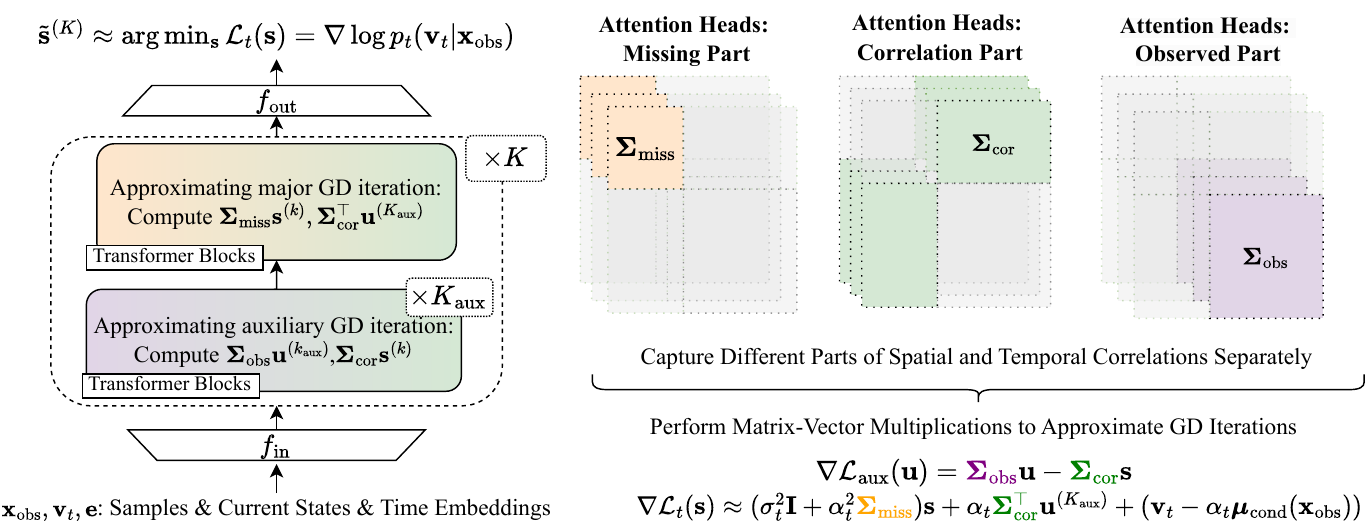}
    \caption{
   Constructed transformer architecture: Within each transformer block, attention heads focus on capturing information of different covariance components (\(\bSigma_{\rm obs}\), \(\bSigma_{\rm cor}\), \(\bSigma_{\rm miss}\)) separately,  and approximate corresponding matrix–vector multiplications. A total of $K$ block groups perform major GD steps, with $K_{\rm aux}$ inner blocks in each group dedicated to solving the auxiliary problem.
}
    
    \label{fig:dit_arch}
\end{figure}

\section{Capturing Conditional Distribution and Uncertainty Quantification}
\label{sec:estimation}

Given a properly chosen transformer architecture, we establish guarantees for learning the conditional distribution of missing values and uncertainty quantification. We consider an estimated score network \(\hat{\sbb}\) obtained by minimizing the following empirical score matching loss (a detailed derivation is deferred to Appendix~\ref{proof_of_corollary}):
\begin{equation}\label{empirical_risk}
\textstyle    \hat{\sbb} \in \arg\min_{\sbb\in \mathcal{T}} \hat{\mathcal{L}}(\sbb) \coloneqq\frac{1}{n}\sum_{i=1}^n\ell(\xb_{\rm miss}^{(i)}, \xb_{\rm obs}^{(i)};\sbb),
\end{equation}
where

\begin{equation}
    \label{empirical_loss}
    \ell(\xb_{\rm miss}^{(i)},\xb_{\rm obs}^{(i)};\sbb) = \int_{t_0}^T  \mathbb{E}_{\vb_t | \vb_0 = \xb_{\rm miss}^{(i)}}\left[\|\sbb(\vb_t,\xb_{\rm obs}^{(i)},t) - (\vb_t - \alpha_t \vb_0)/\sigma_t^2\|_2^2\right]dt.
\end{equation}
Substituting the learned score \(\hat{\sbb}\) into the backward SDE \eqref{practice_diffusion_backward} yields generated distribution \(\vb_{t_0} \sim \hat{P}_{t_0}(\cdot\mid\xb_{\rm obs})\). We introduce an early-stopping time $t_0$ to stabilize the training and sample generation \citep{song2020score}. We now present a convergence guarantee of $\hat{P}_{t_0}$ to the true conditional distribution.
\begin{theorem}
    \label{distribution_estimation_theorem}
    Referring to the training procedure in Algorithm~\ref{alg:diffusion_imputation}, by choosing the transformer architecture as in Theorem \ref{score_approx_theorem} with $\epsilon = n^{-\frac{1}{2}}$, terminal time $T = \mathcal{O}(\log n)$, and early-stopping time $t_0 = \mathcal{O}(\lambda_{\min}(\bSigma_{\rm cond})n^{-\frac{1}{2}})$, it holds that 
    \[
    \epsilon_{\rm dist}^{(n)} \coloneqq 
    \mathbb{E}_{\mathcal{D}^{(n)}}\left[\mathbb{E}_{ \xb_{\rm obs}}\left[\operatorname{TV}(P(\cdot|\xb_{\rm obs}), \hat{P}_{t_0}(\cdot|\xb_{\rm obs}))\right]\right]
 =
      \tilde{O}(\sqrt{Hd^2\kappa^5(\bSigma_{\rm cond})\kappa^2(\bSigma_{\rm obs})}/\sqrt{n}).
    \]
\end{theorem}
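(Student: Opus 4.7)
The plan is to decompose the expected TV error between the target conditional law $P(\cdot\mid\xb_{\rm obs})$ and the generated law $\hat P_{t_0}(\cdot\mid\xb_{\rm obs})$ into three standard pieces: (i) the early-stopping bias $\mathrm{TV}(P, P_{t_0})$ between the target and its $t_0$-noised version, (ii) the initialization bias at the terminal time $T$ in the reverse SDE~\eqref{practice_diffusion_backward} (replacing $P_T$ by $\cN(\mathbf 0,\Ib)$), and (iii) the score-matching error propagated from $T$ down to $t_0$. For (i), the Gaussianity of $P(\cdot\mid\xb_{\rm obs})$ yields $\mathrm{TV}(P, P_{t_0}) \lesssim \sqrt{t_0/\lambda_{\min}(\bSigma_{\rm cond})}$. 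For (ii), the Ornstein--Uhlenbeck contraction gives $\tilde O(e^{-T})$, which is negligible when $T = \Theta(\log n)$. For (iii), I apply Girsanov's theorem (in the spirit of \citet{chen2022sampling,chen2023score}) to get
\[
\mathrm{TV}^2\big(\hat P_{t_0}(\cdot\mid\xb_{\rm obs}),\, P_{t_0}(\cdot\mid\xb_{\rm obs})\big) \lesssim \int_{t_0}^T \EE_{\vb_t\mid\xb_{\rm obs}}\|\hat\sbb(\vb_t,\xb_{\rm obs},t) - \nabla\log p_t(\vb_t\mid\xb_{\rm obs})\|_2^2\, dt.
\]

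I would then bound the time-integrated score error via an empirical-process decomposition. Inserting the explicit transformer approximant $\tilde\sbb$ from Theorem~\ref{score_approx_theorem} splits the error into an approximation part and a generalization part. The approximation part contributes $\int_{t_0}^T \sigma_t^{-2} \epsilon^2\, dt = \tilde O(\epsilon^2 \log(1/t_0))$. The generalization part is controlled by a standard covering-number / uniform-concentration bound on the class $\cT(D,L,M,B,R)$, contributing $\tilde O(R^2 \log\cN(\cT)/n)$, where $\log\cN(\cT)$ scales polylogarithmically in $L,D,M,B,R$ and hence in $H,d,\kappa(\bSigma_{\rm cond}),\kappa(\bSigma_{\rm obs})$. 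Truncating $\xb_{\rm obs}$ and $\vb_t$ to the compact region $\cC_\delta$ with tail probability $\delta \ll n^{-1/2}$ justifies the boundedness required by the covering argument; the residual tail contribution is absorbed into the $\tilde O$.

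Plugging in the prescribed parameters is the last, mostly mechanical step. Choosing $\epsilon = n^{-1/2}$ makes the approximation contribution $\tilde O(1/n)$ up to $\log$ factors, and choosing $t_0 = \Theta(\lambda_{\min}(\bSigma_{\rm cond})\, n^{-1/2})$ balances the early-stopping bias against the other sources of error. Condition numbers propagate through Theorem~\ref{score_approx_theorem}: $L \propto \kappa^2(\bSigma_{\rm cond})\kappa(\bSigma_{\rm obs})$ feeds $\log\cN(\cT)$, while $R \propto \sigma_{t_0}^{-2}\sqrt{Hd}\,\kappa(\bSigma_{\rm obs})$ injects additional powers of $\kappa(\bSigma_{\rm cond})$ through $\sigma_{t_0}^{-2} \sim \lambda_{\min}^{-1}(\bSigma_{\rm cond})\, n^{1/2}$ and of $\kappa(\bSigma_{\rm obs})$. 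Taking the square root to pass from $\mathrm{TV}^2$ to $\mathrm{TV}$ then assembles the announced rate $\tilde O\!\big(\sqrt{Hd^2\kappa^5(\bSigma_{\rm cond})\kappa^2(\bSigma_{\rm obs})/n}\big)$.

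The hard part will be the bookkeeping of these condition-number exponents. The power $\kappa^5(\bSigma_{\rm cond})$ is not visible in any single ingredient; it emerges only by combining the $\kappa^2$ inside the depth $L$, the $\kappa$ that enters $B$ and $R$ via $\sigma_{t_0}^{-1}$ (since $t_0 \propto \lambda_{\min}(\bSigma_{\rm cond})$), and a further factor from the $R^2$ weight in the generalization term. A secondary obstacle is the $\sigma_t^{-2}$ weighting in the score objective, which forces the choice of $t_0$ above so that $\int_{t_0}^T \sigma_t^{-2} dt$ contributes only polylogarithmic factors. Finally, one must verify that the truncation to $\cC_\delta$ is compatible with the transformer output bound $R$ of Theorem~\ref{score_approx_theorem} so that the approximation guarantee persists on the high-probability region where the score-matching loss is concentrated.
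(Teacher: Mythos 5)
Your overall route is the same as the paper's: decompose the TV error into early-stopping bias, initialization error at terminal time $T$, and a Girsanov-controlled score-matching error, then bound the integrated score risk by combining the approximation guarantee of Theorem~\ref{score_approx_theorem} with a covering-number generalization argument under truncation to $\cC_\delta$, and finally plug in $\epsilon=n^{-1/2}$, $T=\cO(\log n)$, $t_0=\cO(\lambda_{\min}(\bSigma_{\rm cond})n^{-1/2})$. The paper packages the score-risk step as Corollary~\ref{score_estimation_risk_bound_theorem} (Appendix~\ref{proof_of_corollary}), proved by a ghost-sample decomposition with a covering of the truncated loss class; your sketch is a compressed version of exactly that.

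Two bookkeeping claims in your sketch would, however, break the argument if executed literally. First, the log-covering number of $\cT(D,L,M,B,R)$ is \emph{not} polylogarithmic in $D,L,M$: by Lemma~\ref{transformer_covering_number} it scales like $D^2ML^2$ up to logarithms, i.e.\ polynomially in $d$, $H$ and, through $L$, in $\kappa^4(\bSigma_{\rm cond})\kappa^2(\bSigma_{\rm obs})$; this polynomial dependence is precisely where the $Hd^2\kappa^4\kappa^2$ factors of the final rate come from, so your statement that the entropy is polylogarithmic in $H,d,\kappa$ contradicts the theorem you are proving (and your own later remark that $L\propto\kappa^2(\bSigma_{\rm cond})\kappa(\bSigma_{\rm obs})$ "feeds" the covering number). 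Second, the generalization term must not carry a weight $R^2$: since $R=\cO\big(\sigma_{t_0}^{-2}\sqrt{Hd}\,\kappa(\bSigma_{\rm obs})\big)$ and $\sigma_{t_0}^{-2}\asymp t_0^{-1}\asymp\lambda_{\min}^{-1}(\bSigma_{\rm cond})\sqrt{n}$, you would have $R^2\asymp n$ and $R^2\log\cN/n$ would not decay in $n$ at all. The paper instead multiplies the metric entropy by the loss magnitude on the truncated event, $\int_{t_0}^T\sigma_t^{-4}\,\diffd t\asymp T+t_0^{-1}$, i.e.\ only one factor of $\sigma_{t_0}^{-2}$, yielding the risk bound of order $Hd^2\kappa^4(\bSigma_{\rm cond})\kappa^2(\bSigma_{\rm obs})\,t_0^{-1}\log(\cdot)/n$ in Corollary~\ref{score_estimation_risk_bound_theorem}; the fifth power of $\kappa(\bSigma_{\rm cond})$ then arises from $L^2$ (four powers) plus the single extra power contributed by $t_0^{-1}\propto\lambda_{\min}^{-1}(\bSigma_{\rm cond})$, after which $\operatorname{TV}\lesssim\sqrt{\mathcal{R}(\hat{\sbb})}$ assembles the stated bound. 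A minor further point: the early-stopping term carries a $\sqrt{Hd}$ factor (the paper computes the Gaussian KL explicitly and gets $n^{-1/2}(Hd)^{1/2}$), which your $\sqrt{t_0/\lambda_{\min}(\bSigma_{\rm cond})}$ omits. With these corrections your plan coincides with the paper's proof.
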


The proof of Theorem~\ref{distribution_estimation_theorem} is provided in Appendix~\ref{app:estimation_and_CI}. This result establishes that DiT can efficiently learn the true conditional distribution of missing values. The sample complexity mildly depends on the sequence length. More importantly, the bound highlights that the estimation error depends on the condition numbers of \(\bSigma_{\rm cond}\) and \(\bSigma_{\rm obs}\), reflecting the discussion after Lemma~\ref{double_gd_rate}. 

We provide an example to demonstrate that different missing patterns can lead to distinct condition numbers. Consider data of length \(H = 96\) with time correlation modeled by a Laplace kernel \(\gamma(i,j) = \exp(-\|\eb_i - \eb_j\|_2 / 128)\), and missing length \(|\cI_{\rm miss}| = 16\). Clustered missingness—16 consecutive missing entries at the tail—yields a large condition number \(\kappa(\bSigma_{\rm cond}) = 415.40\), making the task challenging. In contrast, dispersed missing patterns, 16 randomly placed missing entries, result in much smaller \(\kappa(\bSigma_{\rm cond}) = 3.00\), making estimation easier. We provide numerical results on this example in Section~\ref{sec:experiments}.

\paragraph{Confidence Region Construction}

Given the learned conditional distribution $\hat{P}_{t_0}$ and a new observed sequence $\xb_{\rm obs}^*$, we deploy the model to generate samples and form point estimates and confidence regions as in Algorithm~\ref{alg:diffusion_imputation}. Since $\xb_{\rm obs}^*$ may not be seen in the training samples, we encounter a distribution shift, meaning that we need to transfer the knowledge in the learned model to the new testing instance. The subtlety here is how to quantify the knowledge transfer rate. Our proposal is the following class-dependent distribution shift coefficient.
\begin{definition}
    The distribution shift between two probability distributions \(P_1\) and \(P_2\) with respect to a function class \(\mathcal{G}\) is defined as 
    \(
    {\sf DS}(P_1,P_2;\mathcal{G}) = \sup_{g\in\mathcal{G}}\frac{\mathbb{E}_{\yb\sim P_1}[g(\yb)]}{\mathbb{E}_{\yb\sim P_2}[g(\yb)]}.
    \)
\end{definition}
In our analysis, we specialize $\cG$ to a function class induced by the transformer network:
\[
\cG = \left\{ 
g(\yb) = \mathbb{E}_{\xb_{\rm miss}} [\ell (\xb_{\rm miss},\yb;\sbb)] : \sbb \in  \mathcal{T}(D,L,M,B,R) \right\}.
\]
Since $\cG$ might be insensitive to certain distinctions, it introduces some smoothing effect to capture the difference between $P_1$ and $P_2$. We consider $P_1$ and $P_2$ as the marginal training distribution of \(\xb_{\rm obs}\) and the point mass of the testing distribution \(\mathbbm{1}\{\cdot = \xb_{\rm obs}^*\}\), denoted as \(P_{\xb_{\rm obs}}\) and \(P_{\xb_{\rm obs}^*}\), respectively. The following corollary provides a guarantee for the coverage probability of the constructed CR.

\begin{corollary}
\label{confidence_interval_construction}
    Under the setting of Theorem \ref{distribution_estimation_theorem}, given \(\xb_{\rm obs}^*\), Algorithm~\ref{alg:diffusion_imputation} yields \(\hat{\mathcal{CR}}_{1 - \alpha}^*\) satisfying
    \[
    \mathbb{E}_{\mathcal{D}^{(n)}}\left[\PP(\xb_{\rm miss}^*\in \hat{\mathcal{CR}}_{1 - \alpha}^*)\right] \geq (1-\alpha)-\epsilon_{\rm dist}^{(n)}\cdot\sqrt{{\sf DS}(P_{\xb_{\rm obs}^*},P_{\xb_{\rm obs}};\cG)} - n^{-\frac{1}{2}}\psi(\xb_{\rm obs}^*),
    \] 
    where \(\psi(\xb_{\rm obs}^*)\) is independent of \(n\) and proportional to \(\|\xb_{\rm obs}^*\|_2\) and \(\kappa(\bSigma_{\rm cond})\).

\end{corollary}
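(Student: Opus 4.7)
The plan is to compare the coverage of $\hat{\mathcal{CR}}_{1-\alpha}^*$ under the true conditional law $P(\cdot|\xb_{\rm obs}^*)$ against the nominal level $1-\alpha$, attributing the gap to the total-variation distance between $P(\cdot|\xb_{\rm obs}^*)$ and the learned generative law $\hat{P}_{t_0}(\cdot|\xb_{\rm obs}^*)$ at the fixed test conditioning, and then transferring the training-distribution bound of Theorem~\ref{distribution_estimation_theorem} to that test point through the class-dependent distribution shift. First I would pass to the limit $Z\to\infty$ so that the empirical center $\hat{\xb}_{\rm miss}^*$ and the empirical quantile $\hat{D}_{1-\alpha}^*$ concentrate to their population counterparts under $\hat{P}_{t_0}(\cdot|\xb_{\rm obs}^*)$; the resulting population confidence region has coverage exactly $1-\alpha$ under $\hat{P}_{t_0}(\cdot|\xb_{\rm obs}^*)$ by construction, and the residual Monte-Carlo fluctuation can be absorbed into lower-order terms by choosing $Z$ large. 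The elementary inequality $|P(A)-Q(A)|\leq\operatorname{TV}(P,Q)$ then gives
\[
\PP(\xb_{\rm miss}^*\in\hat{\mathcal{CR}}_{1-\alpha}^*)\geq (1-\alpha) - \operatorname{TV}(P(\cdot|\xb_{\rm obs}^*),\hat{P}_{t_0}(\cdot|\xb_{\rm obs}^*)),
\]
and I split the remaining TV by triangle inequality through $P_{t_0}(\cdot|\xb_{\rm obs}^*)$.

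For the generative gap $\operatorname{TV}(P_{t_0}(\cdot|\xb_{\rm obs}^*),\hat{P}_{t_0}(\cdot|\xb_{\rm obs}^*))$, I would invoke the Girsanov-type bound that underlies the proof of Theorem~\ref{distribution_estimation_theorem}, yielding
\[
\operatorname{TV}^2(P_{t_0}(\cdot|\xb_{\rm obs}^*),\hat{P}_{t_0}(\cdot|\xb_{\rm obs}^*))\lesssim g^*(\xb_{\rm obs}^*),
\]
where $g^*(\yb)=\mathbb{E}_{\xb_{\rm miss}\sim P(\cdot|\yb)}[\ell(\xb_{\rm miss},\yb;\hat\sbb)]$ minus a conditioning-independent oracle value is the conditional excess score-matching risk at $\yb$. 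Since $g^*$ differs from an element of $\cG$ only by an additive constant that cancels in the ratio defining ${\sf DS}$, the distribution-shift definition immediately yields
\[
g^*(\xb_{\rm obs}^*)\leq {\sf DS}(P_{\xb_{\rm obs}^*},P_{\xb_{\rm obs}};\cG)\cdot\mathbb{E}_{\xb_{\rm obs}\sim P_{\xb_{\rm obs}}}[g^*(\xb_{\rm obs})].
\]
The right-hand expectation is exactly the excess risk that the proof of Theorem~\ref{distribution_estimation_theorem} controls by $(\epsilon_{\rm dist}^{(n)})^2$ in expectation over $\mathcal{D}^{(n)}$. Taking $\mathbb{E}_{\mathcal{D}^{(n)}}$ and applying Jensen to pull the square root outside combines to produce the $\epsilon_{\rm dist}^{(n)}\sqrt{{\sf DS}(P_{\xb_{\rm obs}^*},P_{\xb_{\rm obs}};\cG)}$ contribution claimed.

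For the early-stopping bias $\operatorname{TV}(P(\cdot|\xb_{\rm obs}^*),P_{t_0}(\cdot|\xb_{\rm obs}^*))$, both distributions are Gaussian: the former has mean $\bmu_{\rm cond}(\xb_{\rm obs}^*)$ and covariance $\bSigma_{\rm cond}$, while the latter, by the Ornstein--Uhlenbeck evolution in \eqref{diffusion_forward}, has mean $\alpha_{t_0}\bmu_{\rm cond}(\xb_{\rm obs}^*)$ and covariance $\alpha_{t_0}^2\bSigma_{\rm cond}+\sigma_{t_0}^2\Ib$. Pinsker together with the closed-form Gaussian KL gives a mean-shift piece scaling in $(1-\alpha_{t_0})\|\bmu_{\rm cond}(\xb_{\rm obs}^*)\|_2$ and a covariance-mismatch piece in $\sigma_{t_0}$, with inverse powers of the eigenvalues of $\bSigma_{\rm cond}$ in the denominators. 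Substituting $t_0=\cO(\lambda_{\min}(\bSigma_{\rm cond})n^{-1/2})$ turns both pieces into $n^{-1/2}$ multiplied by explicit functions of $\bSigma_{\rm cond}$ and $\xb_{\rm obs}^*$; tracking the affine dependence of $\bmu_{\rm cond}$ on $\xb_{\rm obs}^*$ through the coefficient $\bSigma_{\rm cor}^\top\bSigma_{\rm obs}^{-1}$ together with the eigenvalue ratios of $\bSigma_{\rm cond}$ yields the $n^{-1/2}\psi(\xb_{\rm obs}^*)$ contribution with $\psi$ independent of $n$, proportional to $\|\xb_{\rm obs}^*\|_2$ and to $\kappa(\bSigma_{\rm cond})$.

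The main obstacles are (a) verifying that the Girsanov bound at the \emph{fixed} test conditioning $\xb_{\rm obs}^*$ is cleanly expressible through an element of $\cG$ so that the ${\sf DS}$ reduction is tight---in particular that the oracle score-matching subtraction is $\xb_{\rm obs}$-independent and does not spoil the membership needed by the ratio-based distribution-shift definition; and (b) tracking the $\xb_{\rm obs}^*$- and $\bSigma_{\rm cond}$-dependence of the early-stopping bias explicitly enough to extract the stated $\psi(\xb_{\rm obs}^*)\propto \|\xb_{\rm obs}^*\|_2\cdot\kappa(\bSigma_{\rm cond})$ at the $n^{-1/2}$ rate, while ensuring the choice $t_0=\cO(\lambda_{\min}(\bSigma_{\rm cond})n^{-1/2})$ also keeps the covariance term subdominant. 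The Monte-Carlo fluctuation from finite $Z$ is a minor technicality handled by letting $Z$ scale sufficiently fast with $n$.
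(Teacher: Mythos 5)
Your route is essentially the paper's: bound the coverage deficit by $\operatorname{TV}(P(\cdot|\xb_{\rm obs}^*),\hat P_{t_0}(\cdot|\xb_{\rm obs}^*))$, split by the triangle inequality, control the score-estimation piece by the Girsanov bound transferred from the training marginal to the test point via ${\sf DS}$ and Jensen, and control the early-stopping piece by Pinsker plus the closed-form Gaussian KL with $t_0=\cO(\lambda_{\min}(\bSigma_{\rm cond})n^{-1/2})$, tracking the affine dependence of $\bmu_{\rm cond}$ on $\xb_{\rm obs}^*$ to extract $\psi$.

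The one concrete gap is your two-term decomposition through $P_{t_0}(\cdot|\xb_{\rm obs}^*)$ alone. The Girsanov-type lemma (Lemma~\ref{bound_kl_to_girsanov}) compares two backward SDEs started from the \emph{same} initial law, whereas the sampler \eqref{practice_diffusion_backward} is initialized at $\mathcal{N}(\mathbf{0},\Ib)$ while the exact reverse process starts from $P_T(\cdot|\xb_{\rm obs}^*)$; so Girsanov does not directly bound $\operatorname{TV}(P_{t_0},\hat P_{t_0})$. The paper inserts a third intermediate process (exact score, Gaussian initialization) and pays a prior-mismatch term $\operatorname{TV}(P_{t_0}^{\leftarrow},P_{t_0}'^{\leftarrow})$, bounded via data processing and the OU contraction by $e^{-T}$ times a quantity containing $\sqrt{{\xb_{\rm obs}^*}^\top\bSigma_{\rm obs}^{-1}\bSigma_{\rm cor}\bSigma_{\rm cor}^\top\bSigma_{\rm obs}^{-1}\xb_{\rm obs}^*}$ --- which is precisely one of the two quantities entering the paper's $\max$ defining $\psi(\xb_{\rm obs}^*)$. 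With $T=\cO(\log n)$ this term is $\cO(n^{-1})$, so your final statement survives, but the missing term must be introduced for the Girsanov step to be legitimate. Two smaller remarks: your claim that the additive constant relating the transition-kernel loss to the excess score-matching risk ``cancels in the ratio'' defining ${\sf DS}$ is not correct as stated (additive constants do not cancel in ratios, and since that constant is nonnegative the inequality goes the wrong way if you keep the excess risk in the denominator); the paper applies the ${\sf DS}$ transfer at the same level of rigor, so the obstacle (a) you flag is real but not resolved more carefully there either. Finally, your explicit $Z\to\infty$ treatment of the Monte-Carlo error is a point the paper leaves implicit, and is a reasonable addition.
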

Detailed proof is provided in Appendix \ref{app:estimation_and_CI}. Corollary \ref{confidence_interval_construction} says that the coverage probability of the constructed CR converges to the desired level at the same rate of the conditional distribution estimation. More importantly, the distribution shift coefficient directly influences the coverage probability. We present a detailed discussion in the following remark.

\begin{remark}
\label{rmk:training_dist_shift}
There are two factors controlling the distribution shift coefficient: 1) the observed values in $\xb_{\rm obs}^*$ and 2) the missing pattern. From our theoretical analysis, we identify a profound impact of the missingness patterns on the learning efficiency and the choice of transformer architectures. Indeed, when the masking strategy in Algorithm~\ref{alg:diffusion_imputation} is relatively easy, $\epsilon_{\rm dist}^{(n)}$ is small. However, $\xb_{\rm obs}^*$ can deviate significantly from the training samples, causing a large distribution shift. On the contrary, including harder masks can effectively reduce the distribution shift, but elevates learning difficulty. As a result, there is a trade-off between the masking strategy and the reliability of the trained diffusion transformer for imputation. In Section~\ref{sec:experiments}, we introduce a mixed-masking training strategy to enhance the performance of diffusion transformers, where diverse masking patterns are randomly sampled. This reduces distribution shift and improves robustness to varying imputation difficulty.

\end{remark}

\section{Experiments}
\label{sec:experiments}

    \newcommand{\bftab}{\fontseries{b}\selectfont}
We evaluate the performance of DiT through simulation to validate our theoretical results on imputation efficiency, uncertainty quantification, and the effectiveness of the mixed-masking training strategy. Experiments are conducted on Gaussian processes and, additionally, on more complex latent Gaussian processes to assess generalization beyond our theoretical scope. The DiT implementation builds on the DiT codebase~\citep{Peebles2022DiT}. Further experimental details and real-world dataset experiments are provided in Appendix~\ref{app:exp_details}.  Our code is available at \url{https://github.com/liamyzq/DiT_time_series_imputation}.

\subsection{Gaussian Processes}
\label{exp:gaussian_process}
We generate Gaussian process data with sequence length \(H=96\), dimension \(d=8\), and define the missing segment length as \(|\cI_{\rm miss}| = 16\). In addition to applying Algorithm~\ref{alg:diffusion_imputation} to construct \(95\%\) confidence regions (CRs), we sample from the true conditional distribution to evaluate CR coverage—the proportion of true values that fall within the estimated CR for comparison.
\begin{figure}[htbp]
    \centering
    \includegraphics[width=0.95\linewidth]{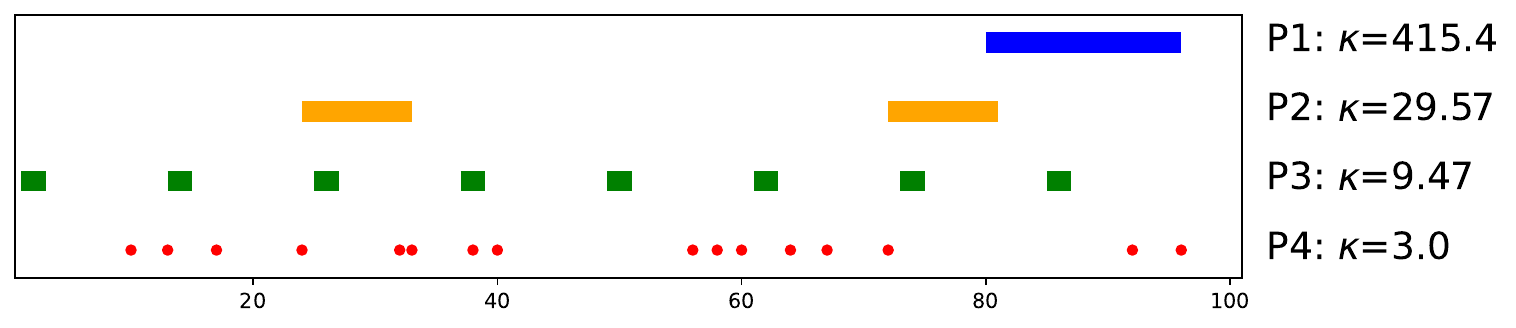}
    \caption{Visualization of the four missing patterns for a sequence of length 96. Each horizontal line shows the positions of missing values (highlighted in blue, orange, green and red for Patterns 1-4), and annotations on the right indicate the pattern number and its condition number $\kappa(\bSigma_{\rm cond})$.
    }
    \label{fig:missing_pattern}
    \vspace{-0.1in}
\end{figure}

We first vary two factors: training sample size \(n \in \{10^3, 10^{3.5}, 10^4, 10^{4.5}, 10^5\}\), and missing patterns 1-4 (denoted as P1-P4) as shown in Figure~\ref{fig:missing_pattern}. As discussed in Theorem~\ref{distribution_estimation_theorem}, \(\kappa(\bSigma_{\rm cond})\) acts as a key varying parameter. To mitigate distribution shift, we apply the same missing patterns to both training and test data. Results in Figure~\ref{fig:ci_coverage} show that small training sets (\(n=10^3\), \(10^{3.5}\)) result in low variability and poor distribution estimation. As sample size increases, DiT yields CRs that significantly better match the true distribution. We further vary sequence length (\(H\)) and report the results in Table~\ref{tab:sequence_length__and_CR}. The results suggest that CR coverage rate decreases as sequence length increases, which supports our theoretical findings. Regarding missing patterns, those with lower condition numbers reduce the sample complexity needed for effective estimation. These findings are consistent with our theory, suggesting that the conditional covariance condition number serves as a practical measure of estimation difficulty. Patterns with lower condition numbers retain richer temporal correlations, enabling accurate estimation with fewer samples.

\definecolor{ao(english)}{rgb}{0.0, 0.5, 0.0}

\begin{figure}[htbp]
  \centering

  \begin{minipage}[t]{0.42\textwidth}
    \vspace{0pt} 
    \centering
    \includegraphics[width=\linewidth]{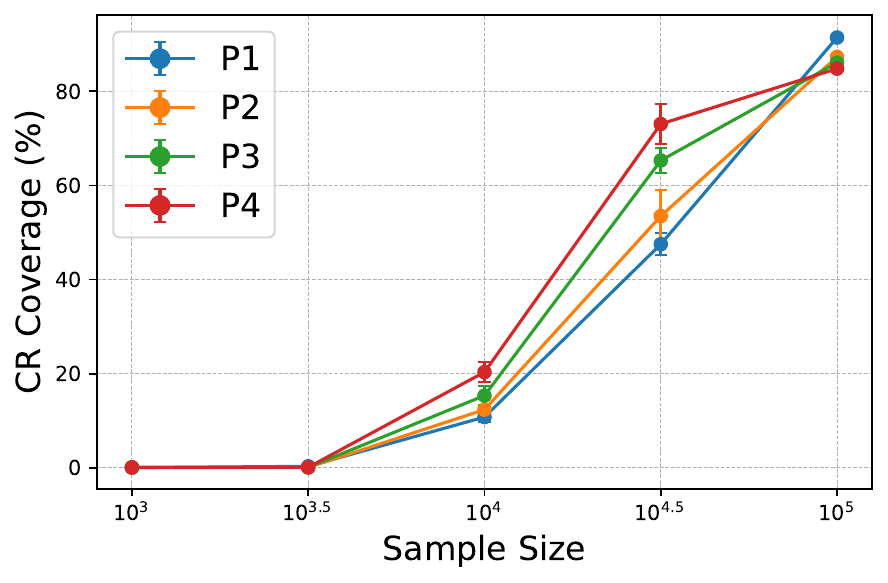} 
    \captionof{figure}{Percentage of real data samples that fall within the DiT‑generated 95\% CR.}
    \label{fig:ci_coverage}
  \end{minipage}
  \hfill 
  \begin{minipage}[t]{0.54\textwidth} 
    \vspace{0pt}

    \captionsetup{type=table}
    \centering
    \captionof{table}{Sequence length vs CR coverage rates (\%)($\uparrow$).} 
    \label{tab:sequence_length__and_CR}
    \resizebox{\linewidth}{!}{%
     
      \begin{tabular}{lccccc}
\toprule
\textbf{H} & 16 & 32 & 64 & 96 & 128 \\
\midrule
CR & 92.67 (±1.95) & 88.63 (±2.01) & 82.14 (±1.70) & 80.25 (±1.64) & 77.81 (±1.87) \\
\bottomrule
\end{tabular}
    } 

\vspace{0.2em}

    \captionsetup{type=table}
    \centering
    \captionof{table}{CR coverage rates (\%) (\(\uparrow\)) of models trained using different strategies on different missing patterns.}
    \label{tab:CI_estimation}
    \resizebox{\linewidth}{!}{%

    \begin{tabular}{lcccc}
\toprule
& \textbf{P1} & \textbf{P2} & \textbf{P3} & \textbf{P4} \\
\midrule
S1         & 34.58 (±1.22) & 58.46 (±1.89) & 72.42 (±1.66) & 80.25 (±1.64) \\
S2         & \textbf{66.22} (±3.86) & \textbf{83.71} (±2.86) & 74.04 (±1.90) & 81.50 (±2.12) \\
S3         & 56.04 (±6.48) & 81.05 (±2.09) & \textbf{74.59} (±1.27) & \textbf{83.09} (±1.48) \\
S4         & 57.27 (±5.34) & 79.00 (±2.42) & 74.38 (±3.00) & 82.74 (±2.40) \\
Only 8×2   & 36.74 (±1.31) & 60.51 (±1.65) & 71.24 (±1.52) & 80.46 (±2.01) \\
Only 4×4   & 34.15 (±1.16) & 59.23 (±1.88) & 73.08 (±1.10) & 79.83 (±1.84) \\
Only 1×16  & 32.68 (±1.50) & 54.23 (±1.76) & 69.46 (±1.53) & 76.72 (±2.20) \\
\bottomrule
\end{tabular}
}

  \end{minipage}
\end{figure}

\paragraph{Mixed-Masking training strategy.} Based on our insights from our distribution shift analysis, we introduce mixed-masking training strategy. Remark~\ref{rmk:training_dist_shift} highlights that discrepancies between training and test distributions can impair CR estimation, especially in real-world settings with limited training data. A common practice is to train on fully random masks, which tend to have lower condition numbers and thus pose easier estimation tasks. However, this intensifies the mismatch with test cases featuring more challenging, clustered missing patterns, limiting model adaptability. To address this, we propose  mixed-masking training strategy. Using the same \(n=10^5\) training samples, we evaluate the four test patterns in Figure~\ref{fig:missing_pattern}. During training, we define four different mixed-masking strategies (each with 16 missing entries):

\begin{itemize}
    \item {\bf S1}: 100\% random missing pattern (16×1, sixteen randomly placed missing entries).
    \item {\bf S2}: 50\% random (16×1) + 50\% weakly grouped (8×2, eight randomly placed blocks of two consecutive missing entries).
    \item {\bf S3}: 33.3\% random (16×1) + 33.3\% weakly grouped (8×2) + 33.3\% moderately grouped (4×4, four randomly placed blocks of four consecutive missing entries).
    \item {\bf S4}: 25\% random (16×1) + 25\% weakly grouped (8×2) + 25\% moderately grouped (4×4) + 25\% strongly grouped (1×16, one randomly placed block of sixteen consecutive missing entries).
\end{itemize}

Results in Table~\ref{tab:CI_estimation} show that models trained with mixed masking consistently outperform the baseline trained with completely random placed masks (S1).  We also evaluate the strategies only containing individual patterns (8×2, 4×4, and 1×16 separately), and the results suggest that they yield inferior imputation performance compared to appropriately mixing different patterns. This supports our proposed mixed-masking strategies and aligns well with our theoretical insights.  Yet determining optimal mixing ratios is instance based and remains an open question for future work. 

Regarding how these strategies relate to our theoretical results, intuitively, different missing patterns during training lead to different training distributions $P _ {\mathbf{x} _ {\mathrm{obs}}}$, resulting in varying condition numbers and consequently different $\mathsf{DS}$ values. Training with diverse missing patterns—ranging from easy to hard—helps the model adapt to imputation tasks with varying levels of difficulty by effectively covering more scenarios. As for a more concrete example, let us denote the training distributions corresponding to S1 and S4 as $P^{(1)} _ {\mathbf{x} _ {\mathrm{obs}}}$ and $P^{(4)} _ {\mathbf{x} _ {\mathrm{obs}}}$, respectively. Consider a test sample $\mathbf{x}^*  _ {\mathrm{obs}}$ following the strongly grouped missing pattern P1 (consecutive missing entries). Intuitively, the resulting distribution $P _ {\mathbf{x}^*  _ {\mathrm{obs}}}$ is closer to $P^{(4)} _ {\mathbf{x} _ {\mathrm{obs}}}$ than to $P^{(1)} _ {\mathbf{x} _ {\mathrm{obs}}}$, which implies
the distribution shift coefficient of $P^{(4)} _ {\mathbf{x} _ {\mathrm{obs}}}$ is smaller than the one of $P^{(1)} _ {\mathbf{x} _ {\mathrm{obs}}}$.
Empirically, we calculate the average ratio across all test samples with missing pattern P1 and find that:

$$
\frac{\mathsf{DS}(P _ {\mathbf{x}^*  _ {\mathrm{obs}}}, P^{(1)} _ {\mathbf{x} _ {\mathrm{obs}}}, \mathcal{G})}{\mathsf{DS}(P _ {\mathbf{x}^*  _ {\mathrm{obs}}}, P^{(4)} _ {\mathbf{x} _ {\mathrm{obs}}}, \mathcal{G})} \approx 47.93.
$$
This clearly indicates that the mixed-masking training strategy (S4) yields significantly smaller distribution-shift coefficients compared to purely random missingness (S1). According to Corollary 1, this provides strong theoretical support for the superior empirical performance achieved by our mixed-masking strategy.

\subsection{Latent Gaussian Processes}

We conduct additional experiments to assess whether our findings generalize beyond the theoretical setting—specifically, whether different missing patterns affect imputation and uncertainty quantification performance, and whether the mixed-masking training strategy improves them. For $\Xb$ drawn from the Gaussian process in Section~\ref{exp:gaussian_process}, we consider a corresponding latent Gaussian process: $\Yb = \phi(\Xb) +\bepsilon$ with $\text{vec}(\bepsilon) \sim \mathcal{N}({\bf 0}, 0.1\cdot \Ib_{dH})$, where the non-linear transform $\phi(x) = x+ \exp(-x^2) + 2\sin(x)$ is applied entry-wise. We adopt a training sample size of \( n = 10^5 \). This introduces nonlinearity and noise, increasing the difficulty of distribution estimation.

\begin{table}[!htb]
  \centering
  \begin{minipage}[t]{4.5\colwd}
    \centering
    \captionof{table}{MSE (\(\downarrow\)) on latent Gaussian process data.}
    \resizebox{\linewidth}{!}{%
    \begin{tabular}{c c  C C C}
      \toprule
        &   & DiT                 & CSDI                & GPVAE               \\
      \midrule
      \multirow{4}{*}{P1} 
        & S1 & \meanstd{0.70}{0.03} & \meanstd{0.75}{0.03} & \meanstd{5.24}{0.75} \\
        & S2 & \meanstd{0.68}{0.02} & \meanstd{0.69}{0.02} & \meanstd{5.45}{1.05} \\
        & S3 & \bftab \meanstd{0.67}{0.03} & \meanstd{0.70}{0.03} & \meanstd{5.13}{0.49} \\
        & S4 & \meanstd{0.67}{0.02} & \meanstd{0.68}{0.02} & \meanstd{5.28}{0.68} \\
      \midrule
      \multirow{4}{*}{P2} 
        & S1 & \meanstd{0.64}{0.03} & \meanstd{0.66}{0.03} & \meanstd{5.09}{0.70} \\
        & S2 & \bftab \meanstd{0.62}{0.02} & \meanstd{0.63}{0.03} & \meanstd{5.01}{0.62} \\
        & S3 & \meanstd{0.60}{0.03} & \meanstd{0.62}{0.02} & \meanstd{4.94}{0.56} \\
        & S4 & \meanstd{0.62}{0.03} & \meanstd{0.63}{0.03} & \meanstd{4.84}{0.60} \\
      \midrule
      \multirow{4}{*}{P3} 
        & S1 & \meanstd{0.62}{0.02} & \meanstd{0.65}{0.02} & \meanstd{4.63}{0.58} \\
        & S2 & \meanstd{0.60}{0.03} & \meanstd{0.64}{0.03} & \meanstd{5.12}{1.00} \\
        & S3 & \bftab \meanstd{0.58}{0.02} & \meanstd{0.63}{0.03} & \meanstd{4.50}{0.52} \\
        & S4 & \meanstd{0.58}{0.03} & \meanstd{0.61}{0.02} & \meanstd{4.59}{0.54} \\
      \midrule
      \multirow{4}{*}{P4} 
        & S1 & \meanstd{0.56}{0.01} & \meanstd{0.59}{0.03} & \meanstd{4.89}{0.69} \\
        & S2 & \bftab \meanstd{0.53}{0.03} & \meanstd{0.60}{0.02} & \meanstd{4.79}{0.61} \\
        & S3 & \meanstd{0.53}{0.01} & \meanstd{0.58}{0.03} & \meanstd{4.39}{0.49} \\
        & S4 & \meanstd{0.53}{0.02} & \meanstd{0.58}{0.02} & \meanstd{4.45}{0.54} \\
      \bottomrule
    \end{tabular}%
    }
    \label{tab:latent_gaussian_mse}
  \end{minipage}%
    \hfill
  \begin{minipage}[t]{3.3\colwd}
    \centering
    \captionof{table}{CR coverage rates (\%) (\(\uparrow\)).}
    \resizebox{\linewidth}{!}{%
    \begin{tabular}{c c  C C}
      \toprule
        &   & DiT                 & CSDI                \\
      \midrule
      \multirow{4}{*}{P1} 
        & S1 & \meanstd{36.46}{1.62} & \meanstd{54.75}{1.89} \\
        & S2 & \meanstd{53.68}{3.26} & \meanstd{56.68}{2.75} \\
        & S3 & \meanstd{54.26}{2.79} & \bftab \meanstd{58.64}{3.11} \\
        & S4 & \meanstd{56.43}{3.76} & \meanstd{55.67}{4.03} \\
      \midrule
      \multirow{4}{*}{P2} 
        & S1 & \meanstd{55.81}{1.55} & \meanstd{63.67}{1.77} \\
        & S2 & \meanstd{65.77}{2.87} & \meanstd{64.89}{3.43} \\
        & S3 & \bftab \meanstd{66.24}{3.22} & \meanstd{63.13}{2.95} \\
        & S4 & \meanstd{63.95}{4.38} & \meanstd{65.97}{3.59} \\
      \midrule
      \multirow{4}{*}{P3} 
        & S1 & \meanstd{63.53}{1.72} & \meanstd{61.35}{1.49} \\
        & S2 & \meanstd{71.29}{2.99} & \meanstd{65.69}{2.79} \\
        & S3 & \meanstd{70.89}{2.45} & \meanstd{63.48}{2.90} \\
        & S4 & \bftab \meanstd{73.36}{4.37} & \meanstd{67.17}{3.93} \\
      \midrule
      \multirow{4}{*}{P4} 
        & S1 & \meanstd{76.46}{1.33} & \meanstd{68.60}{1.74} \\
        & S2 & \meanstd{78.63}{2.62} & \meanstd{70.48}{2.34} \\
        & S3 & \meanstd{78.79}{2.67} & \meanstd{73.46}{2.53} \\
        & S4 & \bftab \meanstd{80.64}{3.72} & \meanstd{72.89}{3.78} \\
      \bottomrule
    \end{tabular}%
    }
    \label{tab:latent_gaussian_CI}
  \end{minipage}%
  \vspace{-0.05in}
\end{table}

We evaluate DiT on this transformed dataset using the same four missing patterns and four training strategies from Section~\ref{exp:gaussian_process}. For comparison, we implement two representative generative imputation models—CSDI~\citep{tashiro2021csdi} and GPVAE~\citep{fortuin2020gp}, ensuring all models have comparable numbers of trainable parameters. We report Mean Squared Error (MSE) against the true conditional mean and CR coverage rates, following the setup in Section~\ref{exp:gaussian_process}. Results are shown in Tables~\ref{tab:latent_gaussian_mse} and~\ref{tab:latent_gaussian_CI}. Since GPVAE performs poorly in point estimation, we omit its CR coverage. DiT consistently outperforms in both MSE and CR coverage, indicating transformers may better suit this task than CSDI’s convolutional design.  Moreover, mixed-masking training improves performance not only for DiT but also for other models, demonstrating its broader benefit. These findings reinforce our conclusions from Gaussian process experiments and support the generalization of our theory and training methodology to more complex, nonlinear settings.

\section{Conclusion and Discussion}
\label{sec:conclusion}

Our work addresses a critical gap in the theoretical understanding of diffusion-based time series imputation and uncertainty quantification by investigating the statistical efficiency of diffusion transformers on Gaussian process data. This result enables efficient and accurate imputation and confidence region construction. Motivated by the theory, we propose a mixed-masking training strategy that introduces diverse missing patterns during training, rather than relying solely on completely random masks. Our experiments validate the theoretical findings and further demonstrate that the proposed strategy performs well and generalizes to more complex data beyond our analytical scope.

Looking ahead, investigating the behavior of diffusion transformers on heavy-tailed time series (e.g., financial data) would further clarify their limitations and guide practical design choices. Moreover, a more detailed analysis of optimal mixed-masking training strategies—especially those leveraging prior knowledge—could significantly improve the performance of imputation models.

\clearpage

\bibliography{ref}
\bibliographystyle{plainnat}

\clearpage

\appendix

\section{Proof of Lemma \ref{double_gd_rate}}
\label{app:proof_double_gd_rate}

We provide the detailed proof of Lemma \ref{double_gd_rate} in this section.

To simplify our analysis, we begin by making some assumptions. Firstly, without loss of generality, we assume the mean of the Gaussian process data \(\bmu={\bf 0}\). Large norms in $\xb$ and $\vb_t$ often lead to training instability, making it practical to perform clipping. Inspired by this, leveraging the Gaussian and light-tailed nature of $\xb$ and $\vb_t$, we truncate the domain of the data and diffused samples by defining an event that occurs with high probability $1-\delta$:
\begin{equation}
\label{data_truncation_definition}
    \cC_\delta = \{\|\xb \|_2\leq C_{\rm data}^\delta , \|\vb_t\|_2 \leq C_{\rm data}^\delta \},
\end{equation}
where $C_{\rm data}^\delta  = \mathcal{O}(\sqrt{Hd})$ is a threshold depending on $\delta$. {\bf{Our score approximation analysis of Lemma \ref{double_gd_rate} and Theorem \ref{score_approx_theorem} is conducted under the condition of event $\cC_{\delta}$}} (ensuring the conclusions hold with high probability $1-\delta$), which significantly simplifies the process. The relationship between the truncation range \(C_{\rm data}^\delta\), and high probability $\delta$ is deferred to Lemma~\ref{range_of_the_data}. Outside event $\cC_{\delta}$ (i.e., on $\cC_{\delta}^c$), the unbounded range complicates obtaining a meaningful score approximation in the second-norm sense. However, as $\cC_{\delta}^c$ occurs with a small probability, we can still achieve reliable results in distribution estimation, where evaluation is based on expectation.

\paragraph{Some Useful Results} 
In this part, we present some key results regarding the eigenvalues and condition numbers of covariance matrices, which will be instrumental in our analysis.

We first define:
\[
\kappa_t \coloneqq \frac{\lambda_{\mathrm{max}}\left( \alpha_t^2 \bSigma_{\rm cond} + \sigma_t^2 \Ib \right)}{\lambda_{\mathrm{min}}\left( \alpha_t^2 \bSigma_{\rm cond} + \sigma_t^2 \Ib \right)}.
\]

Using the positive definiteness of $\bGamma$ and $\bLambda$, we obtain:
\[
\lambda_{\rm max}(\bLambda) = \|\bLambda\|_2 > 0, \quad \lambda_{\rm min}(\bLambda) = \|\bLambda^{-1}\|_2^{-1} > 0.
\]

Furthermore, by the properties of the Kronecker product, we derive:
\[
\lambda_{\max}(\bSigma_{\rm obs}) = \lambda_{\rm max}(\bGamma_{\rm obs}) \lambda_{\rm max}({\bLambda}), \quad 
\lambda_{\min}(\bSigma_{\rm obs}) = \lambda_{\min}(\bGamma_{\rm obs}) \lambda_{\min}({\bLambda}),
\]
\[
\kappa(\bSigma_{\rm obs}) = \frac{\lambda_{\max}(\bLambda)\lambda_{\max}(\bGamma_{\rm obs})}{\lambda_{\min}(\bLambda)\lambda_{\min}(\bGamma_{\rm obs})} 
= \kappa(\bGamma_{\rm obs})\kappa(\bLambda).
\]

Finally, we assume:
\[
\lambda_{\max}(\bGamma_{\rm obs}), \lambda_{\rm max}(\bGamma_{\rm miss}), \lambda_{\max}(\bLambda) = \mathcal{O}(1).
\]

\subsection{Key Steps for Proving Lemma \ref{double_gd_rate}}
In Lemma \ref{double_gd_rate}, we aim to show that the gradient-based Algorithm \ref{alg:double_gd} provides a good approximation of the conditional score function.

The algorithm employs gradient descent to solve two types of optimization problems: the major GD problem \eqref{outer_gd_problem}, and the auxiliary GD problem \eqref{inner_gd_problem}, which is solved within each update step of the major GD. It is critical to note that the major GD updates are inherently noisy due to various reasons, such as the auxiliary GD approximating certain quantities at each step, and later using transformers to approximate each step. Therefore, to establish the result in Lemma \ref{double_gd_rate}, our proof consists of two key steps:

\textbf{Step 1.} We demonstrate that, with a sufficient number of auxiliary iterations $K_{\rm aux}$, the approximation error of the auxiliary GD loop's result can be controlled below a specified threshold.

\textbf{Step 2.} We then show that, by controlling the perturbation level in each major GD update step, the score approximation error (i.e., the gap between the output of the major GD and the ground truth score function) can also be bounded, provided there are enough major iterations $N$.

In the following, we elaborate on each step by providing precise statements and subsequently use them to prove Lemma \ref{double_gd_rate}. All supporting results are deferred to later sections.

\subsection{Detailed Statements in Steps 1-2 and Proof of Lemma \ref{double_gd_rate}}

Now we present formal statements in {\bf{Step 1-2}} and use them to prove Lemma \ref{double_gd_rate}.

\subsubsection{Formal Statements in Steps 1-2}
This section contains the statements of Lemma \ref{gd_inner_no_noise_lemma} and Lemma \ref{outer_gd_problem_with_noise_lemma}.

\begin{lemma}
\label{gd_inner_no_noise_lemma}
For an arbitrarily fixed time $t \in (0,T]$ and given an error tolerance $\epsilon_0 \in (0,1)$, let $\bbb \coloneqq \bSigma_{\rm cor}\sbb^{(k)} \in \mathbb{R}^{d|\cI_{\rm miss}|}$, running the auxiliary gradient descent in \eqref{gd_inner_no_noise} with a suitable step size $ \theta = 2/(\lambda_{\rm min}(\bSigma_{\rm obs})+ \lambda_{\rm max}(\bSigma_{\rm obs}))$ for 
\[
K_{\rm aux} = \left\lceil \frac{\kappa(\bSigma_{\rm obs}) + 1}{2} \log\left(\frac{\|\bbb\|_2}{\lambda_{\min}(\bGamma_{\rm obs})\lambda_{\min}(\bLambda)\epsilon_0}\right)\right\rceil
\]
iterations produces a solution $\ub^{(K_{\rm aux})}$ that satisfies
\[
\|\ub^{(K_{\rm aux})} - \ub\|_2 \leq \epsilon_0.
\]
\end{lemma}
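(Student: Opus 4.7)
The plan is to recognize that $\mathcal{L}^{(k)}_{\rm aux}(\ub) = \frac{1}{2}\ub^\top \bSigma_{\rm obs}\ub - \ub^\top \bbb$ is a strongly convex quadratic whose Hessian is exactly $\bSigma_{\rm obs}$, which is positive definite by Assumption~\ref{assump:data_assumption}. Consequently the unique minimizer coincides with the target $\ub = \bSigma_{\rm obs}^{-1}\bbb$, and the problem admits standard smoothness and strong-convexity constants $L = \lambda_{\max}(\bSigma_{\rm obs})$ and $\mu = \lambda_{\min}(\bSigma_{\rm obs})$. This reduces the lemma to the textbook iteration-complexity statement for full-batch gradient descent on a positive-definite quadratic.

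First I would set up the one-step error recursion by defining the iterate error $\eb^{(k_{\rm aux})} \coloneqq \ub^{(k_{\rm aux})} - \ub$. Direct substitution into the update~\eqref{gd_inner_no_noise} gives $\eb^{(k_{\rm aux}+1)} = (\Ib - \theta \bSigma_{\rm obs})\eb^{(k_{\rm aux})}$, so $\|\eb^{(k_{\rm aux}+1)}\|_2 \leq \|\Ib - \theta \bSigma_{\rm obs}\|_2 \cdot \|\eb^{(k_{\rm aux})}\|_2$. A brief eigenvalue calculation shows that $\|\Ib - \theta \bSigma_{\rm obs}\|_2 = \max\{|1 - \theta \mu|,\,|1 - \theta L|\}$ is minimized by the prescribed $\theta = 2/(\mu + L)$, yielding the classical contraction rate $\rho = (L - \mu)/(L + \mu) = (\kappa - 1)/(\kappa + 1)$ with $\kappa = \kappa(\bSigma_{\rm obs})$. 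Iterating this inequality from the initialization $\ub^{(0)} = \mathbf{0}$ then gives $\|\ub^{(K_{\rm aux})} - \ub\|_2 \leq \rho^{K_{\rm aux}} \|\ub\|_2$.

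Next I would control the initial distance by $\|\ub\|_2 \leq \|\bSigma_{\rm obs}^{-1}\|_2 \|\bbb\|_2 = \|\bbb\|_2 / \lambda_{\min}(\bSigma_{\rm obs})$, and invoke the Kronecker factorization $\bSigma_{\rm obs} = \bGamma_{\rm obs} \otimes \bLambda$ to split this as $\lambda_{\min}(\bSigma_{\rm obs}) = \lambda_{\min}(\bGamma_{\rm obs})\,\lambda_{\min}(\bLambda)$, which matches the denominator appearing in the claimed complexity. Enforcing $\rho^{K_{\rm aux}} \|\ub\|_2 \leq \epsilon_0$ reduces the argument to solving a scalar logarithmic inequality, and applying the elementary bound $\log\bigl((\kappa+1)/(\kappa-1)\bigr) \geq 2/(\kappa+1)$ (valid for all $\kappa \geq 1$) converts it into the clean ceiling expression stated in the lemma.

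The analysis is essentially textbook gradient-descent theory on a strongly convex quadratic, so no serious obstacle arises; the only bookkeeping point worth flagging is the insistence on keeping the two Kronecker factors $\lambda_{\min}(\bGamma_{\rm obs})$ and $\lambda_{\min}(\bLambda)$ separated rather than collapsing them into a single $\lambda_{\min}(\bSigma_{\rm obs})$. This separation is what ultimately lets the outer iteration-complexity statement in Lemma~\ref{double_gd_rate} be phrased purely in terms of the two covariance condition numbers $\kappa(\bSigma_{\rm obs})$ and $\kappa(\bSigma_{\rm cond})$, and it also makes the propagation of the auxiliary error into the subsequent score-approximation analysis of Theorem~\ref{score_approx_theorem} transparent.
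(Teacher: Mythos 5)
Your proposal is correct and follows essentially the same route as the paper: contraction of gradient descent at rate $(\kappa(\bSigma_{\rm obs})-1)/(\kappa(\bSigma_{\rm obs})+1)$ with step size $2/(\lambda_{\min}+\lambda_{\max})$, the initial-error bound $\|\ub\|_2 \leq \|\bbb\|_2/(\lambda_{\min}(\bGamma_{\rm obs})\lambda_{\min}(\bLambda))$ via the Kronecker factorization, and a logarithmic inequality to convert the geometric decay into the stated iteration count. The only cosmetic difference is that you derive the contraction directly from the linear error recursion $\eb^{(k_{\rm aux}+1)}=(\Ib-\theta\bSigma_{\rm obs})\eb^{(k_{\rm aux})}$, whereas the paper invokes the standard smooth/strongly-convex gradient-descent theorem (Theorem 3.12 of Bubeck) and the bound $1-x\le e^{-x}$; both yield the same result.
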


Here, we introduce $\epsilon_0$ to distinguish the noise arising from the auxiliary GD loop approximation from the error level $\epsilon$ stated in Lemma \ref{double_gd_rate}. This distinction provides additional flexibility to adjust $\epsilon_0$ in subsequent proofs.

Next, we establish a lemma for the convergence of the major GD. In each major GD step (referring to \eqref{outer_gd_step}), we incorporate an error term and represent our gradient update as:
\begin{equation}
\label{outer_gd_problem_with_noise}
    \sbb^{(k+1)} = \sbb^{(k)} - \eta_t \nabla \mathcal{L}_t(\sbb^{(k)}) + \xi^{(k)},
\end{equation}
where \(\xi^{(k)}\) represents the error term in each perturbed major GD step. Explicitly accounting for the noise present in each perturbed gradient step, we can establish:

\begin{lemma}
\label{outer_gd_problem_with_noise_lemma}
For an arbitrarily fixed time $t \in (0,T]$ and given an error tolerance $\epsilon \in (0,1)$, suppose $\|\xi^{(k)}\|_2 \leq \epsilon$, then running the major gradient descent in \eqref{outer_gd_problem_with_noise} and a suitable step size $\eta_t = 2/(\lambda_{\rm min}(\alpha_t^2 \bSigma_{\rm cond} + \sigma_t^2 \Ib)+ \lambda_{\rm max}(\alpha_t^2 \bSigma_{\rm cond} + \sigma_t^2 \Ib))$ for 
\[
K = \mathcal{O}\left(\kappa_{t}\log\left(\frac{Hd \kappa(\bLambda)\kappa(\bGamma_{\rm obs})}{\sigma_t\epsilon}\right)\right)
\]
iterations produces a solution $\sbb^{(K)}$ that satisfies
\[
\|\sbb^{(K)} - \sbb\|_2 \leq \left(\frac{\kappa_t}{2} + 1\right)\epsilon,
\]
where \(\kappa_t \coloneqq \kappa(\alpha_t^2 \bSigma_{\rm cond} + \sigma_t^2 \Ib)\).
\end{lemma}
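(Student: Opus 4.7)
The plan is a standard perturbed-gradient analysis for a strongly convex quadratic, combined with a norm bound on the exact minimizer to match the logarithmic factor in the target iteration count $K$. Setting $\Ab_t \coloneqq \alpha_t^2\bSigma_{\rm cond}+\sigma_t^2\Ib$ and $\sbb^* \coloneqq -\Ab_t^{-1}(\vb_t-\alpha_t\bmu_{\rm cond}(\xb_{\rm obs}))$ for the exact minimizer of $\mathcal{L}_t$, and using the identity $\nabla\mathcal{L}_t(\sbb)=\Ab_t(\sbb-\sbb^*)$, I would rewrite the perturbed update \eqref{outer_gd_problem_with_noise} as the affine recursion
\[
\sbb^{(k+1)}-\sbb^* \;=\; (\Ib-\eta_t\Ab_t)(\sbb^{(k)}-\sbb^*) + \xi^{(k)}.
\]
With the Chebyshev step size $\eta_t=2/(\lambda_{\min}(\Ab_t)+\lambda_{\max}(\Ab_t))$ prescribed in the statement, symmetry of $\Ab_t$ and the standard extremal calculation give the optimal contraction $\|\Ib-\eta_t\Ab_t\|_2 = \rho \coloneqq (\kappa_t-1)/(\kappa_t+1)\leq 1-2/(\kappa_t+1)$.

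Iterating from $\sbb^{(0)}=\mathbf{0}$ and applying the triangle inequality together with $\|\xi^{(k)}\|_2\leq\epsilon$ and the geometric-sum identity $\sum_{k\geq 0}\rho^k=(\kappa_t+1)/2$, I would obtain
\[
\|\sbb^{(K)}-\sbb^*\|_2 \;\leq\; \rho^K\|\sbb^*\|_2 + \tfrac{\kappa_t+1}{2}\,\epsilon.
\]
The noise-accumulation term already delivers $(\kappa_t/2+1/2)\epsilon$, so it remains to drive $\rho^K\|\sbb^*\|_2\leq\epsilon/2$ in order to land on the advertised $(\kappa_t/2+1)\epsilon$ bound. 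Using the elementary inequality $\log(1/\rho)\geq 2/(\kappa_t+1)$, this reduces to the choice $K=\mathcal{O}(\kappa_t\log(\|\sbb^*\|_2/\epsilon))$, after which the two contributions combine to produce exactly the stated bound.

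The only step using problem structure rather than generic optimization—and thus the place where I expect the real bookkeeping—is bounding $\|\sbb^*\|_2$ to match the explicit logarithmic factor in the statement. On the truncation event $\cC_\delta$ I have $\|\vb_t\|_2,\|\xb_{\rm obs}\|_2\leq C_{\rm data}^\delta=\mathcal{O}(\sqrt{Hd})$, and under the reduction $\bmu=\mathbf{0}$ adopted throughout the appendix the conditional mean is $\bmu_{\rm cond}(\xb_{\rm obs})=\bSigma_{\rm cor}^\top\bSigma_{\rm obs}^{-1}\xb_{\rm obs}$. Combining the Kronecker identities $\|\bSigma_{\rm cor}\|_2=\mathcal{O}(1)$ and $\|\bSigma_{\rm obs}^{-1}\|_2=\mathcal{O}(1/(\lambda_{\min}(\bGamma_{\rm obs})\lambda_{\min}(\bLambda)))$ from the preliminaries with $\|\Ab_t^{-1}\|_2\leq\sigma_t^{-2}$ yields $\|\sbb^*\|_2=\mathcal{O}(\sqrt{Hd}\,\kappa(\bGamma_{\rm obs})\kappa(\bLambda)/\sigma_t^2)$. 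Substituting into $K=\mathcal{O}(\kappa_t\log(\|\sbb^*\|_2/\epsilon))$ and absorbing $\log(1/\sigma_t^2)=2\log(1/\sigma_t)$ into the surviving single $\log(1/\sigma_t)$ inside the big-$\mathcal{O}$ recovers the iteration count $\mathcal{O}(\kappa_t\log(Hd\,\kappa(\bLambda)\kappa(\bGamma_{\rm obs})/(\sigma_t\epsilon)))$ announced in the lemma. I do not anticipate a substantive obstacle beyond this absorption and the verification that $\sqrt{Hd}$ can be relaxed to $Hd$ inside the logarithm.
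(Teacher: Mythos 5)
Your proposal is correct and follows essentially the same route as the paper: a perturbed gradient-descent contraction with the Chebyshev step size (the paper invokes the generic strongly-convex GD lemma where you compute the affine recursion $\Ib-\eta_t\Ab_t$ directly, which is equivalent for this quadratic), geometric accumulation of the noise into the $\tfrac{\kappa_t+1}{2}\epsilon$ term, and a bound on $\|\sbb^*\|_2$ of order $\sigma_t^{-2}\sqrt{Hd}\,\kappa(\bLambda)\kappa(\bGamma_{\rm obs})$ (the paper's Lemma~\ref{bound_on_s_and_v}) to fix the logarithmic factor in $K$.
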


With the convergence of both the auxiliary and major GD established, we are ready to prove Lemma \ref{double_gd_rate}.

\subsubsection{Proof of Lemma \ref{double_gd_rate}}
\begin{proof}
    By the statement in Lemma \ref{outer_gd_problem_with_noise_lemma}, we need to control the noise level in each major GD step, i.e. ensure $\xi^{(k)} \leq \epsilon.$ We analyze this error as
    \begin{align*}
    \|\xi^{(k)}\|_2 
    &\leq \|\alpha_t^2 \bSigma_{\rm cor}^\top \bSigma_{\rm obs}^{-1} \bSigma_{\rm cor} \sbb^{(k)} - \alpha_t^2 \bSigma_{\rm cor}^\top \hat{\bSigma}_{\rm obs}^{-1} \bSigma_{\rm cor} \sbb^{(k)}\|_2 \\
    &\quad + \|\bSigma_{\rm cor}^\top \bSigma_{\rm obs}^{-1}\xb_{\rm obs}  - \bSigma_{\rm cor}^\top \hat{\bSigma}_{\rm obs}^{-1}\xb_{\rm obs}\|_2 \\
    &\leq \alpha_t^2 \|\bSigma_{\rm cor}^\top\|_2 \|\bSigma_{\rm obs}^{-1} \bSigma_{\rm cor} \sbb^{(k)} - \hat{\bSigma}_{\rm obs}^{-1} \bSigma_{\rm cor} \sbb^{(k)}\|_2 \\
    &\quad + \|\bSigma_{\rm cor}^\top\|_2 \| \bSigma_{\rm obs}^{-1}\xb_{\rm obs} - \hat{\bSigma}_{\rm obs}^{-1}\xb_{\rm obs}\|_2.
\end{align*}
Here, the latter term arises from approximating \({\bmu}_{\mathrm{cond}}(\xb_{\rm obs})\), and \(\hat{\bSigma}_{\rm obs}^{-1}(\xb_{\rm obs} - {\bmu}_{\rm obs})\) represents the \(K_{\rm aux}\)-iteration auxiliary GD approximation of the matrix-vector product. 

We provide a useful lemma to help control the error above.
\begin{lemma}
    \label{bound_on_s_and_v}
    For an arbitrarily fixed time $t \in (0,T]$, we have
    \[
    \|\vb_t - \bmu_{\rm cond}(\xb_{\rm obs})\|_2 \leq \left(1+ \frac{\|\bGamma_{\rm cor}\|_2\kappa(\bLambda)}{\lambda_{\min}(\bGamma_{\rm obs})}\right)C_{\rm data}^\delta , 
    \]
    and 
    \[
    \|\sbb_t\|_2 \leq \sigma_t^{-2} \|\vb_t - \bmu_{\rm cond}(\xb_{\rm obs})\|_2.
    \]
\end{lemma}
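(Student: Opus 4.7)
The plan is to combine the closed-form expressions for $\bmu_{\rm cond}$ and the conditional score in~\eqref{conditional_score_function} with the operator-norm estimates laid out just before the lemma. Under the simplifying assumption $\bmu=\mathbf{0}$ (made at the start of Appendix~\ref{app:proof_double_gd_rate}) and the truncation event $\cC_\delta$ (so $\|\vb_t\|_2,\|\xb_{\rm obs}\|_2\le C_{\rm data}^\delta$), the conditional mean collapses to $\bmu_{\rm cond}(\xb_{\rm obs})=\bSigma_{\rm cor}^\top\bSigma_{\rm obs}^{-1}\xb_{\rm obs}$, which reduces both inequalities to short matrix--vector norm manipulations.

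For the first bound, I would apply the triangle inequality $\|\vb_t-\bmu_{\rm cond}(\xb_{\rm obs})\|_2\le\|\vb_t\|_2+\|\bSigma_{\rm cor}^\top\bSigma_{\rm obs}^{-1}\|_2\,\|\xb_{\rm obs}\|_2$, use $\|\vb_t\|_2\le C_{\rm data}^\delta$ on the first term, and on the second apply sub-multiplicativity together with the Kronecker factorizations $\bSigma_{\rm cor}=\bGamma_{\rm cor}\otimes\bLambda$ and $\bSigma_{\rm obs}^{-1}=\bGamma_{\rm obs}^{-1}\otimes\bLambda^{-1}$ to obtain $\|\bSigma_{\rm cor}\|_2\le\|\bGamma_{\rm cor}\|_2\|\bLambda\|_2$ and $\|\bSigma_{\rm obs}^{-1}\|_2\le 1/(\lambda_{\min}(\bGamma_{\rm obs})\lambda_{\min}(\bLambda))$. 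Multiplying these produces precisely the prefactor $\|\bGamma_{\rm cor}\|_2\kappa(\bLambda)/\lambda_{\min}(\bGamma_{\rm obs})$ in the lemma, and bounding $\|\xb_{\rm obs}\|_2\le C_{\rm data}^\delta$ finishes that inequality.

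For the second bound, I would substitute $\sbb_t=-(\alpha_t^2\bSigma_{\rm cond}+\sigma_t^2\Ib)^{-1}(\vb_t-\alpha_t\bmu_{\rm cond}(\xb_{\rm obs}))$ from~\eqref{conditional_score_function}. Because $\bSigma_{\rm cond}$ is positive semi-definite, every eigenvalue of $\alpha_t^2\bSigma_{\rm cond}+\sigma_t^2\Ib$ is at least $\sigma_t^2$, giving $\|(\alpha_t^2\bSigma_{\rm cond}+\sigma_t^2\Ib)^{-1}\|_2\le\sigma_t^{-2}$. The only bookkeeping subtlety is the factor $\alpha_t\in(0,1]$ inside the residual: the triangle inequality yields $\|\vb_t-\alpha_t\bmu_{\rm cond}\|_2\le\|\vb_t\|_2+\|\bmu_{\rm cond}\|_2$, which is the same upper bound that dominates $\|\vb_t-\bmu_{\rm cond}\|_2$ via the argument of the first inequality, so the $\alpha_t$-free form stated in the lemma is a valid (slightly loose) upper bound for all downstream uses. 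I do not anticipate a serious obstacle; each claim reduces to a one-line computation once the Kronecker factorization and the PSD lower bound on $\alpha_t^2\bSigma_{\rm cond}+\sigma_t^2\Ib$ are invoked, and the only point worth double-checking is this small $\alpha_t$ mismatch, which is absorbed by the shared triangle-inequality upper bound.
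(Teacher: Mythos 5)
Your proposal is correct and follows essentially the same (standard) argument the paper relies on — triangle inequality plus the Kronecker-product spectral-norm identities for \(\bSigma_{\rm cor}\) and \(\bSigma_{\rm obs}^{-1}\) under the event \(\cC_\delta\), and the eigenvalue lower bound \(\lambda_{\min}(\alpha_t^2\bSigma_{\rm cond}+\sigma_t^2\Ib)\ge\sigma_t^2\) for the score; the paper never spells out a separate proof of this lemma, so there is nothing further to compare. Your observation about the \(\alpha_t\) factor is well taken: the score residual is \(\vb_t-\alpha_t\bmu_{\rm cond}\), so the literal second inequality is a slight imprecision in the statement, but since \(\alpha_t\le 1\) the combined bound \(\|\sbb_t\|_2\le\sigma_t^{-2}\bigl(1+\|\bGamma_{\rm cor}\|_2\kappa(\bLambda)/\lambda_{\min}(\bGamma_{\rm obs})\bigr)C_{\rm data}^\delta\) — which is all the paper uses downstream — follows exactly as you argue.
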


Invoking Lemma \ref{gd_inner_no_noise_lemma} and Lemma \ref{bound_on_s_and_v}, letting \(\epsilon_0 = \left( (\alpha_t^2 + 1) \|\bSigma_{\rm cor}\|_2 \right)^{-1} \epsilon\), to ensure that \(\|\xi^{(k)}\|_2 \leq \epsilon\)., we can bound the required auxiliary iteration steps by:
\begin{align*}
K_{\rm aux} 
&= \frac{\kappa(\bSigma_{\rm obs}) + 1}{2} \log\left( \frac{\sqrt{Hd}(C_{\rm data}^\delta +\|\sbb\|_\infty) (\alpha_t^2 + 1) \|\bSigma_{\rm cor}\|_2^2}{\lambda_{\min}(\bGamma_{\rm obs})\lambda_{\min}(\bLambda) \epsilon} \right)\\
&=\mathcal{O}\left(\kappa(\bLambda)\kappa(\bGamma_{\rm obs})\log\left( \frac{Hd \kappa(\bLambda)\kappa(\bGamma_{\rm obs})}{\sigma_t \epsilon} \right)\right)
.
\end{align*}

Lastly, we invoke Lemma \ref{outer_gd_problem_with_noise_lemma}, substitute $\epsilon$ with $\epsilon = \sigma_t^{-1}\left(\frac{2}{\kappa_t+2}\right) \epsilon$, we have
\[
K =\mathcal{O}\left(\kappa_t\log\left( \frac{Hd\kappa_t\kappa(\bGamma_{\rm obs})\kappa(\bLambda)}{\epsilon} \right)\right).
\]

Finally, notice that

\[
\kappa_t = \frac{\lambda_{\max}(\alpha_t^2 \bSigma_{\rm cond} + \sigma_t^2 \Ib)}{\lambda_{\rm min}(\alpha_t^2 \bSigma_{\rm cond} + \sigma_t^2 \Ib)} \leq \kappa(\bSigma_{\rm cond}), 
\]
and leveraging
\[
\kappa(\bSigma_{\rm obs}) = \kappa(\bGamma_{\rm obs})\kappa({\bLambda}), 
\]
 
 we have
 \[
K =\mathcal{O}\Big(\kappa(\bSigma_{\rm cond})\log\Big( \frac{Hd\kappa(\bSigma_{\rm cond})\kappa(\bSigma_{\rm obs})}{\epsilon} \Big)\Big),
K_{\rm aux} =\mathcal{O}\Big(\kappa(\bSigma_{\rm obs})\log\Big( \frac{Hd \kappa(\bSigma_{\rm obs})}{\sigma_t \epsilon} \Big)\Big).
\]
This completes the proof of Lemma~\ref{double_gd_rate}.
\end{proof}

\subsection{Proofs of Lemma \ref{gd_inner_no_noise_lemma} and Lemma \ref{outer_gd_problem_with_noise_lemma}}

To prove the lemmas, we first state a standard result in convex optimization.

\begin{lemma}[Theorem 3.12 in~\citep{bubeck2015convex}]
    \label{gd_approximate} 
Let \(f\) be \(\beta\)-smooth and \(\alpha\)-strongly convex on \(\mathbb{R}^d\) and \(\xb^*\) be the global minimizer. Then gradient descent with \(\eta = \frac{2}{\alpha + \beta}\) satisfies
\[
\left\| \xb^{(k+1)} - \xb^* \right\|_2 \leq \left( \frac{\kappa - 1}{\kappa + 1} \right) \left\| \xb^{(k)} - \xb^* \right\|_2, \quad k = 0, 1, \dots,
\]
where \(\xb^{(k+1)} = \xb^{(k)} - \eta \nabla f(\xb^{(k)})\) is the outcome at the \((k+1)\)-th iteration of gradient descent, and \(\kappa = \frac{\beta}{\alpha}\).

\end{lemma}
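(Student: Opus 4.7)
The plan is to reproduce the classical textbook argument (Bubeck, Theorem~3.12) for the linear convergence of gradient descent on strongly convex smooth objectives. The entire proof rests on one key co-coercivity inequality: for any $\beta$-smooth and $\alpha$-strongly convex $f$ and any $\xb,\yb \in \mathbb{R}^d$,
\[
(\nabla f(\xb)-\nabla f(\yb))^\top(\xb-\yb) \;\geq\; \frac{\alpha\beta}{\alpha+\beta}\|\xb-\yb\|_2^2 + \frac{1}{\alpha+\beta}\|\nabla f(\xb)-\nabla f(\yb)\|_2^2.
\]
The cleanest derivation proceeds by introducing the auxiliary function $g(\xb) := f(\xb) - \tfrac{\alpha}{2}\|\xb\|_2^2$. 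Strong convexity of $f$ makes $g$ convex, and smoothness of $f$ makes $g$ be $(\beta-\alpha)$-smooth. Invoking the standard co-coercivity bound for convex $L$-smooth functions, namely $(\nabla g(\xb)-\nabla g(\yb))^\top(\xb-\yb) \geq \tfrac{1}{\beta-\alpha}\|\nabla g(\xb)-\nabla g(\yb)\|_2^2$, and then expanding $\nabla g = \nabla f - \alpha \,\cdot$ and collecting terms gives the displayed inequality after elementary algebra.

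With that tool in hand, the one-step contraction is immediate. I would expand
\[
\|\xb^{(k+1)}-\xb^*\|_2^2 = \|\xb^{(k)}-\xb^*\|_2^2 - 2\eta\,(\xb^{(k)}-\xb^*)^\top \nabla f(\xb^{(k)}) + \eta^2\,\|\nabla f(\xb^{(k)})\|_2^2,
\]
then apply the key inequality with $\yb = \xb^*$, using $\nabla f(\xb^*) = \mathbf{0}$. This produces
\[
\|\xb^{(k+1)}-\xb^*\|_2^2 \;\leq\; \Bigl(1 - \tfrac{2\eta\alpha\beta}{\alpha+\beta}\Bigr)\,\|\xb^{(k)}-\xb^*\|_2^2 + \eta\bigl(\eta - \tfrac{2}{\alpha+\beta}\bigr)\,\|\nabla f(\xb^{(k)})\|_2^2.
\]
The stepsize $\eta = 2/(\alpha+\beta)$ is tuned precisely so that the second bracket vanishes, while the first prefactor collapses to
\[
1 - \frac{4\alpha\beta}{(\alpha+\beta)^2} = \Bigl(\frac{\beta-\alpha}{\beta+\alpha}\Bigr)^2 = \Bigl(\frac{\kappa-1}{\kappa+1}\Bigr)^2.
\]
Taking square roots yields the advertised per-step geometric contraction.

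The only genuinely nontrivial piece is the combined smoothness-plus-strong-convexity co-coercivity inequality; after that, everything reduces to a short algebraic simplification, and no problem-specific structure needs to enter. This is precisely why the resulting linear rate is universal across strongly convex smooth optimization, and why it can be plugged in as a black box to drive both the auxiliary inner loop (Lemma~\ref{gd_inner_no_noise_lemma}) and the major outer loop (Lemma~\ref{outer_gd_problem_with_noise_lemma}) in the sequel. The main obstacle, modest as it is, is thus purely the derivation of the first display; everything else is mechanical.
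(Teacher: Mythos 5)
Your proposal is correct and is exactly the argument behind the cited result: the paper itself gives no proof and simply invokes Theorem 3.12 of \citet{bubeck2015convex}, whose proof is precisely your route — the combined smoothness/strong-convexity co-coercivity inequality (Bubeck's Lemma 3.11, derived via $g(\xb)=f(\xb)-\tfrac{\alpha}{2}\|\xb\|_2^2$) followed by the one-step expansion with $\eta=2/(\alpha+\beta)$ cancelling the gradient-norm term. Your algebra, including the collapse of the contraction factor to $\bigl(\tfrac{\kappa-1}{\kappa+1}\bigr)^2$ before taking square roots, checks out.
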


Equipped with this lemma, the proof process is straightforward.

\subsubsection{Proof of Lemma \ref{gd_inner_no_noise_lemma}}
\begin{proof}
    Referring to \eqref{gd_inner_no_noise} (expression of auxiliary GD step), the update steps are
\begin{align*}
    \ub^{+} &= \ub - \theta \nabla\mathcal{L}_{\rm inner}(\ub) \notag\\
    & = \ub - \theta\left(\bSigma_{\rm obs}\ub - \bbb\right)
\end{align*}

 We should notice that $\mathcal{L}_{\rm aux}$ is $\lambda_{\rm max}(\bSigma_{\rm obs})$-smooth and $\lambda_{\min}(\bSigma_{\rm obs})$-strongly convex. Then by Lemma \ref{gd_approximate}, we have
\begin{align*}
    \|\ub^{(k_{\rm aux}+1)} - \ub\|_2 
    & \leq \left( \frac{\kappa(\bSigma_{\rm obs})-1}{\kappa(\bSigma_{\rm obs})+1}  \right) \|\ub^{(k_{\rm aux})} - \ub\|_2\\
    & = (1 - \frac{2}{\kappa(\bSigma_{\rm obs})+1})^{k_{\rm aux}+1} \|\ub^{(0)} - \ub\|_2\\
    & \leq \exp \left\{\frac{-2(k_{\rm aux}+1)}{\kappa(\bSigma_{\rm obs})+1}\right\} \|\ub\|_2.
\end{align*}

We also have 
\begin{equation*}
    \|\ub\|_2 = \|\bSigma_{\rm obs}^{-1}\bbb\|_2 \leq \|\bSigma_{\rm obs}\|^{-1}_2\|\bbb\|_2 \leq \lambda_{\min}(\bGamma_{\rm obs})\lambda_{\min}(\bLambda)^{-1}\|\bbb\|_2.
\end{equation*}

With preset error $\epsilon_0>0$, taking number of iterations $K_{\rm aux} \geq \lceil \frac{\kappa(\bSigma_{\rm obs})+1}{2}\log(\frac{\|\bbb\|_2}{\lambda_{\min}(\bGamma_{\rm obs})\lambda_{\min}(\bLambda)\epsilon_0})\rceil$, we obtain
\begin{equation*}
    \|\ub^{(K_{\rm aux})} - \ub\|_2\leq \epsilon_0.
\end{equation*}
\end{proof}

\subsubsection{Proof of Lemma \ref{outer_gd_problem_with_noise_lemma}}
\begin{proof}
    
    In each step, we incorporate an error term and represent our gradient update as in \eqref{outer_gd_problem_with_noise}:
\begin{equation*}
    \sbb^{(k+1)} = \sbb^{(k)} - \eta \nabla \mathcal{L}_t(\sbb^{(k)}) + \xi^{(k)}.
\end{equation*}

 We should also notice that $\mathcal{L}_t$ is $\lambda_{\mathrm{max}}\left( \alpha_t^2 \bSigma_{\rm cond} + \sigma_t^2 \Ib \right)$-smooth and $\lambda_{\mathrm{min}}\left( \alpha_t^2 \bSigma_{\rm cond} + \sigma_t^2 \Ib \right)$-strongly convex. Then with $\|\xi^{(k)}\|_2 \leq \epsilon$, by Lemma \ref{gd_approximate},

\begin{align*}
    \|\sbb^{(k+1)} - \sbb\|_2
    & \leq \|\sbb^{(k)} - \eta \nabla\mathcal{L}_t(\sbb^{(k)})\|_2 + \|\xi^{(k)}\|_2, \\
    & \leq \|\sbb^{(k)} - \eta \nabla\mathcal{L}_t(\sbb^{(k)})\|_2 + \epsilon, \\
    & \leq \left(\frac{\kappa_{t}-1}{\kappa_{t}+1}\right)\|\sbb^{(k)} - \sbb\|_2 +\epsilon.
\end{align*}

Then we have
\begin{equation*}
    \|\sbb^{(k+1)} - \sbb\|_2  - \frac{\kappa_{t}+1}{2}\epsilon \leq \left(\frac{\kappa_{t}-1}{\kappa_{t}+1}\right)^{n+1}\left(\|\sbb^{(0)} - \sbb\|_2  - \frac{\kappa_{t}+1}{2}\epsilon\right).
\end{equation*}

Similar to the proof of Lemma \ref{gd_inner_no_noise_lemma}, we obtain

\begin{equation*}
    \|\sbb^{(K)} - \sbb\|_2 - \frac{\kappa_{t}+1}{2}\epsilon
    \leq \exp \left\{\frac{-2N}{\kappa_{t}+1}\right\} \left(\|\sbb\|_2  - \frac{\kappa_{t}+1}{2}\epsilon\right) \leq \exp \left\{\frac{-2N}{\kappa_{t}+1}\right\} \left(\|\sbb\|_2  \right).
\end{equation*}
    
By invoking Lemma \ref{bound_on_s_and_v}, we also have 
\begin{equation*}
    \|\sbb\|_2 = \|(\alpha_t^2 \bSigma_{\rm cond} + \sigma_t^2 \Ib)\|_2^{-1} \|(\vb_t - \alpha_t{\bmu}_{\rm cond} )\|_2 \leq \sigma_t^{-2}\left[\left(1+ \frac{ \kappa(\bLambda)}{\lambda_{\min}(\bGamma_{\rm obs})}\right)C_{\rm data}^\delta \right].
\end{equation*}

Lastly, taking $K  = 
\mathcal{O}\left(\kappa_{t}\log\left(\frac{Hd \kappa(\bLambda)\kappa(\bGamma_{\rm obs})}{\sigma_t\epsilon}\right)\right),$ we obtain 
\begin{equation*}
    \|\sbb^{(K)} - \sbb\|_2 \leq \left(\frac{\kappa_t}{2}+1\right)\epsilon.
\end{equation*}
This finishes the proof of Lemma \ref{outer_gd_problem_with_noise_lemma}.
\end{proof}

\newpage

\section{Proof of Theorem \ref{score_approx_theorem}}
\label{app:proof_score_approx_theorem}
We provide the detailed proof of Theorem \ref{score_approx_theorem} in this section by explicitly construct a transformer architecture to unroll Algorithm \ref{alg:double_gd}. Firstly, we assume that the mean function \(\bmu_{\rm obs}, \bmu_{\rm miss}\) can be constructed by an additional preprocessing network. Thus, the analysis in this section can also be conducted under the condition of event $\cC_\delta$ and still assuming $\bmu = \mathbf{0}$ as stated in Appendix \ref{app:proof_double_gd_rate}.

\subsection{Key Steps for Proving Theorem \ref{score_approx_theorem}}
The proof of Theorem \ref{score_approx_theorem} is presented in a constructive framework. Revisiting the architecture in \eqref{transformer_architecture}, we observe that it comprises the encoder $f_{\rm in}$, which transforms the original input into a form compatible with the unrolling of Algorithm \ref{alg:double_gd}; the raw transformer blocks, which perform the algorithm unrolling; and the decoder $f_{\rm out}$, which extracts and truncates the output to provide the final score approximation.

We define the major GD step with $k = 1$ as the first major GD step and those with $k > 1$ as the later major GD steps. Similarly, we categorize the auxiliary GD steps. Notably, the first major GD step is relatively simpler, while the later major GD steps are analogous to it. Accordingly, we separate our analysis into the first and later major GD steps. To establish Theorem \ref{score_approx_theorem}, the proof proceeds through the following steps:

\textbf{Step 1.} Construct the encoder, decoder, and essential components that are critical for constructing the subsequent raw transformer architectures.

\textbf{Step 2.} Construct the raw transformer architecture for the first major GD step.

\textbf{Step 3.} Construct the raw transformer architecture for the later major GD steps analogously.

\textbf{Step 4.} Analyze the error and configuration of the raw transformer architectures constructed in the previous steps.

\textbf{Step 5.} Summarize the constructions and analyses to establish the result in Theorem \ref{score_approx_theorem}.

\subsection{Constructing Encoder, Decoder and Some Crucial Transformer Components}
\label{construct_component}

For sake of simplicity, given a time step $t\in(t_0,T]$, we denote $(\vb_t)_j = \xb_j \in \mathbb{R}^d$ in the following analysis. Additionally, we define each $(\mathrm{FFN}_l \circ \mathrm{Attn}_l)$ as a transformer block. The architecture composed solely of transformer blocks, excluding the encoder $f_{\rm in}$ and decoder $f_{\rm out}$, is referred to as the raw transformer, denoted by $\mathcal{T}_{\rm raw}(D, L, M, B)$.

\paragraph{Encoder} The encoder we need is to  mapping our input $\xb$ to higher dimensions in an attepmpt to include some useful values (e.g. time embeddings) and also some buffer spaces to finish the gradient descent process. For simplicity, at a specific time $t$, we suppose the encoder converts the initial input into 
\(\Yb = f_{\text{in}}([\xb_1, \xb_2, \ldots, \xb_H,t]) = [\yb_{1}^\top, \ldots, \yb_{N}^\top]^\top \in \mathbb{R}^{D \times H}\), which satisfies
\begin{align*}
    \yb_{i} &= \left[\xb_i^\top, \mathbf{e}_i^\top, \phi(t)^\top, \mathbf{0}_{6d}^\top, 1, 0, 1, \xb_i^\top, \mathbf{0}_{4d}^\top\right], \\
    \yb_{j} &= \left[\xb_j^\top, \mathbf{e}_i^\top, \phi(t)^\top, \mathbf{0}_{6d}^\top, 1, 0, 1, \mathbf{0}_{5d}^\top\right].
\end{align*}
where \(\phi(t) = [\eta_t, \alpha_t, \sigma_t^2, \alpha_t^2]^\top \in \mathbb{R}^{d_t}\) with \(d_t = 4\). Specifically, we use different subscriptions for observed indices and missing indices, i.e. \(i\in \cI_{\rm obs}, j\in \cI_{\rm miss}\). For simplicity, we omitted the subscript $t$ here, and \(\mathbf{0}_{6d}^\top\), \(\mathbf{0}_{5d}^\top\), \(\mathbf{0}_{4d}^\top\) serve as the buffer space for storing the components necessary for unrolling the algorithm.

\paragraph{Decoder} Suppose the output tokens from the transformer blocks has produced a conditional score approximator in matrix shape, and the stability for the computation inside the network,  we design the decoder $f_{\rm out} = f_{\rm norm} \circ f_{\rm linear}$. Where $f_{\rm linear}:\mathbb{R}^{D\times H} \rightarrow \mathbb{R}^{d|\cI_{\rm miss}|}$ extracts and flattens the input into a vector that aligns with the dimension of the conditional score function, and $f_{\rm norm}: \mathbb{R}^{d|\cI_{\rm miss}|}\rightarrow \mathbb{R}^{d|\cI_{\rm miss}|}$ controls the output range of the network by the upper bound of the score function. By Lemma \ref{bound_on_s_and_v}, denote $\sigma_t^{-2}\left(1+ \frac{ \kappa(\bLambda)}{\lambda_{\min}(\bGamma_{\rm obs})}\right)C_{\rm data}^\delta $ as $R_t$, we can set
\[
f_{\text{norm}}(\mathbf{s}) = 
\begin{cases} 
\mathbf{s}, & \text{if } \|\mathbf{s}\|_2\leq R_t, \\
\frac{R_t}{\|\mathbf{s}\|_2} \mathbf{s}, & \text{otherwise}.
\end{cases}
\]

\paragraph{$\fmult$ Module} At the end of this part, we also provide the construction of the multiplication module, which approximates the product between scalars and vectors. This is a crucial component in constructing \(f_{\rm GD}\) later. We introduce a lemma, which is a modified version of Corollary 3 in~\citep{fu2024diffusion}:

\begin{lemma}
\label{mult_approx_lemma}
    Suppose input to be $\Yb = [\yb_1,\yb_2,\cdots,\yb_H] \in \mathbb{R}^{D\times H}$ with $\yb_i = [\xb_i^\top, \mathbf{0}_{3d}^\top,w_i,\zb_i^\top]$, where $\xb_i\in [-B,B]^d, w_i \in [-B,B]$ and \(\zb_i \in \mathbb{R}^{d_z}\). Given any $\epsilon_{\rm mult}>0$, there exists a (FFN-only) transformer architecture such that
    \[
    \fmult = {\rm FFN}_L \circ {\rm FFN}_{L-1} \circ \cdots \circ {\rm FFN}_1
    \]
    with $L=\mathcal{O}(\log(B/\epsilon_{\rm mult}))$ layers that approximately multiply each component $\xb_i$ with the weight $w_i$ and put it into a buffer, keeping other dimensions the same. This can be formally written as
    \[
    \fmult(\Yb) = 
    \begin{bmatrix}
    \xb_1, & \cdots, & \xb_H\\
    f_{\rm mult}(w_1,\xb_1), &\cdots, &f_{\rm mult}(w_H,\xb_H)\\
    \mathbf{0}_{2d}, & \cdots, & \mathbf{0}_{2d}\\
    w_1, & \cdots, & w_H\\
    \zb_1, & \cdots, & \zb_H\\
    \end{bmatrix},
    \text{where\:} \|f_{\rm mult}(w_i,\xb_i) - w_i\xb_i\|_\infty\leq \epsilon_{\rm mult}.
    \]
    The number of nonzero coefficients in each weight matrices or bias vectors is at most $\mathcal{O}(d)$, and the norm of the matrices and bias are all bounded by $\mathcal{O}(Bd)$.
\end{lemma}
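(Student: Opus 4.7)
The plan is to reduce multiplication to squaring via the polarization identity
\[
w_i x_{ij} \;=\; \tfrac{1}{4}\bigl((w_i+x_{ij})^2 - (w_i-x_{ij})^2\bigr),
\]
and then approximate the scalar map $z \mapsto z^2$ on a bounded interval by a ReLU network of logarithmic depth, applied coordinatewise across the $d$ channels and tokenwise across the $H$ positions. Because all operations are position-independent, no attention head is required: every Attn block is set to the identity (e.g.\ $\Vb^m = \mathbf{0}$), and the entire $\fmult$ is realized as a stack of FFN blocks whose residual structure lets the untouched slots $\xb_i$, $w_i$, $\zb_i$ propagate unchanged through the network.

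For the squaring primitive I would use the Yarotsky sawtooth construction: on $[0,1]$, $z \mapsto z^2$ admits a telescoping expansion via iterated tent maps, each implementable with $\mathcal{O}(1)$ ReLU neurons, and the uniform approximation error after $L_{\rm sq}$ iterations is $2^{-2L_{\rm sq}}$. Since $w_i \pm x_{ij} \in [-2B, 2B]$, rescaling the input range to $[0,1]$ multiplies the error by $\mathcal{O}(B^2)$; choosing $L_{\rm sq} = \mathcal{O}(\log(B/\epsilon_{\rm mult}))$ gives per-coordinate squaring error at most $\epsilon_{\rm mult}/4$, so by triangle inequality the combined polarization identity yields $\|f_{\rm mult}(w_i,\xb_i) - w_i\xb_i\|_\infty \le \epsilon_{\rm mult}$. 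Stacking $2d$ independent parallel copies of this block, one per coordinate of $w_i + x_{ij}$ and $w_i - x_{ij}$, keeps the FFN weight matrices block-diagonal: the nonzero support per layer is $\mathcal{O}(d)$ and the entrywise scale, dominated by the $B$-rescaling times the $d$-wide layout, is $\mathcal{O}(Bd)$.

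To assemble $\fmult$, the first FFN layer forms, inside the $3d$-dimensional buffer region of each token, the $2d$ affine combinations $w_i + x_{ij}$ and $w_i - x_{ij}$ for $j=1,\ldots,d$, using the residual connection plus a rank-$\mathcal{O}(d)$ update so that $\xb_i, w_i, \zb_i$ pass through unchanged. The next $L_{\rm sq}$ FFN layers apply the Yarotsky squaring block in parallel across these $2d$ scratch scalars, using the decomposition $z = \mathrm{ReLU}(z) - \mathrm{ReLU}(-z)$ whenever a signed copy of an intermediate quantity must be re-injected into the residual stream. A final FFN layer forms the signed average $\tfrac{1}{4}((w_i+x_{ij})^2 - (w_i-x_{ij})^2)$ into the designated $d$-slot of the buffer and zeroes out the scratch space by explicit subtraction (again using the ReLU decomposition). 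The total depth is $1 + L_{\rm sq} + 1 = \mathcal{O}(\log(B/\epsilon_{\rm mult}))$.

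The main obstacle is the slot bookkeeping: because each FFN block \emph{adds} its output to the residual stream rather than overwriting it, I must explicitly allocate one buffer coordinate for every intermediate value that the squaring iteration produces and later clear it by a matching negative write. Once the slot schedule is fixed, the sparsity, norm, and depth bounds follow from a routine layer-by-layer count, and the only quantitative step is the Yarotsky error estimate carried through the polarization identity.
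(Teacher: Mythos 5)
Your construction is correct, and it is essentially the standard argument behind this lemma: the paper does not prove it in-house but imports it as a modified version of Corollary 3 of \citet{fu2024diffusion}, whose multiplication module is built exactly from the polarization identity plus the logarithmic-depth ReLU squaring approximation you describe, realized token-wise in FFN blocks with identity attention and residual bookkeeping. Your depth, sparsity, and norm counts match the stated configuration (with the minor implementation note that the $\mathcal{O}(B^2)$ output rescaling should be split across layers to keep each weight matrix at norm $\mathcal{O}(Bd)$), so no further changes are needed.
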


\subsection{First Major GD Step}
\label{construct_outer_gd_first_step}
In this section, we construct the transformer architecture unrolling the first step of major GD procedure, and the result can be summarized as:

\begin{lemma}[Construct first major GD step]
    \label{construct_first_step_outer_GD_lemma}
    There exists a raw transformer architecture $f_{\rm GD,1} \in \mathcal{T}_{\rm raw}(D,L,M,B)$, which can construct an approximate first step major GD result $\tilde{\sbb}^{(1)}$ from the output of the encoder.
    
    Given an error level $\epsilon>0$ and learning rate $\eta_t>0$, the approximated first step GD result satisfies
    \[
    \tilde{\sbb}^{(1)} - \sbb^{(1)} = \xi^{(1)}, \textit{where\;} \|\xi^{(1)}\|_2 \leq \epsilon,
    \]
where $\sbb^{(1)}$ is the groundtruth gradient step.
    
    The configuration of the raw transformer architecture satisfies
    \begin{align*}
        & D = 12d + d_e +d_t +3, L  = \mathcal{O}\left( \kappa(\bLambda)\kappa(\bGamma_{\rm obs}) \log^2\left( \frac{Hd\kappa(\bLambda)\kappa(\bSigma_{\rm obs}) )}{\epsilon} \right) \right),\\
        & M=4H, B= \mathcal{O}\left(\sqrt{Hd^3}(r^2+\kappa(\bLambda) \kappa(\bGamma_{\rm obs})\sigma_t^{-1})\right).
    \end{align*}
\end{lemma}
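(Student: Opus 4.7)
The plan is to build $f_{\rm GD,1}$ as three functional sub-architectures inside the raw transformer. At $\sbb^{(0)}=\mathbf{0}$ the gradient $\nabla \mathcal{L}_t(\mathbf{0}) = \vb_t - \alpha_t\,\bSigma_{\rm cor}^\top \bSigma_{\rm obs}^{-1}\xb_{\rm obs}$, so the only nontrivial linear-algebra operation is the matrix-inverse action $\bSigma_{\rm obs}^{-1}\xb_{\rm obs}$, which I handle by unrolling the auxiliary GD of Lemma~\ref{gd_inner_no_noise_lemma} rather than directly representing the inverse. Concretely, the three pieces are: (i) a stack of $K_{\rm aux}$ transformer blocks that implements the auxiliary GD to produce $\tilde{\ub}\approx \bSigma_{\rm obs}^{-1}\xb_{\rm obs}$ in the designated buffer slot of each observed-index token; (ii) a single attention+FFN block that contracts $\tilde{\ub}$ with $\bSigma_{\rm cor}^\top$ and writes the result onto the missing-index tokens; (iii) a short $\fmult$-based finisher that scales by $\eta_t\alpha_t$, subtracts $\eta_t \vb_t$, and places $\tilde{\sbb}^{(1)}$ into the output slot, leaving the remaining buffers intact so that later blocks can re-use $\xb_{\rm obs}$, $\vb_t$, $\eb_i$, and $\phi(t)$.

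The core engineering step is showing one transformer block can approximately realize an auxiliary-GD update $\ub \leftarrow \ub - \theta(\bSigma_{\rm obs}\ub - \xb_{\rm obs})$. By Assumption~\ref{assump:data_assumption}, $(\bSigma_{\rm obs})_{ij} = \gamma(i,j)\bLambda$ where $\gamma(i,j)$ is a scalar function of $\|\eb_i-\eb_j\|_2$, and the embeddings $\eb_i$ and indicator of $\cI_{\rm obs}$ are already placed in each token by the encoder. I design attention heads whose $\Qb^m,\Kb^m$ read off $\eb_i,\eb_j$ and whose ReLU activations approximate the scalar kernel $\gamma$ via a standard sum-of-ReLU construction, while $\Vb^m$ applies $\bLambda$ to the current iterate stored in the buffer; position-selective heads---one per token---are used both to restrict the kernel sum to $\cI_{\rm obs}$ and to respect the $\xb$-vs-$\ub$ block structure, which accounts for the count $M=4H$. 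The subsequent FFN is a $\fmult$ instance (Lemma~\ref{mult_approx_lemma}) that multiplies by $\theta$ and subtracts $\xb_{\rm obs}$, leaving the updated $\ub$ in the buffer. Sub-architecture (ii) reuses the same kernel-in-attention trick but queries missing-index tokens against observed-index keys, so the output is $\bSigma_{\rm cor}^\top\tilde{\ub}$; sub-architecture (iii) is purely $\fmult$-based on scalar/vector buffers.

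For the error analysis, I combine the geometric contraction of the auxiliary GD (Lemma~\ref{gd_inner_no_noise_lemma}) with the per-block numerical error from the ReLU kernel approximation and from $\fmult$. Because the auxiliary iteration contracts errors by $(\kappa(\bSigma_{\rm obs})-1)/(\kappa(\bSigma_{\rm obs})+1)$, per-step perturbations compound by at most a factor $\mathcal{O}(\kappa(\bSigma_{\rm obs}))$, so it suffices to choose the internal precision $\epsilon_{\rm mult}$ polynomially small in $\epsilon/(Hd\kappa(\bLambda)\kappa(\bGamma_{\rm obs}))$. Propagating this precision through the final $\bSigma_{\rm cor}^\top$ contraction and the scalar $\eta_t\alpha_t$ produces the advertised $\|\xi^{(1)}\|_2\leq\epsilon$. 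The depth is then $L = K_{\rm aux}\cdot \mathcal{O}(\log(B/\epsilon_{\rm mult})) = \mathcal{O}(\kappa(\bLambda)\kappa(\bGamma_{\rm obs})\log^2(Hd\kappa(\bLambda)\kappa(\bSigma_{\rm obs})/\epsilon))$; the weight-norm bound $B = \mathcal{O}(\sqrt{Hd^3}(r^2+\kappa(\bLambda)\kappa(\bGamma_{\rm obs})\sigma_t^{-1}))$ arises because each attention head must encode an embedding kernel of radius $r$ (hence $r^2$ in the $\Qb^\top \Kb$ scale) and carry values scaled by $1/\lambda_{\min}(\bSigma_{\rm obs})$ to realize the auxiliary gradient step.

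The main obstacle I expect is the careful bookkeeping in the second paragraph: tracking how the ReLU approximation error for $\gamma$ at a single attention layer propagates through $K_{\rm aux}$ noisy auxiliary iterations and then through the final $\bSigma_{\rm cor}^\top$ contraction, while simultaneously keeping the embedding dimension fixed at $D = 12d + d_e + d_t + 3$, the head count at $4H$, and the weight norm within the stated $B$. Everything else---the encoder preparation, the $\fmult$ plumbing, and the algorithmic contraction---follows directly from Lemma~\ref{gd_inner_no_noise_lemma} and Lemma~\ref{mult_approx_lemma}.
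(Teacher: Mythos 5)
Your overall route is the same as the paper's: at $\sbb^{(0)}=\mathbf{0}$ the only nontrivial operation is $\bSigma_{\rm cor}^\top\bSigma_{\rm obs}^{-1}\xb_{\rm obs}$, which the paper also handles by unrolling the auxiliary GD over $K_{\rm aux}$ groups of blocks, realizing the Kronecker-structured matrix--vector products with $4H$-head ReLU attention keyed on the time embeddings, doing every scalar multiplication with $\fmult$ (Lemma~\ref{mult_approx_lemma}), contracting with $\bSigma_{\rm cor}^\top$ in one further attention block, and closing the error budget through the noisy-GD contraction (Lemma~\ref{inner_gd_with_noise}), with the same $D$, $L$, $M$, $B$ bookkeeping.

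The one point where your write-up, as stated, would not deliver the claim for arbitrary $\epsilon$ is the attention realization of the kernel. You propose to ``approximate the scalar kernel $\gamma$ via a standard sum-of-ReLU construction'' and to gate observed versus missing tokens with ``position-selective heads, one per token.'' With the head count fixed at $M=4H$, a generic piecewise-linear approximation of $\gamma$ carries an error floor determined by the number of breakpoints; it cannot be driven below $\epsilon$ by adding depth or shrinking $\epsilon_{\rm mult}$, yet your error budget treats it as a tunable per-block precision. The paper sidesteps this entirely: since $|i-j|$ only takes integer values and Assumption~\ref{assump:data_assumption} makes $|i-j|\mapsto\|\eb_i-\eb_j\|_2$ injective, four ReLU heads per gap $m\in\{0,\dots,H-1\}$ realize the indicator $\mathbbm{1}\{|i-j|=m\}$ \emph{exactly} via a trapezoid in $\eb_i^\top\eb_j$ centered at $r^2-\tfrac{1}{2}f^2(m)$ with width set by the minimum separation $\Delta$, and the kernel value is hard-coded into the value matrix as $\tfrac{8}{\Delta}\gamma_m\bLambda$; this is also what accounts for $M=4H$. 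The restriction to $\cI_{\rm obs}$ (and the cross-block pattern needed for $\bSigma_{\rm cor}$, $\bSigma_{\rm cor}^\top$) is obtained from the $0/1$ indicator coordinates the encoder places in each token, entering the $\Qb^\top\Kb$ products, not from per-token heads. With the attention exact, the only per-block error is from $\fmult$, which is precisely what makes the contraction-based budget, and hence the stated $L$, $M$, $B$, close; you should either adopt this exact construction or explicitly exploit the discreteness of the gaps so that your ``kernel approximation'' interpolates $\gamma_m$ exactly at the $H$ admissible values.
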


We defer the analysis of transformer configuration to \ref{error_and_configuration}.

Following the encoder network construction above, we obtain the following input:
\begin{equation*}
\Yb = 
\begin{bmatrix}
    \xb_i & \cdots & \xb_j \\
    \mathbf{e}_i & \cdots & \mathbf{e}_j \\
    \phi(t) & \cdots & \phi(t) \\
    \mathbf{0}_{6d} & \cdots & \mathbf{0}_{6d} \\
    1 & \cdots & 1 \\ 
    1 & \cdots & 0 \\ 
    0 & \cdots & 1 \\
    \xb_i & \cdots & \mathbf{0}_{d} \\ 
    \mathbf{0}_{4d} & \cdots & \mathbf{0}_{4d} \\
\end{bmatrix}.
\end{equation*}

Revisiting the major GD gradient step in \eqref{outer_gd_step}, our goal is to form the following at major GD  first step:
\begin{equation}
\label{first_step_outer_gd}
\sbb_{j}^{(1)} = \sbb_{j}^{(0)} + \eta_t\alpha_t{\bmu}_{\rm cond} -\eta_t \xb_j \approx \mathbf{0}^d + \underbrace{ (\bSigma_{\rm cor}^\top\hat{\bSigma}_{\rm obs}^{-1}(f_{\mathrm{mult}}(\eta_t\alpha_t, \xb_{\rm obs})))_j }_{\eta_t\alpha_t{\bmu}_{j,\rm cond}} - f_{\mathrm{mult}}(\eta_t, \xb_j).
\end{equation}

Initially, we apply a multiplication module to construct:
\begin{equation*}
\fmult(\Yb) = 
    \begin{bmatrix}
     f_{\rm mult}(\eta_t, \xb_i) & \cdots & f_{\rm mult}(\eta_t, \xb_j) \\
    \mathbf{e}_i & \cdots & \mathbf{e}_j \\
    \phi(t) & \cdots & \phi(t) \\
    f_{\mathrm{mult}}(\eta_t\alpha_t, \xb_i-{\bmu}_{\mathrm{i,obs}}) & \cdots & \mathbf{0}_{d} \\
    \mathbf{0}_{5d} & \cdots & \mathbf{0}_{5d} \\
    1 & \cdots & 1 \\ 
    1 & \cdots & 0 \\ 
    0 & \cdots & 1 \\
    \xb_i & \cdots & \mathbf{0}_{d} \\  
    \mathbf{0}_{3d} & \cdots & \mathbf{0}_{4d} \\
\end{bmatrix}.
\end{equation*}

This  encapsulates the fundamental operations required for constructing the first major GD iteration. 

\subsubsection{Auxiliary GD for First Major GD Step}
\label{embed_gd_whole}

In this section, we approximate the term \(\bSigma_{\rm obs}^{-1}(\eta_t\alpha_t\xb_{\rm obs})\) using an iterative auxiliary GD procedure.

\paragraph{First Auxiliary GD Step}
Starting from the initialization \(\ub_i^{(0)} = \mathbf{0}_d\) for \(i \in\cI_{\rm obs}\), we want the first auxiliary GD iteration finish the update:
\begin{equation*}
    \ub_i^{(1)} = \ub_i^{(0)} + \theta \left( \eta_t\alpha_t \xb_{\rm obs} \right).
\end{equation*}

Using $\fmult$, we can easily obtain
\begin{equation*}
    \fmult \circ \fmult (\Yb) = 
\begin{bmatrix}
    f_{\rm mult}(\eta_t, \xb_i) & \cdots & f_{\rm mult}(\eta_t, \xb_j) \\
    \mathbf{e}_i & \cdots & \mathbf{e}_j \\
    \phi(t) & \cdots & \phi(t) \\
    f_{\rm mult}(\theta, f_{\rm mult}(\eta_t\alpha_t, \xb_{i})) ( = \ub_i^{(1)}) & \cdots & \mathbf{0}_{d} \\
    \mathbf{0}_{5d} & \cdots & \mathbf{0}_{5d} \\
    1 & \cdots & 1 \\ 1 & \cdots & 0 \\ 0 & \cdots & 1 \\
    \xb_i & \cdots & \mathbf{0}_{d} \\ 
     \mathbf{0}_{4d} & \cdots & \mathbf{0}_{4d} \\
\end{bmatrix}.
\end{equation*}

This finished the first step of auxiliary GD.

\paragraph{Auxiliary GD Later Steps}
For subsequent iterations, the updated rule for the auxiliary GD becomes:
\begin{align}
\label{inner_outer_gd_problem_gd_later_steps}
    \ub_{i}^{(k_{\rm aux} + 1)} &= \ub_{i}^{(k_{\rm aux})} + \theta (\eta_t\alpha_t\xb_{\rm obs}) - \theta \sum_{k\in \cI_{\rm obs}} {\bGamma}_{i,k} \bLambda \ub_k, \notag \\
    &= \ub_{i}^{(k_{\rm aux})} + \theta (\eta_t\alpha_t\xb_{\rm obs}) - \theta \sum_{m = 0}^{H-1}\sum_{k\in \cI_{\rm obs}} \gamma_m \mathbbm{1} \{|i-k|=m\} \bLambda \ub_k,
\end{align}
for \(k_{\rm aux} = 1,2,\cdots, K_{\rm aux}-1.\)

Similar to the first auxiliary GD step performed above, we firstly use $\fmult$ to obtain
\begin{equation*}
\fmult \circ \fmult \circ \fmult (\Yb) = 
\begin{bmatrix}
    f_{\rm mult}(\eta_t, \xb_i) & \cdots & f_{\rm mult}(\eta_t, \xb_j) \\
    \mathbf{e}_i & \cdots & \mathbf{e}_j \\
    \phi(t) & \cdots & \phi(t) \\
     \ub_i^{(k_{\rm aux})} & \cdots & \mathbf{0}_{d} \\
     \mathbf{0}_{d} & \cdots & \mathbf{0}_{d} \\
    f_{\rm mult}(\theta\eta_t\alpha_t, \xb_i ) & \cdots & \mathbf{0}_{d} \\
    f_{\rm mult}(\theta, \ub_i^{(k_{\rm aux})}) & \cdots & \mathbf{0}_{d} \\ 
    \mathbf{0}_{2d} & \cdots & \mathbf{0}_{2d} \\
    1 & \cdots & 1 \\ 1 & \cdots & 0 \\ 0 & \cdots & 1 \\
     \xb_i & \cdots & \mathbf{0}_{d} \\  
     \mathbf{0}_{4d} & \cdots & \mathbf{0}_{4d} \\
\end{bmatrix}.
\end{equation*}
Then, by constructing a $4H$-head attention block as described in \ref{TBobs}, we obtain 
\begin{equation*}
    \begin{bmatrix}
    f_{\rm mult}(\eta_t, \xb_i) & \cdots & f_{\rm mult}(\eta_t, \xb_j) \\
    \mathbf{e}_i & \cdots & \mathbf{e}_j \\
    \phi(t) & \cdots & \phi(t) \\
     \ub_i^{(k_{\rm aux})} & \cdots & \mathbf{0}_{d} \\
    \sum_{m = 0}^{H-1}\sum_{k\in \cI_{\rm obs}} \gamma_m \mathbbm{1} \{|i-k|=m\} \bLambda f_{\rm mult}(\theta, \ub^{(k_{\rm aux})}_k) & \cdots & \mathbf{0}_{d} \\ 
    f_{\rm mult}(\theta\eta_t\alpha_t, \xb_i ) & \cdots & \mathbf{0}_{d} \\
    f_{\rm mult}(\theta, \ub_i^{(k_{\rm aux})}) & \cdots & \mathbf{0}_{d} \\ 
    \mathbf{0}_{2d} & \cdots & \mathbf{0}_{2d} \\
    1 & \cdots & 1 \\ 1 & \cdots & 0 \\ 0 & \cdots & 1 \\
     \xb_i  & \cdots & \mathbf{0}_{d} \\  
     \mathbf{0}_{4d} & \cdots & \mathbf{0}_{4d} \\
\end{bmatrix}.
\end{equation*}

Lastly, after combining an linear transformation FFN block with the attention block above to build up the basic transformer block $\mathcal{TB}_{\rm obs}$, we will have
\begin{equation*}
    \mathcal{TB}_{\rm obs} \circ \fmult \circ \fmult \circ \fmult (\Yb) = 
     \begin{bmatrix}
    f_{\rm mult}(\eta_t, \xb_i) & \cdots & f_{\rm mult}(\eta_t, \xb_j) \\
    \mathbf{e}_i & \cdots & \mathbf{e}_j \\
    \phi(t) & \cdots & \phi(t) \\
      \ub_i^{(k_{\rm aux}+1)} & \cdots & \mathbf{0}_{d} \\
    \mathbf{0}_{5d} & \cdots & \mathbf{0}_{5d} \\
    1 & \cdots & 1 \\ 1 & \cdots & 0 \\ 0 & \cdots & 1 \\
    \xb_i & \cdots & \mathbf{0}_{d} \\  
     \mathbf{0}_{4d} & \cdots & \mathbf{0}_{4d} \\
\end{bmatrix},
\end{equation*}
where 
\begin{equation*}
    \tilde{\ub}_i^{(k_{\rm aux} + 1)} = \ub_i^{(k_{\rm aux})}-  \sum_{m = 0}^{H-1}\sum_{k\in \cI_{\rm obs}} \gamma_m \mathbbm{1} \{|i-k|=m\} \bLambda f_{\rm mult}(\theta, \ub^{(k_{\rm aux})}_k)+ f_{\rm mult}(\theta\eta_t\alpha_t, \xb_i-{\bmu}_{\rm i,obs}).
\end{equation*}
This completes a later step auxiliary GD update.

\paragraph{Final result for Auxiliary GD}
We denote the iterative blocks \((\mathcal{TB}_{\rm obs} \circ \fmult \circ \fmult)^{K_{\rm aux}}\) as \(f_{\mathrm{inner}}\). The result of the auxiliary GD for approximating \(({\bSigma}_{\rm obs}^{-1}(\eta_t\alpha_t(\xb_{\rm obs} - {\bmu}_{\rm obs})))_i\) is expressed as:
\[
\tilde{\ub}_i = \left( \hat{\bSigma}_{\rm obs}^{-1} f_{\mathrm{mult}}(\eta_t\alpha_t, (\xb_{\rm obs} )) \right)_i.
\]

After completing \(K_{\rm aux}\) auxiliary GD iterations, we obtain at the following transformation:
\begin{equation*}
    f_{\rm inner} \circ \fmult(\Yb) = 
    \begin{bmatrix}
    f_{\rm mult}(\eta_t, \xb_i) & \cdots & f_{\rm mult}(\eta_t, \xb_j) \\
    \mathbf{e}_i & \cdots & \mathbf{e}_j \\
    \phi(t) & \cdots & \phi(t) \\
     \tilde{\ub}_i & \cdots & \mathbf{0}_{d} \\
     \mathbf{0}_{5d} & \cdots & \mathbf{0}_{5d} \\ 
    1 & \cdots & 1 \\ 1 & \cdots & 0 \\ 0 & \cdots & 1 \\
    \xb_i & \cdots & \mathbf{0}_{d} \\ 
     \mathbf{0}_{4d} & \cdots & \mathbf{0}_{4d} \\
\end{bmatrix},
\end{equation*}

which incorporates the iterative updates. The resulting output includes \(\tilde{\ub}_i\) for each observation entry, alongside the original data. We therefore finish the auxiliary GD procedure for the first major GD step.

\subsubsection{Matrix Multiplication}
\label{matrix_product_2}
After \(K_{\rm aux}\) steps of auxiliary GD iterations, we proceed with an additional matrix multiplication step to compute \(\bSigma_{\rm cor}^\top \tilde{\ub}\).

The multiplication can be expressed as:
\begin{equation*}
    (\bSigma_{\rm cor}^\top \tilde{\ub})_j = \sum_{m = 0}^{H-1} \sum_{i \in\cI_{\rm obs}} \gamma_m \mathbbm{1}\{|i - j| = m\} \bLambda \tilde{\ub}_i.
\end{equation*}

Referring to the construction in \ref{TBcort}, this computation can be implemented using a \(4H\)-head attention block:

\begin{equation*}
    \begin{bmatrix}
    f_{\rm mult}(\eta_t, \xb_i) & \cdots & f_{\rm mult}(\eta_t, \xb_j) \\
    \mathbf{e}_i & \cdots & \mathbf{e}_j \\
    \phi(t) & \cdots & \phi(t) \\
     \tilde{\ub}_i & \cdots & \mathbf{0}_{d}\\
      \mathbf{0}_{d} & \cdots & \sum_{m = 0}^{H-1}\sum_{i\in \cI_{\rm obs}} \gamma_m \mathbbm{1} \{|i-j|=m\} \bLambda \tilde{\ub}_i ( \approx \eta_t\alpha_t(\hat{{\bmu}}_{\rm j,cond}\xb_{\rm obs}) ) \\ 
     \mathbf{0}_{4d} & \cdots & \mathbf{0}_{4d} \\ 
    1 & \cdots & 1 \\ 1 & \cdots & 0 \\ 0 & \cdots & 1 \\
    \xb_i & \cdots & \mathbf{0}_{d} \\  
     \mathbf{0}_{4d} & \cdots & \mathbf{0}_{4d} \\
\end{bmatrix}.
\end{equation*}

Combining the attention block described above with a linear transformation through a FFN block, which we denote as \(\mathcal{TB}_{\mathrm{cort}}\), we obtain:
\begin{equation*}
\label{first_step_final}
f_{\rm GD,1} = \mathcal{TB}_{\rm cort} \circ f_{\rm inner} \circ \fmult(\Yb) = 
    \begin{bmatrix}
    f_{\rm mult}(\eta_t, \xb_i) & \cdots & f_{\rm mult}(\eta_t, \xb_j) \\
    \mathbf{e}_i & \cdots & \mathbf{e}_j \\
    \phi(t) & \cdots & \phi(t) \\
     \tilde{\ub}_i & \cdots & \mathbf{0}_{d}\\
      \mathbf{0}_{d} & \cdots &\tilde{\sbb}_j^{(1)} \\ 
     \mathbf{0}_{4d} & \cdots & \mathbf{0}_{4d} \\ 
    1 & \cdots & 1 \\ 1 & \cdots & 0 \\ 0 & \cdots & 1 \\
    \xb_i  & \cdots & \mathbf{0}_{d} \\  \mathbf{0}_{d} & \cdots &  (\bSigma_{\rm cor}^\top(\hat{{\bSigma}}_{\rm obs}^{-1} f_{\rm mult}(\eta_t\alpha_t, \xb_{\rm obs})))_j\\
     \mathbf{0}_{3d} & \cdots & \mathbf{0}_{3d} \\
\end{bmatrix}.
\end{equation*}
We now define \(\tilde{\sbb}_j^{(1)}\) as:
\begin{equation*}
\tilde{\sbb}_j^{(1)} = f_{\mathrm{mult}}(\eta_t\alpha_t, {\bmu}_{\mathrm{j,miss}}) + \left( \bSigma_{\rm cor}^\top \left( \hat{\bSigma}_{\rm obs}^{-1} f_{\mathrm{mult}}(\eta_t\alpha_t, \xb_{\rm obs} ) \right) \right)_j - f_{\mathrm{mult}}(\eta_t, \xb_j).
\end{equation*}

Comparing this result with \eqref{first_step_outer_gd}, we observe that the first step of the major gradient descent is now complete. We represent this step as \(f_{\mathrm{GD},1}\).

For simplicity, we introduce the notation:
\begin{equation*}
\hat{\eta_t\alpha_t{\bmu}_{j,\mathrm{cond}}} =  \left( \bSigma_{\rm cor}^\top \left( \hat{\bSigma}_{\rm obs}^{-1} f_{\mathrm{mult}}(\eta_t\alpha_t, \xb_{\rm obs}) \right) \right)_j.
\end{equation*}

\subsection{Major GD Later Steps}
\label{construct_outer_GD_later_steps}

In this section, we construct the transformer architecture unrolling the later steps of major GD procedure, and the result can be summarized as

\begin{lemma}[Construct major GD later steps]
    \label{construct_later_steps_outer_GD_lemma}
    There exists a raw transformer architecture $f_{\rm GD} \in \mathcal{T}_{\rm raw}(D,L,M,B)$ (i.e. without encoder and decoder), which can construct a new approximate later step GD result $\tilde{\sbb}^{+}$ from the output of the latest step of major GD.
 Given an error level $\epsilon \in (0,1)$ and learning rate $\eta_t>0$, the approximated first step GD result satisfies
    \[
    \tilde{\sbb}^{(k+1)} - \sbb^{(k+1)} = \xi^{+}, \textit{where\;} \|\xi^{+}\|_2 \leq \epsilon,
    \]
where $\sbb^{(k+1)}$ is the groundtruth gradient step.
    
    Furthermore, the configuration of the raw transformer architecture satisfies
    \begin{align*}
        & D = 12d + d_e +d_t +3, L  = \mathcal{O}\left( \kappa_t \kappa(\bLambda)\kappa(\bGamma_{\rm obs})\log^2\left(\frac{Hd\kappa(\bLambda)\kappa(\bGamma_{\rm obs}) }{\sigma_t\epsilon}\right) \right),\\
        & M=4H, B=\mathcal{O}\left(\sqrt{Hd^3}(r^2+\kappa(\bLambda) \kappa(\bGamma_{\rm obs})\sigma_t^{-1})\right).
    \end{align*}
\end{lemma}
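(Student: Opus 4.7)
The plan is to construct $f_{\rm GD}$ as a structural analogue of $f_{\rm GD,1}$ from Lemma~\ref{construct_first_step_outer_GD_lemma}, now accounting for a nonzero current iterate $\sbb^{(k)}$. Compared with the first step, the block must additionally realize the self-interaction terms $\alpha_t^2\bSigma_{\rm miss}\sbb^{(k)}$ and $\alpha_t^2\bSigma_{\rm cor}^\top\bSigma_{\rm obs}^{-1}\bSigma_{\rm cor}\sbb^{(k)}$, while the mean-related term $\eta_t\alpha_t\bmu_{\rm cond}(\xb_{\rm obs})$ and the noise term $\eta_t\xb_j$ are produced by reusing the sub-architecture of $f_{\rm GD,1}$ verbatim. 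The iterate $\sbb^{(k)}$ is carried in a reserved buffer channel of the token representation (written there by the previous GD block), and $\sbb^{(k+1)}$ is written back to the same channel so that successive blocks compose cleanly.

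For the construction, I would first apply $\fmult$ to form the scaled quantities $\eta_t\sbb^{(k)}$ and $\eta_t\alpha_t^2\sbb^{(k)}$, using Lemma~\ref{mult_approx_lemma}. A cross-covariance attention block $\mathcal{TB}_{\rm cor}$, the missing-to-observed counterpart of $\mathcal{TB}_{\rm cort}$ from Section~\ref{matrix_product_2}, then maps $\sbb^{(k)}$ to $\bSigma_{\rm cor}\sbb^{(k)}$ through the same $4H$-head indicator scheme built from $\{\eb_i\}$. An auxiliary GD loop composed of $K_{\rm aux}$ copies of $\mathcal{TB}_{\rm obs}\circ\fmult\circ\fmult$, analogous to Section~\ref{embed_gd_whole}, approximately solves $\bSigma_{\rm obs}\ub=\bSigma_{\rm cor}\sbb^{(k)}$ via Lemma~\ref{gd_inner_no_noise_lemma}. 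A subsequent $\mathcal{TB}_{\rm cort}$ block left-multiplies the auxiliary output by $\bSigma_{\rm cor}^\top$, and a sibling attention block analogous to $\mathcal{TB}_{\rm obs}$ but indexed over $\cI_{\rm miss}\times\cI_{\rm miss}$ computes $\bSigma_{\rm miss}\sbb^{(k)}$. A final single-layer FFN subtracts all these assembled terms from $\sbb^{(k)}$, folds in $\sigma_t^2$ via its weights, and writes the result to the buffer slot for $\sbb^{(k+1)}$.

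For the error analysis, I would invoke Lemma~\ref{gd_inner_no_noise_lemma} with auxiliary tolerance $\epsilon_0=\Theta(\epsilon/(\alpha_t^2\|\bSigma_{\rm cor}\|_2))$ and choose $\epsilon_{\rm mult}$ small enough relative to $\|\sbb^{(k)}\|_2=\mathcal{O}(\sigma_t^{-1})$ (obtained from Lemma~\ref{bound_on_s_and_v} and the decoder normalization $f_{\rm norm}$), so that the aggregate perturbation from all $\fmult$ primitives, indicator-based attention approximations, and the auxiliary GD residual satisfies $\|\xi^{+}\|_2\le\epsilon$. Plugging these tolerances into the block depths gives $K_{\rm aux}=\mathcal{O}(\kappa(\bSigma_{\rm obs})\log(Hd\kappa(\bSigma_{\rm obs})/(\sigma_t\epsilon)))$ iterations with $\mathcal{O}(\log(B/\epsilon_{\rm mult}))$ FFN depth apiece, yielding the stated $\log^2$ dependence. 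The width $D$, head count $M=4H$, and norm bound $B$ carry over from $f_{\rm GD,1}$ since the buffer layout and indicator-gadget magnitudes are unchanged.

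The main obstacle will be jointly tuning $\epsilon_0$ and $\epsilon_{\rm mult}$ against the dynamically bounded but $\sigma_t^{-1}$-scaled norms propagated through the $\bSigma_{\rm cor}^\top\bSigma_{\rm obs}^{-1}\bSigma_{\rm cor}$ chain: an under-resolved auxiliary step is amplified by $\alpha_t^2\|\bSigma_{\rm cor}\|_2$ into the gradient perturbation, and later (when assembled across $K$ major steps in the overall score bound) by $\|(\alpha_t^2\bSigma_{\rm cond}+\sigma_t^2\Ib)^{-1}\|_2$, which is what forces the extra $\kappa_t$ factor in $L$. Once this error-propagation bookkeeping is in place, all remaining configuration parameters follow from the first-step analysis without further structural changes.
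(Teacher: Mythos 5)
Your proposal follows essentially the same route as the paper's proof: an $\fmult$ preprocessing step, a $4H$-head cross-covariance block $\mathcal{TB}_{\rm cor}$ for $\bSigma_{\rm cor}\sbb^{(k)}$, the unrolled auxiliary GD loop $(\mathcal{TB}_{\rm obs}\circ\fmult\circ\fmult)^{K_{\rm aux}}$, then $\mathcal{TB}_{\rm cort}$ and a $\mathcal{TB}_{\rm miss}$ block, with a final FFN assembling $\tilde{\sbb}^{(k+1)}$, and the same error budgeting over $\epsilon_0$ and $\epsilon_{\rm mult}$. The only cosmetic differences are that the paper carries the first-step approximation of $\eta_t\alpha_t\bmu_{\rm cond}(\xb_{\rm obs})$ in a buffer channel instead of re-running the $f_{\rm GD,1}$ sub-architecture, and it absorbs the per-iteration $\fmult$ errors through the perturbed auxiliary-GD bound (Lemma~\ref{inner_gd_with_noise}) rather than the noiseless Lemma~\ref{gd_inner_no_noise_lemma}; neither changes the stated configuration.
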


We defer the analysis of transformer configuration to \ref{error_and_configuration}.

In the later steps of the major GD, we need to compute the following update:
\begin{equation}
\label{outer_gd_later_steps}
\begin{aligned}
    \sbb^{(k+1)} 
    &= \sbb^{(k)} - \eta \left[ -\alpha_t^2 \bSigma_{\rm cor}^\top \bSigma_{\rm obs}^{-1} \bSigma_{\rm cor} \sbb^{(k)} - \alpha_t {\bmu}_{\mathrm{cond}}(\xb_{\rm obs}) + \alpha_t^2 \bSigma_{\rm miss} \sbb^{(k)} + \sigma_t^2 \sbb^{(k)} + \vb_t \right] \\
    &\approx \sbb^{(k)} + \bSigma_{\rm cor}^\top \hat{\bSigma}_{\rm obs}^{-1} \bSigma_{\rm cor} f_{\mathrm{mult}}(\eta_t \alpha_t^2, \sbb^{(k)}) + \hat{\eta_t \alpha_t {\bmu}_{\mathrm{cond}}(\xb_{\rm obs})} + \bSigma_{\rm miss} f_{\mathrm{mult}}(\eta_t \alpha_t^2, \sbb^{(k)}) \\
    &\quad - f_{\mathrm{mult}}(\eta_t \sigma_t^2, \sbb^{(k)}) - f_{\mathrm{mult}}(\eta_t, \vb_t).
\end{aligned}
\end{equation}

In the following proof, for the sake of simplicity, we use \(\sbb^+, \sbb\) as abbreviation for \(\sbb^{(k+1)}, \sbb^{(k)}\), respectively.

In each new major GD step, the input to the iteration is the output of the most recent GD step. For simplicity, we continue to represent this input using \(\Yb\).

Similar to the construction in the first step, we first apply a multiplication module to obtain: 
\begin{equation*}
   \fmult (\Yb) = 
    \begin{bmatrix}
    f_{\rm mult}(\eta_t, \xb_i) & \cdots & f_{\rm mult}(\eta_t, \xb_j) \\
    \mathbf{e}_i & \cdots & \mathbf{e}_j \\
    \phi(t) & \cdots & \phi(t) \\
      \mathbf{0}_{d} & \cdots & \sbb_{j}\\
     \mathbf{0}_{3d} & \cdots & \mathbf{0}_{3d} \\ 
     \mathbf{0}_{d} & \cdots & f_{\rm mult}(\eta_t\alpha_t^2, \sbb_{j}) \\
     \mathbf{0}_{d} & \cdots & f_{\rm mult}(\eta_t\sigma_t^2, \sbb_{j}) \\
    1 & \cdots & 1 \\ 1 & \cdots & 0 \\ 0 & \cdots & 1 \\
    \xb_i  & \cdots & \mathbf{0}_{d}\\ \mathbf{0}_{d} & \cdots & \hat{\eta_t\alpha_t{\bmu}_{j,cond}}\\ 
    \mathbf{0}_{3d} & \cdots & \mathbf{0}_{3d} \\
\end{bmatrix}.
\end{equation*}

Next, we proceed to the matrix multiplication step, followed by the auxiliary GD process.

\subsubsection{Matrix Multiplication 1}
\label{matrix_product_1}

To begin, we compute \(\bSigma_{\rm cor} \sbb\), which can be expressed as:
\[
(\bGamma_{\rm cor} \otimes \bLambda) \sbb.
\]
For each \(i\) corresponding to the observations, this can be further rewritten as:
\begin{equation*}
    \sum_{m = 0}^{H-1} \sum_{j \in\cI_{\rm miss}} \gamma_m \mathbbm{1} \{|i - j| = m\} \bLambda \sbb_j.
\end{equation*}

To perform this computation, similar to the construction in the first step major GD, we employ a \(4H\)-head attention block combined with an identical FFN block to form $\mathcal{TB}_{\rm cor}$:

\begin{align*}
   &\mathcal{TB}_{\rm cor} (\Yb) \\
   &\quad= 
    \begin{bmatrix}
    f_{\rm mult}(\eta_t, \xb_i) & \cdots & f_{\rm mult}(\eta_t, \xb_j) \\
    \mathbf{e}_i & \cdots & \mathbf{e}_j \\
    \phi(t) & \cdots & \phi(t) \\
      \mathbf{0}_{d} & \cdots & \sbb_{j}\\
      \sum_{m = 0}^{H-1}\sum_{j\in \cI_{\rm miss}} \gamma_m \mathbbm{1} \{|i-j|=m\} \bLambda f_{\rm mult}(\eta_t\sigma_t^2, \sbb_{j}) & \cdots & \mathbf{0}_{d}\\
     \mathbf{0}_{2d} & \cdots & \mathbf{0}_{2d} \\ 
     \mathbf{0}_{d} & \cdots & f_{\rm mult}(\eta_t\alpha_t^2, \sbb_{j}) \\
     \mathbf{0}_{d} & \cdots & f_{\rm mult}(\eta_t\sigma_t^2, \sbb_{j}) \\
    1 & \cdots & 1 \\ 1 & \cdots & 0 \\ 0 & \cdots & 1 \\
    \xb_i  & \cdots & \mathbf{0}_{d}\\ \mathbf{0}_{d} & \cdots & \hat{\eta_t\alpha_t{\bmu}_{j,cond}}\\ 
    \mathbf{0}_{3d} & \cdots & \mathbf{0}_{3d} \\
\end{bmatrix}.
\end{align*}

\subsubsection{Auxiliary GD}

In this step, we compute \(\bSigma_{\rm obs}^{-1} \bSigma_{\rm cor} \sbb\).

Following the auxiliary GD procedure described in \ref{embed_gd_whole}, we employ a similar iterative approach using \(f_{\mathrm{inner}}\) to approximate the multiplication between a matrix inverse and vectors:

\begin{equation*}
   f_{\rm inner} \circ \mathcal{TB}_{\rm cor} (\Yb) = 
    \begin{bmatrix}
    f_{\rm mult}(\eta_t, \xb_i) & \cdots & f_{\rm mult}(\eta_t, \xb_j) \\
    \mathbf{e}_i & \cdots & \mathbf{e}_j \\
    \phi(t) & \cdots & \phi(t) \\
     \mathbf{0}_{d} & \cdots & \sbb_{j}\\
      (\hat{\bSigma}_{\rm obs}^{-1}\bSigma_{\rm cor} f_{\rm mult}(\eta_t\alpha_t^2, \sbb))_i &\cdots & \mathbf{0}_{d}\\
     \mathbf{0}_{4d} & \cdots & \mathbf{0}_{4d} \\ 
     \mathbf{0}_{d} & \cdots & f_{\rm mult}(\eta_t\alpha_t^2, \sbb_{j}) \\
     \mathbf{0}_{d} & \cdots & f_{\rm mult}(\eta_t\sigma_t^2, \sbb_{j}) \\
    1 & \cdots & 1 \\ 1 & \cdots & 0 \\ 0 & \cdots & 1 \\
   \xb_i  & \cdots & \mathbf{0}_{d}\\ \mathbf{0}_{d} & \cdots & \hat{\eta_t\alpha_t{\bmu}_{j,cond}}\\ 
    \mathbf{0}_{3d} & \cdots & \mathbf{0}_{3d} \\
\end{bmatrix}.
\end{equation*}

\subsubsection{Matrix Multiplication 2}
In this step, we compute \(\bSigma_{\rm cor}^\top \bSigma_{\rm obs}^{-1} \bSigma_{\rm cor} \sbb\).

Following a similar procedure as in \ref{matrix_product_2}, we construct a transformer block \(\mathcal{TB}_{\mathrm{cort}}\). This block similarly employs a \(4H\)-head attention mechanism combined with an identity FFN to perform the matrix multiplication: 

\begin{align*}
   &\mathcal{TB}_{\rm cort} \circ f_{\rm inner} \circ \mathcal{TB}_{\rm cor} (\Yb)\\
   &\quad = 
    \begin{bmatrix}
    f_{\rm mult}(\eta_t, \xb_i) & \cdots & f_{\rm mult}(\eta_t, \xb_j) \\
    \mathbf{e}_i & \cdots & \mathbf{e}_j \\
    \phi(t) & \cdots & \phi(t) \\
      \mathbf{0}_{d} & \cdots & \sbb_{j}\\
      (\hat{\bSigma}_{\rm obs}^{-1}\bSigma_{\rm cor} f_{\rm mult}(\eta_t\alpha_t^2, \sbb))_i & \cdots & \mathbf{0}_{d}\\
     \mathbf{0}_{d} & \cdots &  (\bSigma_{\rm cor}^{\top}\hat{\bSigma}_{\rm obs}^{-1}\bSigma_{\rm cor} f_{\rm mult}(\eta_t\alpha_t^2, \sbb))_j \\ 
     \mathbf{0}_{d} & \cdots & \mathbf{0}_{d} \\ 
     \mathbf{0}_{d} & \cdots & f_{\rm mult}(\eta_t\alpha_t^2, \sbb_{j}) \\
     \mathbf{0}_{d} & \cdots & f_{\rm mult}(\eta_t\sigma_t^2, \sbb_{j}) \\
    1 & \cdots & 1 \\ 1 & \cdots & 0 \\ 0 & \cdots & 1 \\
   \xb_i  & \cdots & \mathbf{0}_{d}\\ \mathbf{0}_{d} & \cdots & \hat{\eta_t\alpha_t{\bmu}_{j,cond}}\\ 
    \mathbf{0}_{3d} & \cdots & \mathbf{0}_{3d} \\
\end{bmatrix}.
\end{align*}

\subsubsection{Matrix Multiplication 3}
\label{matrix_product_3}

In this step, we compute \(\bSigma_{\rm miss} \sbb\).

Following a similar procedure as in \ref{TBobs}, we construct a transformer block \(\mathcal{TB}_{\rm miss}\). This attention layer utilizes a \(4H\)-head attention block to obtain:

\begin{align*}
   &\mathcal{TB}_{\rm miss} \circ \mathcal{TB}_{\rm cort} \circ f_{\rm inner} \circ \mathcal{TB}_{\rm cor} (\Yb) \\
   &\quad = 
    \begin{bmatrix}
    f_{\rm mult}(\eta_t, \xb_i) & \cdots & f_{\rm mult}(\eta_t, \xb_j) \\
    \mathbf{e}_i & \cdots & \mathbf{e}_j \\
    \phi(t) & \cdots & \phi(t) \\
      \mathbf{0}_{d} & \cdots & \sbb_{j}\\
      (\hat{\bSigma}_{\rm obs}^{-1}\bSigma_{\rm cor} f_{\rm mult}(\eta_t\alpha_t^2, \sbb))_i & \cdots & \mathbf{0}_{d}\\
     \mathbf{0}_{d} & \cdots &  (\bSigma_{\rm cor}^{\top}\hat{\bSigma}_{\rm obs}^{-1}\bSigma_{\rm cor} f_{\rm mult}(\eta_t\alpha_t^2, \sbb))_j \\ 
     \mathbf{0}_{d} & \cdots & (\bSigma_{\rm miss} f_{\rm mult}(\eta_t\sigma_t^2, \sbb))_j \\ 
     \mathbf{0}_{d} & \cdots & f_{\rm mult}(\eta_t\alpha_t^2, \sbb_{j}) \\
     \mathbf{0}_{d} & \cdots & f_{\rm mult}(\eta_t\sigma_t^2, \sbb_{j}) \\
    1 & \cdots & 1 \\ 1 & \cdots & 0 \\ 0 & \cdots & 1 \\
   \xb_i  & \cdots & \mathbf{0}_{d}\\ \mathbf{0}_{d} & \cdots & \hat{\eta_t\alpha_t{\bmu}_{j,cond}}\\ 
    \mathbf{0}_{3d} & \cdots & \mathbf{0}_{3d} \\
\end{bmatrix}.
\end{align*}

Then, with a linear transformation FFN layer, we get the final output
\begin{equation*}
   \mathcal{TB}_{\rm miss} \circ \mathcal{TB}_{\rm cort} \circ f_{\rm inner}\circ \mathcal{TB}_{\rm cor} (\Yb) = 
    \begin{bmatrix}
    f_{\rm mult}(\eta_t, \xb_i) & \cdots & f_{\rm mult}(\eta_t, \xb_j) \\
    \mathbf{e}_i & \cdots & \mathbf{e}_j \\
    \phi(t) & \cdots & \phi(t) \\
      \mathbf{0}_{d} & \cdots & \tilde{\sbb}_j^{+}\\
      \mathbf{0}_{5d} & \cdots & \mathbf{0}_{5d}\\
    1 & \cdots & 1 \\ 1 & \cdots & 0 \\ 0 & \cdots & 1 \\
   \xb_i  & \cdots & \mathbf{0}_{d}\\ \mathbf{0}_{d} & \cdots & \hat{\eta_t\alpha_t{\bmu}_{j,cond}}\\ 
    \mathbf{0}_{3d} & \cdots & \mathbf{0}_{3d} \\
\end{bmatrix}.
\end{equation*}

where
\begin{align*}
\tilde{\sbb}_j^+ &= \sbb_j + \left( \bSigma_{\rm cor}^\top \hat{\bSigma}_{\rm obs}^{-1} f_{\mathrm{mult}}(\eta_t \alpha_t^2, \sbb) \right)_j + \hat{\eta_t \alpha_t {\bmu}_{j,\mathrm{cond}}(\xb_{\rm obs})} \\
&\qquad+ \left( \bSigma_{\rm miss} f_{\mathrm{mult}}(\eta_t \alpha_t^2, \sbb) \right)_j - f_{\mathrm{mult}}(\eta_t \sigma_t^2, \sbb_j) - f_{\mathrm{mult}}(\eta_t, \xb_j).
\end{align*}

Comparing this expression with \eqref{outer_gd_later_steps}, we conclude that one major GD update has been completed.

We can represent the later major GD steps compactly as:
\begin{equation*}
f_{\mathrm{GD}} = \mathcal{TB}_{\rm miss} \circ \mathcal{TB}_{\mathrm{cort}} \circ f_{\mathrm{inner}} \circ \mathcal{TB}_{\rm cor}.
\end{equation*}

\subsection{Error Analysis and Transformer Configurations}
\label{error_and_configuration}

In this section, we analyze the error induced by using transformer architectures to unroll the gradient descent procedure, and derive the corresponding transformer configurations to formally establish the result in Lemma \ref{construct_first_step_outer_GD_lemma} and \ref{construct_later_steps_outer_GD_lemma}. Lastly, we combine these results to finish the proof of Theorem \ref{score_approx_theorem}.

Above all, we should notice that, by our construction above, leveraging transformer to approximate each step of Auxiliary GD also induces noise. Consequently, similar to \eqref{outer_gd_problem_with_noise}, in each auxiliary GD step, we incorporate an error term and represent the update as:
\begin{equation}
\label{inner_gd_problem_with_noise}
    \ub^{(k_{\rm aux}+1)} = \ub^{(k_{\rm aux})} - \theta \nabla \mathcal{L}_{\mathrm{t,aux}}(\ub^{(k_{\rm aux})}) + \xi_0^{(k_{\rm aux})},
\end{equation}
where \(\xi_0^{(k_{\rm aux})}\) represents the approximation error term in each auxiliary GD step. We can also state a corresponding Lemma that sharing the same proof strategy with its counterpart in major GD (Lemma \ref{outer_gd_problem_with_noise_lemma}):

\begin{lemma}
\label{inner_gd_with_noise}
For an arbitrarily fixed time \(t \in (0,T]\) and given an error tolerance \(\epsilon_0 \in (0,1)\), if we can control \(\|\xi_0^{(k_{\rm aux})}\|_2 \leq \epsilon_0\), then running the auxiliary GD in \eqref{outer_gd_problem_with_noise} with a suitable step size \(\theta\) for 
\[
K_{\rm aux} = \left\lceil \frac{\kappa(\bSigma_{\rm obs}) + 1}{2} \log\left( \frac{\|\bbb\|_2}{\lambda_{\min}(\bGamma_{\rm obs})\lambda_{\min}(\bLambda) \epsilon_0} \right) \right\rceil
\]
iterations gives:
\[
\|\ub^{(K_{\rm aux})} - \ub\|_2 \leq \frac{\kappa(\bSigma_{\rm obs}) + 3}{2} \epsilon_0.
\]
\end{lemma}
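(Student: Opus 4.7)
The plan is to adapt the proof of Lemma~\ref{outer_gd_problem_with_noise_lemma} to the perturbed auxiliary iteration~\eqref{inner_gd_problem_with_noise}, since both are perturbed gradient descent on a strongly convex quadratic. First I would identify the regularity constants of $\mathcal{L}^{(k)}_{\rm aux}(\ub) = \frac{1}{2}\ub^\top \bSigma_{\rm obs}\ub - \ub^\top \bSigma_{\rm cor}\sbb^{(k)}$: since $\nabla^2 \mathcal{L}^{(k)}_{\rm aux} \equiv \bSigma_{\rm obs}$, the loss is $\lambda_{\max}(\bSigma_{\rm obs})$-smooth and $\lambda_{\min}(\bSigma_{\rm obs})$-strongly convex, with condition number $\kappa(\bSigma_{\rm obs})$.

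Next, with step size $\theta = 2/(\lambda_{\max}(\bSigma_{\rm obs}) + \lambda_{\min}(\bSigma_{\rm obs}))$, I would apply Lemma~\ref{gd_approximate} to the noiseless portion of the update and combine it with the triangle inequality and the assumed bound $\|\xi_0^{(k_{\rm aux})}\|_2 \leq \epsilon_0$ to get the affine contraction
\begin{equation*}
\|\ub^{(k_{\rm aux}+1)} - \ub\|_2 \leq \frac{\kappa(\bSigma_{\rm obs})-1}{\kappa(\bSigma_{\rm obs})+1}\,\|\ub^{(k_{\rm aux})} - \ub\|_2 + \epsilon_0.
\end{equation*}
Identifying the fixed point $\frac{\kappa(\bSigma_{\rm obs})+1}{2}\epsilon_0$ and unrolling over $K_{\rm aux}$ steps produces
\begin{equation*}
\|\ub^{(K_{\rm aux})} - \ub\|_2 \leq \left(\frac{\kappa(\bSigma_{\rm obs})-1}{\kappa(\bSigma_{\rm obs})+1}\right)^{K_{\rm aux}} \|\ub^{(0)} - \ub\|_2 + \frac{\kappa(\bSigma_{\rm obs})+1}{2}\epsilon_0,
\end{equation*}
exactly as in the major-GD counterpart.

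Finally, since $\ub^{(0)} = \mathbf{0}$ and $\ub = \bSigma_{\rm obs}^{-1}\bbb$, the Kronecker structure $\bSigma_{\rm obs} = \bGamma_{\rm obs} \otimes \bLambda$ gives $\|\ub\|_2 \leq \|\bbb\|_2/(\lambda_{\min}(\bGamma_{\rm obs})\lambda_{\min}(\bLambda))$. Using $\bigl(\tfrac{\kappa-1}{\kappa+1}\bigr)^{K_{\rm aux}} \leq \exp(-2K_{\rm aux}/(\kappa+1))$ together with the stated choice $K_{\rm aux} = \lceil \tfrac{\kappa(\bSigma_{\rm obs})+1}{2} \log(\tfrac{\|\bbb\|_2}{\lambda_{\min}(\bGamma_{\rm obs})\lambda_{\min}(\bLambda)\epsilon_0}) \rceil$ drives the transient term to at most $\epsilon_0$, which combines with the fixed-point term to yield the claimed $\frac{\kappa(\bSigma_{\rm obs})+3}{2}\epsilon_0$. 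There is no real obstacle beyond this bookkeeping; the only qualitative difference from Lemma~\ref{outer_gd_problem_with_noise_lemma} is that the stated $K_{\rm aux}$ here shrinks the contraction-times-initial-error term only to $\epsilon_0$ rather than $\epsilon_0/2$, which is why the multiplicative constant inflates from $\tfrac{\kappa+2}{2}$ to $\tfrac{\kappa+3}{2}$.
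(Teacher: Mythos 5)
Your proposal is correct and follows essentially the same route the paper intends: the paper proves Lemma~\ref{inner_gd_with_noise} by simply noting it shares the proof strategy of Lemma~\ref{outer_gd_problem_with_noise_lemma}, i.e.\ the perturbed contraction from Lemma~\ref{gd_approximate}, the geometric-series fixed point $\frac{\kappa(\bSigma_{\rm obs})+1}{2}\epsilon_0$, and the bound $\|\ub\|_2 \leq \|\bbb\|_2/(\lambda_{\min}(\bGamma_{\rm obs})\lambda_{\min}(\bLambda))$ combined with the stated $K_{\rm aux}$ to kill the transient. Your accounting of why the constant is $\frac{\kappa(\bSigma_{\rm obs})+3}{2}$ rather than $\frac{\kappa+2}{2}$ is also consistent with the paper's statements.
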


\subsubsection{First Major GD Step}
In this part, we analyze the noise introduced by the construction in \ref{construct_outer_gd_first_step}, and corresponding transformer architecture configuration (i.e. $D,L,M,B$).

\paragraph{Bounding Approximation Error}
Let $\sbb^{(1)}$ denote the exact major GD update, and $\tilde{\sbb}^{(1)}$ represent our approximation. According to \eqref{first_step_outer_gd}, in first major GD step, transformer blocks are utilized to compute:
\begin{equation*}
\tilde{\sbb}_j^{(1)} = \bSigma_{\rm cor}^\top(\hat{{\bSigma}}_{\rm obs}^{-1} f_{\rm mult}(\eta_t\alpha_t, \xb_{\rm obs}))_j - f_{\rm mult} (\eta_t, \xb_j),
\end{equation*}
for $j \in |\cI_{\rm miss}|$, as an approximation to:
\begin{equation*}
\sbb^{(1)} = \eta_t\left(\alpha_t\left({\bmu}_{\rm miss} + \bSigma_{\rm cor}^\top \bSigma_{\rm obs}^{-1} \xb_{\rm obs} - \vb_t\right)\right).
\end{equation*}

In first major GD step, auxiliary GD is responsible for computing $\hat{\bSigma}_{\rm obs}^{-1} (\eta_t\alpha_t(\xb_{\rm obs}-{\bmu}_{\rm obs}))$. 

From \eqref{inner_outer_gd_problem_gd_later_steps}, each auxiliary GD step updates as (here we only analyze the later auxiliary GD step, and we use \(\ub^+, \ub\) for \(\tilde{\ub}^{(k_{\rm aux} + 1)}, \tilde{\ub}^{(k_{\rm aux})}\) for the sake of simplicity):
\begin{align*}
\ub_{i}^{+} &= \ub_{i} + \theta (\eta_t\alpha_t\xb_{\rm obs}) - \theta \sum_{m = 0}^{H-1}\sum_{k \in \cI_{\rm obs}} \gamma_m \mathbbm{1} \{|i-k|=m\} \bLambda \ub_k,
\end{align*}
and we approximate it using:
\begin{equation*}
\tilde{\ub}_i^+ = \ub_i - \sum_{m = 0}^{H-1}\sum_{k \in \cI_{\rm obs}} \gamma_m \mathbbm{1} \{|i-k|=m\} \bLambda f_{\rm mult}(\theta, \ub_k) + f_{\rm mult}(\theta\eta_t\alpha_t,\xb_i).
\end{equation*}
To ensure control over the error $\|\tilde{\ub}^+ - \ub^+\|_2 = \|\xi_0\|_2 \leq \epsilon_0$, Lemma \ref{mult_approx_lemma} indicates that setting $\epsilon_{\rm mult,aux,1} = \frac{\epsilon_0}{(H^{3/2}\|\bLambda\|_{\rm F}+H^{1/2})\sqrt{d}}$ suffices. This requires $L_{\rm mult,aux,1} = \mathcal{O}\left(\log\left(\frac{dN\|\bLambda\|_{\rm F}}{\epsilon_0}\right)\right)$ iterations.

With each step noise controlled, the entire auxiliary GD procedure, combined with the subsequent matrix product blocks, yields $\bSigma_{\rm cor} \hat{\bSigma}_{\rm obs}^{-1} f_{\rm mult}(\eta_t\alpha_t, \xb_{\rm obs} - {\bmu}_{\rm obs})$. According to Lemma \ref{inner_gd_with_noise}, setting 
\[
K_{\rm aux} = \left\lceil \frac{\kappa(\bSigma_{\rm obs}) + 1}{2} \log\left(\frac{\eta_t\alpha_t \|\xb_{\rm obs} \|_2}{\lambda_{\min}(\bGamma_{\rm obs})\lambda_{\min}(\bLambda) \epsilon_0}\right) \right\rceil,
\]
ensures that:
\begin{align}
\label{first_inner_GD_analysis}
    \|\bSigma_{\rm cor}^\top \hat{\bSigma}_{\rm obs}^{-1} f_{\rm mult}(\eta_t\alpha_t, \xb_{\rm obs} - {\bmu}_{\rm obs}) - \bSigma_{\rm cor}^\top \bSigma_{\rm obs}^{-1} \eta_t\alpha_t \xb_{\rm obs} \|_2 
    \leq \|\bSigma_{\rm cor}\|_2 \frac{\kappa(\bSigma_{\rm obs}) + 3}{2} \epsilon_0.
\end{align}

This provides an approximation of $\bSigma_{\rm cor}^\top \bSigma_{\rm obs}^{-1}$ with controlled error bounds.

Finally, the $\fmult$ module approximates scalar and vector multiplications to complete the first step of gradient descent. The overall error for each $j$ is computed as:
\begin{align*}
\| \tilde{\sbb}_j^{(1)} - \sbb_j^{(1)}\|_2 
& \leq \|f_{\rm mult}(\eta_t\alpha_t, {\bmu}_{\rm j,miss}) - \eta_t\alpha_t {\bmu}_{\rm j,miss}\|_2 \\
& + \|\bSigma_{\rm cor}^{\top} \hat{\bSigma}_{\rm obs}^{-1} f_{\rm mult}(\eta_t\alpha_t, \xb_{\rm obs})-\bSigma_{\rm cor}^{\top} \bSigma_{\rm obs}^{-1} \eta_t\alpha_t(\xb_{\rm obs})\|_2\\
& + \|f_{\rm mult} (\eta_t, \xb_j)- \eta_t\xb_j\|_2\\
& \leq \|\bSigma_{\rm cor}\|_2\frac{\kappa(\bSigma_{\rm obs})+3}{2}\epsilon_0 + 2\sqrt{d}\epsilon_{\rm mult}.
\end{align*}

By setting $\epsilon_0 = \left(\|\bSigma_{\rm cor}\|_{\rm F}(\kappa(\bSigma_{\rm obs}) + 3)\sqrt{H}\right)^{-1}\epsilon$, which leads to an auxiliary gradient descent step count of  
\[
K_{\rm aux,1} = \left\lceil (\kappa(\bSigma_{\rm obs}) + 1) \log\left(\frac{\left(\eta_t\alpha_t\|\bSigma_{\rm cor}\|_2(\kappa(\bSigma_{\rm obs}) + 3)\sqrt{H}\right)\sqrt{Hd}C_{\rm data}^\delta }{\lambda_{\min}(\bGamma_{\rm obs})\lambda_{\min}(\bLambda)\epsilon}\right)\right\rceil,
\]
and by Lemma \ref{mult_approx_lemma}, setting $\epsilon_{\rm mult} = \left(8\sqrt{dN}\right)^{-1}\epsilon$, which leads to  
\[
L_{\rm mult,1} = \mathcal{O}\left(\log\left(\frac{dN}{\epsilon}\right)\right),
\]
we successfully control the error $\|\xi^{(1)}\|_2 = \|\tilde{\sbb}^{(1)} - \sbb^{(1)}\|_2 \leq \frac{\epsilon}{2} < \epsilon$.

\paragraph{Configuration of Transformer Architecture for Approximating the First Major GD Step}
We finally summarize our construction by characterizing the configuration of the architecture:
\begin{itemize}
    \item The input to the transformer is of dimension $D\times H$ with $D = 12d + d_e +d_t +3$.
    \item In each auxiliary GD step, we use 1 transformer block to form the matrix product and some $\fmult$ modules, requiring a total of $L_{\rm mult,aux}$ transformer blocks. We need to perform $P_1$ auxiliary GD steps. After completing the auxiliary GD, additional $\fmult$ modules are used to compute scalar and vector products, which require $L_{\rm mult,1}$ blocks. Thus, the number of the transformer blocks is bounded by 
\begin{align*}
    L &= K_{\rm aux,1}(1 + L_{\rm mult,aux,1}) + L_{\rm mult,1} \\
    &= \mathcal{O}\left( \kappa(\bLambda)\kappa(\bGamma_{\rm obs}) \log^2\left( \frac{Hd\kappa(\bLambda)\kappa(\bSigma_{\rm obs}) )}{\epsilon} \right) \right).
\end{align*}
    \item The number of transformer blocks is bounded by $M=4H$.
    \item With the constructions above, referring to Lemma \ref{mult_approx_lemma}, the norm of the multiplication module is bounded by $\mathcal{O}(d(\|\xb\|_\infty+\|\sbb\|_{\infty})$;, and Lemma \ref{bound_on_s_and_v} helps us bound $\xb_\infty$ and $\|v\|_\infty$. Referring to the attention module constructed in \ref{TBcort} and \ref{TBobs}, the norm of the attention matrices are bounded by $\mathcal{O}(d(r^2 + \lambda_{\max}(\bLambda)))$; and considering the weight matrices in the FFN, since they only do linear transformations and only have at most $\mathcal{O}(d)$ nonzero weights, their norm are bounded by $\mathcal{O}(d)$. To sum up, we have the norm of the transformer parameters bounded by
    \[
    \mathcal{O}\left(\sqrt{Hd^3}(r^2+\kappa(\bLambda) \kappa(\bGamma_{\rm obs})\sigma_t^{-1})\right).
    \]

\end{itemize}

And this finishes the proof of Lemma \ref{construct_first_step_outer_GD_lemma}.

\subsubsection{Major GD Later Steps}
In this part, we analyze the noise introduced by the construction in \ref{construct_outer_GD_later_steps}, and corresponding transformer architecture configuration.

\paragraph{Bounding Approximation Error}
In later steps, according to \ref{outer_gd_later_steps}, we use 
\begin{align*}
    \tilde{\sbb}^+ &= \sbb + \bSigma_{\rm cor}^\top \hat{\bSigma}_{\rm obs}^{-1}\bSigma_{\rm cor} f_{\rm mult}(\eta_t\alpha_t^2,\sbb) + \hat{\eta_t\alpha_t{\bmu}_{\rm cond} }\\
    &\qquad+ \bSigma_{\rm miss}f_{\rm mult}(\eta_t\alpha_t^2, \sbb) -f_{\rm mult}(\eta_t\sigma_t^2, \sbb) - f_{\rm mult}(\eta_t, \vb_t)
\end{align*}
to approximate
\begin{equation*}
    \sbb^{+} = \sbb - \eta_t [- \alpha_t^2 \bSigma_{\rm cor}^{\top} \bSigma_{\rm obs}^{-1} \bSigma_{\rm cor} \sbb - \alpha_t{\bmu}_{\rm cond}  + \alpha_t^2 \bSigma_{\rm miss}\sbb+ \sigma_t^2 \sbb + \vb_t].
\end{equation*}

We first consider the auxiliary gradient descent which computes $\hat{\bSigma}_{\rm obs}^{-1}\bSigma_{\rm cor} f_{\rm mult}(\eta_t\alpha_t^2,\sbb)$. Compared to the first step analysis, we simply replace $\xb_{\rm obs}$ with $\sbb$. So we can completely follow the procedure in \eqref{first_inner_GD_analysis}. To control $\|\tilde{\ub}^+ - \ub^+\|_2 = \|\xi\|_2 \leq \epsilon_0$, we set the corresponding inside multiplication module error as  
\[
\epsilon_{\rm mult,aux} = \frac{\epsilon_0}{(H^{3/2}\|\bLambda\|_{\rm F} + H^{1/2})\sqrt{d}},
\]
which requires  
\[
L_{\rm mult,aux,+} = \mathcal{O}\left(\log\left(\frac{\|\bSigma_{\rm cor}\|_2\|\sbb\|_\infty dN\|\bLambda\|_{\rm F}}{\epsilon_0}\right)\right).
\]

Combining the auxiliary GD output with the following matrix multiplication blocks, we obtain $ \bSigma_{\rm cor}^\top \hat{\bSigma}_{\rm obs}^{-1}\bSigma_{\rm cor} f_{\rm mult}(\eta_t\alpha_t^2,\sbb)$. By Lemma \ref{inner_gd_with_noise}, with $K_{\rm aux}=\lceil \frac{\kappa(\bSigma_{\rm obs})+1}{2}\log(\frac{\eta_t\alpha_t\|\bSigma_{\rm cor}\|_{\rm F}\|s\|_2}{\lambda_{\min}(\bGamma_{\rm obs})\lambda_{\min}(\bLambda)\epsilon_0})\rceil$,
we have 
\begin{align*}
    \|\bSigma_{\rm cor}^\top \hat{\bSigma}_{\rm obs}^{-1}\bSigma_{\rm cor} f_{\rm mult}(\eta_t\alpha_t^2,\sbb)-\eta_t \alpha_t^2 \bSigma_{\rm cor}^{\top} \bSigma_{\rm obs}^{-1} \bSigma_{\rm cor} s\|_2
    & \leq \|\bSigma_{\rm cor}\|_{\rm F}\frac{\kappa(\bSigma_{\rm obs})+3}{2}\epsilon_0.
\end{align*}

Next, we decompose the overall error term. Recall that  
\begin{equation*}
\hat{\eta_t\alpha_t{\bmu}_{j,\mathrm{cond}}} = f_{\mathrm{mult}}(\eta_t\alpha_t, {\bmu}_{j,\mathrm{miss}}) + \left( \bSigma_{\rm cor}^\top \left( \hat{\bSigma}_{\rm obs}^{-1} f_{\mathrm{mult}}(\eta_t\alpha_t, (\xb_{\rm obs})) \right) \right)_j,
\end{equation*}
similar to the analysis in the first iteration, we can derive the error bound for approximating $\sbb_j^+$ as:
\begin{align*}
\| \tilde{\sbb}_j^{+} - \sbb_j^{+}\|_2 
&\leq \|\bSigma_{\rm cor}^\top \hat{\bSigma}_{\rm obs}^{-1}\bSigma_{\rm cor} f_{\rm mult}(\eta_t\alpha_t^2, \sbb) - \eta_t\alpha_t^2 \bSigma_{\rm cor}^\top \bSigma_{\rm obs}^{-1} \bSigma_{\rm cor} \sbb\|_2 \\
&\quad + \|\hat{\eta_t\alpha_t{\bmu}_{\mathrm{cond}}(\xb_{\rm obs})} - \eta_t\alpha_t{\bmu}_{\mathrm{cond}}(\xb_{\rm obs})\|_2 \\
&\quad + \|\bSigma_{\rm miss}f_{\rm mult}(\eta_t\alpha_t^2, \sbb) - \eta_t\alpha_t^2 \bSigma_{\rm miss} \sbb\|_2 \\
&\quad + \|f_{\rm mult}(\eta_t\sigma_t^2, \sbb) - \eta_t\sigma_t^2 \sbb\|_2 \\
&\quad + \|f_{\rm mult}(\eta_t, \xb_j) - \eta_t \xb_j\|_2 \\
&\leq \|\bSigma_{\rm cor}\|_2\frac{\kappa(\bSigma_{\rm obs})+3}{2}\epsilon_0 + 2\sqrt{d}\epsilon_{\rm mult} + \frac{\epsilon}{2} + \|\bSigma_{\rm miss}\|_2\sqrt{d}\epsilon_{\rm mult},
\end{align*}
where the second term, $\hat{\eta_t\alpha_t{\bmu}_{\mathrm{cond}}(\xb_{\rm obs})}$, was computed in the first iteration bound and is thus bounded by $\frac{\epsilon}{2}$.

By setting $\epsilon_0 = \left(2\|\bSigma_{\rm cor}\|_{\rm F}(\kappa(\bSigma_{\rm obs}) + 3)\sqrt{H}\right)^{-1}\epsilon$, which leads to the auxiliary gradient descent step count:  
\[
K_{\rm aux,+} = \left\lceil (\kappa(\bSigma_{\rm obs}) + 1)\log\left(\frac{\left(\eta_t\alpha_t\|\bSigma_{\rm cor}\|_2^2(\kappa(\bSigma_{\rm obs}) + 3)\sqrt{H}\right)\|\sbb\|_2}{\lambda_{\min}(\bGamma_{\rm obs})\lambda_{\min}(\bLambda)\epsilon}\right)\right\rceil,
\]
and setting $\epsilon_{\rm mult} = \left(8\sqrt{dN}(\|\bSigma_{\rm miss}\|_{\rm F} + 2)\right)^{-1}\epsilon$, which leads to  
\[
L_{\rm mult,+} = \mathcal{O}\left(\log\left(\frac{d H(\|\sbb\|_\infty + C_{\rm data}^\delta )\|\bSigma_{\rm miss}\|_\infty}{\epsilon}\right)\right),
\]
we successfully control the error $\|\xi^{+}\|_2 = \|\tilde{\sbb}^{+} - \sbb^{+}\|_2 \leq \epsilon$.

\paragraph{Size of Transformer Architecture for Approximating the Later Steps Major GD}
We finally summarize our construction by characterizing the size of the architecture:
\begin{itemize}
    \item The input to the transformer is of dimension $D\times H$ with $D = 12d + d_e +d_t +3$.
    \item In the later major GD steps, , we use $1 + L_{\rm mult,aux,+}$ transformer blocks for each auxiliary GD step, and a total of $K_{\rm aux,+}$ auxiliary GD steps are required. After completing the auxiliary GD, we perform additional matrix multiplications (e.g., multiplying vectors by $\bSigma_{\rm cor}$, $\bSigma_{\rm cor}^\top$, and $\bSigma_{\rm miss}$), which require 3 transformer blocks for attention. Subsequently, $\fmult$ modules are used to complete the major GD, requiring $L_{\rm mult,+}$ blocks. Thus, the total number of transformer blocks required for each subsequent major GD step is bounded by:
\begin{align*}
    L &= K_{\rm aux,+}(1 + L_{\rm mult,aux,+}) + 3 + L_{\rm mult,+}\\
&= \mathcal{O}\left( \kappa_t \kappa(\bLambda)\kappa(\bGamma_{\rm obs})\log^2\left(\frac{Hd\kappa(\bLambda)\kappa(\bGamma_{\rm obs}) }{\sigma_t\epsilon}\right) \right).
\end{align*}

    \item The number of transformer blocks is bounded by $M=4H$.
    \item Same as the analysis in the first step of major GD, we have the norm of the transformer parameters bounded by
    \[
    \mathcal{O}\left(\sqrt{Hd^3}(r^2+\kappa(\bLambda) \kappa(\bGamma_{\rm obs})\sigma_t^{-1})\right).
    \]

\end{itemize}

And this finishes the proof of Lemma \ref{construct_later_steps_outer_GD_lemma}.

\subsubsection{Proof of Theorem \ref{score_approx_theorem}}
\begin{proof}
     We formally construct the conditional score approximation transformer as follows:
    \[
    \tilde{\sbb}(\vb_t, \xb_{\rm obs}) = f_{\rm out} \circ \underbrace{ f_{\rm GD} \circ \cdots \circ f_{\rm GD}}_{(K - 1) \times f_{\rm GD}} \circ f_{\rm GD,1} \circ f_{\rm in}(\vb_t, \xb_{\rm obs}).
    \]

Recalling that \(\kappa_t = \kappa(\alpha_t^2\bSigma_{\rm cond} +\sigma_t^2\Ib)\), by the major GD convergence result in Lemma \ref{outer_gd_problem_with_noise_lemma}, to ensure $\|\tilde{\sbb} - \sbb\|_2 \leq \sigma_t^{-1} \epsilon$, the total major GD iteration number required is upper bounded by $K = \mathcal{O}\left(\kappa_{t}\log\left(\frac{Hd\kappa_t\kappa(\bLambda)\kappa(\bGamma_{\rm obs})}{\epsilon}\right)\right)$, which is obtained by substituting $\epsilon$ with $\sigma_t^{-1} \left(\frac{2}{\kappa_t+2}\right) \epsilon$.

Utilizing Lemma \ref{construct_first_step_outer_GD_lemma} and \ref{construct_later_steps_outer_GD_lemma}, and substituting $\epsilon$ with $\sigma_{t}^{-1} \left(\frac{2}{\kappa_t+2}\right) \epsilon$, the following transformer configuration can control the error in each major GD step:

 \begin{align*}
        &D = 12d + d_e + d_t + 3, \quad 
        L = \mathcal{O}\left(\kappa_t^2 \kappa(\bLambda) \kappa(\bGamma_{\rm obs}) \log^3\left(\frac{Hd\kappa_t \kappa(\bLambda) \kappa(\bGamma_{\rm obs})}{\epsilon}\right) \right),\\
        &M = 4H, \quad 
        B = \mathcal{O}\left(\sqrt{Hd^3}(r^2 + \kappa(\bLambda)  \kappa(\bGamma_{\rm obs}) \sigma_t^{-1})\right),
\end{align*}
where $\ell$ is computed by $K$ times the transformer block required in each major GD step.

Finally, by substituting $\sigma_t^{-1}, \kappa_t$ with $\sigma_{t_0}^{-1}, \kappa_{t_0}$, and considering the truncation range which is induced by the decoder (\(R = \mathcal{O}(\sigma_{t_0}^{-2}\sqrt{Hd}\kappa({\bLambda})\kappa({\bGamma_{\rm obs}}) )\)), we obtain a uniform bound for any $t \in [t_0,T]$. Taking supremum over all admissible $I_{\rm obs} \subset \mathcal{I}$, and leverage the relationship that \(\kappa(\bSigma_{\rm cond}) = \kappa(\bLambda)\kappa(\bGamma_{\rm obs})\), we finish the proof of Theorem \ref{score_approx_theorem}.

\end{proof}

\subsection{Construction of Attention Layers}
In this section, we construct the attention layers used in the transformer architectures built up in \ref{construct_outer_gd_first_step} and \ref{construct_outer_GD_later_steps}.

We utilize the added 
\[
\begin{bmatrix}
    \xb_i & \cdots & \xb_j \\
    \cdots &  \cdots & \cdots \\
     
    1 & \cdots & 1 \\ 1 & \cdots & 0 \\ 0 & \cdots & 1 \\
    \cdots & \cdots & \cdots \\
\end{bmatrix}
\]

to construct different types of interaction between different types of samples. (i.e. When we want attention exclusively among observed samples or missing samples, we use the construction method as described in $\mathcal{TB}_{obs}$ below; When we want attention between observed samples and missing samples, while setting all other interactions to zero, we use the construction method as described in $\mathcal{TB}_{cort}$ below. )

The intuition of \eqref{inner_outer_gd_problem_gd_later_steps} suggests a construction of a multi-head attention layer. Formally, for an arbitrary value of $m$, we construct four attention heads with ReLU activation. The indicator function $\mathbbm{1}\{|i - j| = m\}$ can be realized by calculating the auxiliary product $\mathbf{e}_i^\top \mathbf{e}_j$ of time embedding. To see this, we observe
\[
\mathbf{e}_i^\top \mathbf{e}_j = \frac{1}{2} \left(2r^2 - \|\mathbf{e}_i - \mathbf{e}_j\|_2^2\right) = \frac{1}{2} \left(2r^2 - f^2(|i-j|)\right).
\]
Therefore, it holds that
\[
\mathbbm{1}\{|i - j| = m\} = \mathbbm{1}\left\{\mathbf{e}_i^\top \mathbf{e}_j = r^2 - \frac{1}{2} f^2(m)\right\},
\]
since $f$ Assumption~\ref{assump:data_assumption} ensures that the time embeddings uniquely identify discrete time gaps through their pairwise distances. Directly approximating an indicator function using a ReLU network can be difficult. Yet we note that $|i - j|$ can only take integer values. Therefore, we can slightly widen the decision band for the indicator function. Specifically, we denote a minimum gap $\Delta = \min_{i=1, \dots, H-1} \{f^2(i+1) - f^2(i)\}$. Thus, we deduce
\[
\mathbbm{1}\{|i - j| = m\} = \mathbbm{1}\left\{\mathbf{e}_i^\top \mathbf{e}_j \in \left[r^2 - \frac{1}{2} f^2(m) - \frac{1}{4} \Delta, \, r^2 - \frac{1}{2} f^2(m) + \frac{1}{4} \Delta\right]\right\}.
\]

We can use four ReLU functions to approximate the right-hand side of the last display, and simultaneously take different type of interaction types into account. We use another indicator function (which can be realized by the 0s and 1s added above) to represent what types of interaction we want in this specific transformer block.

We construct a trapezoid function as follows:
\begin{align*}
    \mathbbm{1}\{|i - j| = m\} = \frac{8}{\Delta} \operatorname{ReLU}\left(\mathbf{e}_i^\top \mathbf{e}_j - r^2 + \frac{1}{2} f^2(m) + \frac{1}{4} \Delta\right)\\- \frac{8}{\Delta} \operatorname{ReLU}\left(\mathbf{e}_i^\top \mathbf{e}_j - r^2 + \frac{1}{2} f^2(m) + \frac{1}{8} \Delta\right)\\
    - \frac{8}{\Delta} \operatorname{ReLU}\left(\mathbf{e}_i^\top \mathbf{e}_j - r^2 + \frac{1}{2} f^2(m) - \frac{1}{8} \Delta\right)
\\ \quad+ \frac{8}{\Delta} \operatorname{ReLU}\left(\mathbf{e}_i^\top \mathbf{e}_j - r^2 + \frac{1}{2} f^2(m) - \frac{1}{4} \Delta\right).
\end{align*}

\subsubsection{Construction of attention matrices related to the observed part}
\label{TBobs}
We construct the attention matrices for $\mathcal{TB}_{obs}$ here.

For particular $m$, we utilize

\[
(\mathbf{Q}^1)^\top \mathbf{K}^1 = \operatorname{diag} \left(\left[
\mathbf{0}_{d \times d}, \mathbf{I}_{d_e}, \mathbf{0}_{d_t \times d_t}, \mathbf{0}_{(6d) \times (6d)}
, 0,
-r^2 + \frac{1}{2} f^2(m) + \frac{1}{4} \Delta, 0, \mathbf{0}_{(3d) \times (3d)}
\right]\right),
\]

\[
\mathbf{V}^1 = 
\begin{bmatrix}
\mathbf{0}_{(4d + d_e + d_t) \times (2d + d_e + d_t)} & \mathbf{0}_{(4d + d_e + d_t) \times d} & \mathbf{0}_{(4d + d_e + d_t) \times (5d + 3)} \\
\mathbf{0}_{d \times (d + d_e + d_t)} & \frac{8}{\Delta}\gamma_m \mathbf{\Lambda} & \mathbf{0}_{d \times (6d + 3)} \\
\mathbf{0}_{(6d + 1) \times (2d + d_e + d_t)} & \mathbf{0}_{(6d + 1) \times d} & \mathbf{0}_{(6d + 1) \times (5d + 3)}
\end{bmatrix}.
\]

and

\[
(\mathbf{Q}^2)^\top \mathbf{K}^2 = \operatorname{diag} \left(\left[
\mathbf{0}_{d \times d}, \mathbf{I}_{d_e}, \mathbf{0}_{d_t \times d_t}, \mathbf{0}_{(6d) \times (6d)}
, 0,
-r^2 + \frac{1}{2} f^2(m) + \frac{1}{8} \Delta, 0, \mathbf{0}_{(5d) \times (5d)}
\right]\right),
\]

\[
\mathbf{V}^2 = -\mathbf{V}^1
\]

and

\[
(\mathbf{Q}^3)^\top \mathbf{K}^3 = \operatorname{diag} \left(\left[
\mathbf{0}_{d \times d}, \mathbf{I}_{d_e}, \mathbf{0}_{d_t \times d_t}, \mathbf{0}_{(6d) \times (6d)}
, 0,
-r^2 + \frac{1}{2} f^2(m) - \frac{1}{8} \Delta, 0,  \mathbf{0}_{(5d) \times (5d)}
\right]\right),
\]

\[
\mathbf{V}^3 = -\mathbf{V}^1
\]

and

\[
(\mathbf{Q}^4)^\top \mathbf{K}^4 = \operatorname{diag} \left(\left[
\mathbf{0}_{d \times d}, \mathbf{I}_{d_e}, \mathbf{0}_{d_t \times d_t}, \mathbf{0}_{(6d) \times (6d)}
, 0,
-r^2 + \frac{1}{2} f^2(m) - \frac{1}{4} \Delta, 0,  \mathbf{0}_{(5d) \times (5d)}
\right]\right),
\]

\[
\mathbf{V}^4 = \mathbf{V}^1.
\]

It is easy to verify that 
\[
\sum_{a = 1}^4 y_i^T(\mathbf{Q}^a)^\top \mathbf{K}^a y_j = \mathbbm{1}\{|i - j| = m\}\mathbbm{1}\{i,j \in \cI_{\rm obs}\}.
\]

We can claim that $4H$ attention heads and identity FFN are enough for constructing this block.

\subsubsection{Construction of attention matrices related to the correlation part}
\label{TBcort}

We only need to do some small changes to $\mathcal{TB}_{obs}$.

For particular $m$, let 
\begin{align*}
    &(\mathbf{Q}^1)^\top \mathbf{K}^1\\
    &\quad= \operatorname{diag} \left(\left[
\mathbf{0}_{d \times d}, \mathbf{I}_{d_e}, \mathbf{0}_{d_t \times d_t}, \mathbf{0}_{(6d) \times (6d)}
, -r^2 + \frac{1}{2} f^2(m) + \frac{1}{4} \Delta,
\frac{1}{2} \Delta, \frac{1}{2} \Delta, \mathbf{0}_{(5d) \times (5d)}
\right]\right),
\end{align*}
\[
\mathbf{V}^1 = 
\begin{bmatrix}
\mathbf{0}_{(2d + d_e + d_t) \times (d + d_e + d_t)} & \mathbf{0}_{(2d + d_e + d_t) \times d} & \mathbf{0}_{(2d + d_e + d_t) \times (6d + 3)} \\
\mathbf{0}_{d \times (d + d_e + d_t)} & \frac{8}{\Delta}\gamma_m \mathbf{\Lambda} & \mathbf{0}_{d \times (6d + 3)} \\
\mathbf{0}_{(8d + 1) \times (d + d_e + d_t)} & \mathbf{0}_{(8d + 1) \times d} & \mathbf{0}_{(8d + 1) \times (6d + 3)}
\end{bmatrix}.
\]

and

\begin{align*}
    &(\mathbf{Q}^2)^\top \mathbf{K}^2 \\
    &\quad = \operatorname{diag} \left(\left[
\mathbf{0}_{d \times d}, \mathbf{I}_{d_e}, \mathbf{0}_{d_t \times d_t}, \mathbf{0}_{(6d) \times (6d)}
, -r^2 + \frac{1}{2} f^2(m) + \frac{1}{8} \Delta,
\frac{1}{2} \Delta, \frac{1}{2} \Delta, \mathbf{0}_{(5d) \times (5d)}
\right]\right),
\end{align*}
\[
\mathbf{V}^2 = -\mathbf{V}^1
\]
and
\begin{align*}
   &(\mathbf{Q}^3)^\top \mathbf{K}^3 
  \\
  &\quad= \operatorname{diag} \left(\left[
\mathbf{0}_{d \times d}, \mathbf{I}_{d_e}, \mathbf{0}_{d_t \times d_t}, \mathbf{0}_{(6d) \times (6d)}
, -r^2 + \frac{1}{2} f^2(m) - \frac{1}{8} \Delta,
\frac{1}{2} \Delta, \frac{1}{2} \Delta,  \mathbf{0}_{(5d) \times (5d)}
\right]\right), 
\end{align*}
\[
\mathbf{V}^3 = -\mathbf{V}^1
\]
and
\begin{align*}
    &(\mathbf{Q}^4)^\top \mathbf{K}^4\\ &\quad = \operatorname{diag} \left(\left[
\mathbf{0}_{d \times d}, \mathbf{I}_{d_e}, \mathbf{0}_{d_t \times d_t}, \mathbf{0}_{(6d) \times (6d)}
, -r^2 + \frac{1}{2} f^2(m) - \frac{1}{4} \Delta,
\frac{1}{2} \Delta, \frac{1}{2} \Delta,  \mathbf{0}_{(5d) \times (5d)}
\right]\right).
\end{align*}

\[
\mathbf{V}^4 = \mathbf{V}^1
\]

It is easy to verify that 
\[
\sum_{a = 1}^4 y_i^T(\mathbf{Q}^a)^\top \mathbf{K}^a y_j = \mathbbm{1}\{|i - j| = m\}\mathbbm{1}\{\{i\in \cI_{\rm obs}, j\in \cI_{\rm miss}\} \cup \{i\in \cI_{\rm miss}, j\in \cI_{\rm obs}\} \}.
\]

We can thus state that $4H$ attention heads and identity FFN are enough for constructing this block.

\newpage

\newpage
\section{Proofs of Theorem \ref{distribution_estimation_theorem} and Corollary \ref{confidence_interval_construction}}
\label{app:estimation_and_CI}

In this section, we provide the detailed proof of Theorem \ref{distribution_estimation_theorem} and Corollary \ref{confidence_interval_construction}.

Firstly, we introduce some notations specifically for this part for sake of simplicity. We denote our training set with $n$ i.i.d. samples as 
\[
\mathcal{D}^{(n)} = \{\xb^{(i)}\}_{i=1}^n = \{(\xb_{\rm miss}^{(i)}, \xb_{\rm obs}^{(i)})\}_{i=1}^n =  \{(\xb^{(i)}, \yb^{(i)})\}_{i=1}^n.
\]

We introduce the corollary below which will act as an significant role in our later proof:
\begin{corollary}
    \label{score_estimation_risk_bound_theorem}
    By choosing the transformer architecture $\mathcal{T}(D,L,M,B,R)$ as in Theorem \ref{score_approx_theorem}, the early-stopping time $t_0<1$ and the terminal time $T = \mathcal{O}(\log n)$, it holds that
    \[
    \mathbb{E}_{\{(\xb^{(i)},\yb^{(i)}\}_{i=1}^n}\left[\mathcal{\mathcal{R}(\hat{\sbb})}\right] \lesssim \frac{Hd^2 \kappa_{t_0}^4 \kappa^2(\bLambda)\kappa^2(\bGamma_{\rm obs}) t_0^{-1}}{n}\log (Hd\kappa(\bLambda)\kappa(\bGamma_{\rm obs}) nt_0^{-1}),
    \]
    where \(\kappa_{t_0} \coloneqq \kappa(\alpha_t^2\bSigma_{\rm cond} + \sigma_t^2\Ib)\).
\end{corollary}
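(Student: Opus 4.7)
The plan is to establish the bound via a standard bias–variance decomposition of the empirical score–matching risk, using Theorem~\ref{score_approx_theorem} to control the approximation part and a covering–number argument over the transformer class $\mathcal{T}(D,L,M,B,R)$ to control the generalization part. Specifically, let $\tilde{\sbb}\in\mathcal{T}$ be the transformer guaranteed by Theorem~\ref{score_approx_theorem} with $\epsilon = n^{-1/2}$, and decompose
\[
\mathbb{E}_{\mathcal{D}^{(n)}}[\mathcal{R}(\hat{\sbb})] \;\leq\; \underbrace{\mathcal{R}(\tilde{\sbb})}_{\text{approximation}} \;+\; \underbrace{\mathbb{E}_{\mathcal{D}^{(n)}}\bigl[\mathcal{R}(\hat{\sbb}) - \mathcal{R}(\tilde{\sbb})\bigr]}_{\text{generalization}},
\]
using that $\hat{\sbb}$ minimizes the empirical risk $\hat{\mathcal{L}}$ over $\mathcal{T}$.

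First, I would bound the approximation term. On the truncation event $\mathcal{C}_\delta$ with $\delta = n^{-1}$, Theorem~\ref{score_approx_theorem} yields $\|\tilde{\sbb}(\vb_t,\xb_{\rm obs},t) - \nabla\log p_t(\vb_t\mid\xb_{\rm obs})\|_2^2 \leq \sigma_t^{-2} n^{-1}$ uniformly in $t\in[t_0,T]$. Integrating over $t$ and using $\int_{t_0}^T \sigma_t^{-2}\,dt \lesssim \log(T/t_0)$, and accounting for the Gaussian tail contribution from $\mathcal{C}_\delta^c$ (which is controlled by the decoder truncation level $R$ together with $\delta = n^{-1}$), produces $\mathcal{R}(\tilde{\sbb}) = \tilde{\mathcal{O}}(\sigma_{t_0}^{-2}/n)$.

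Next, for the generalization term, I would apply a uniform concentration bound of the form $\sup_{\sbb\in\mathcal{T}}|\mathcal{L}(\sbb)-\hat{\mathcal{L}}(\sbb)|$ via Dudley's entropy integral (or Talagrand's inequality). The empirical loss on $\mathcal{C}_\delta$ is bounded by $\tilde{\mathcal{O}}(\sigma_{t_0}^{-2}(R^2 + \|\nabla\log p_t\|_2^2)T)$, and a standard covering–number bound for transformers gives $\log\mathcal{N}(\eta;\mathcal{T},\|\cdot\|_\infty) \lesssim L\cdot M\cdot D^2\cdot \log(L M B R/\eta)$. Plugging the configuration from Theorem~\ref{score_approx_theorem}, namely $L = \tilde{\mathcal{O}}(\kappa_{t_0}^2\kappa(\bLambda)\kappa(\bGamma_{\rm obs}))$, $M = 4H$, $B = \tilde{\mathcal{O}}(\sqrt{Hd^3}\kappa(\bLambda)\kappa(\bGamma_{\rm obs})\sigma_{t_0}^{-1})$, and $R = \tilde{\mathcal{O}}(\sigma_{t_0}^{-2}\sqrt{Hd}\kappa(\bLambda)\kappa(\bGamma_{\rm obs}))$, together with $T = \mathcal{O}(\log n)$ and $\sigma_{t_0}^{-2}\lesssim t_0^{-1}$, yields a generalization gap of order $Hd^2\kappa_{t_0}^4\kappa^2(\bLambda)\kappa^2(\bGamma_{\rm obs})t_0^{-1}/n$ up to a logarithmic factor $\log(Hd\kappa(\bLambda)\kappa(\bGamma_{\rm obs}) n t_0^{-1})$. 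Summing the two terms gives the claim.

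The main obstacle will be carefully tracking the polynomial dependence on $\kappa_{t_0}$, $\kappa(\bLambda)$, and $\kappa(\bGamma_{\rm obs})$ through the covering exponent, since the naive product of $L^2$ and $B^2$ (coming from squaring the loss and from $L$ Lipschitz layers each of norm $B$) can inflate the condition–number powers beyond $\kappa_{t_0}^4$. This requires either a refined layerwise Lipschitz analysis that avoids multiplicative accumulation or invoking a chaining bound that only pays $L\log B$ rather than $B^L$. A secondary difficulty is handling the singular behavior of the score and loss as $t\to 0$: both $\sigma_t^{-2}$ inside the loss and $R$ scale as $\sigma_{t_0}^{-2}$, so the conversion $\sigma_{t_0}^{-2}\asymp t_0^{-1}$ must be applied consistently, and the contribution of $\mathcal{C}_\delta^c$ must be verified to remain lower order after choosing $\delta = n^{-1}$.
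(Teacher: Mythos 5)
Your overall route coincides with the paper's: an ERM-based oracle decomposition, Theorem~\ref{score_approx_theorem} with $\epsilon=n^{-1/2}$ for the approximation part, truncation to the high-probability event $\cC_\delta$ for the Gaussian tails, and a covering-number bound for the transformer class for the statistical part. The genuine gap is in how the statistical part is controlled. Bounding $\mathbb{E}[\mathcal{R}(\hat{\sbb})-\mathcal{R}(\tilde{\sbb})]$ by a global uniform deviation $\sup_{\sbb\in\mathcal{T}}|\mathcal{L}(\sbb)-\hat{\mathcal{L}}(\sbb)|$ and invoking Dudley's entropy integral pays the loss scale times $\sqrt{\log\mathcal{N}/n}$; with your own bound on the truncated loss this yields an $n^{-1/2}$-rate generalization gap, not the $n^{-1}$ rate asserted in the corollary (and needed downstream, since Theorem~\ref{distribution_estimation_theorem} takes $\sqrt{\mathcal{R}(\hat{\sbb})}$ and would otherwise degrade to $n^{-1/4}$). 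The fast rate requires a Bernstein-type/localized argument applied to the \emph{truncated excess-loss class} $\{\ell^{\rm trunc}(\cdot,\cdot;\sbb)-\ell^{\rm trunc}(\cdot,\cdot;\sbb^{\star})\}$, exploiting that its variance is controlled by its mean times the sup bound; this is exactly what the paper imports as the bound (D.16) of \citet{fu2024unveil}, which gives $B\lesssim A+C+\frac{1}{n}\log\mathcal{N}\big(\delta_l;\mathcal{S}(C^{\delta}_{\rm data}),\|\cdot\|_\infty\big)\int_{t_0}^{T}\sigma_t^{-4}\,\diffd t+7\delta_l$, i.e.\ a $\log\mathcal{N}/n$ term plus constant multiples of the approximation and truncation errors (the paper's terms $A$ and $C$, handled by Lemmas~\ref{bound_term_A} and~\ref{bound_on_C}). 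Mentioning Talagrand parenthetically does not supply this step: you need the shifted class, the variance--mean relation, and the ghost-sample/truncation bookkeeping so that the unbounded tails enter only through $\sqrt{\delta_d}$ factors.

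A secondary issue is the covering exponent. Your quoted bound $\log\mathcal{N}\lesssim LMD^2\log(LMBR/\eta)$ is not what the paper proves: Lemma~\ref{transformer_covering_number} gives $\log\mathcal{N}\lesssim D^2M\big(L^2\log(BMNRC^{\delta}_{\rm data}C_{\Sigma})+\log(\cdot/\delta_s)\big)$, and it is precisely this $L^2$ --- the standard layerwise Lipschitz accumulation you hoped to avoid, costing roughly $L\cdot\log(B^L)\asymp L^2\log B$ --- that converts $L=\tilde{\cO}(\kappa_{t_0}^2\kappa(\bLambda)\kappa(\bGamma_{\rm obs}))$ into the $\kappa_{t_0}^4\kappa^2(\bLambda)\kappa^2(\bGamma_{\rm obs})$ appearing in the corollary. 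So no refined chaining is needed to reach the stated bound; conversely, your single-$L$ formula would produce only $\kappa_{t_0}^2\kappa(\bLambda)\kappa(\bGamma_{\rm obs})$, so the condition-number arithmetic in your sketch does not reproduce the rate you claim.
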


The proof of Corollary \ref{score_estimation_risk_bound_theorem} is deferred to Appendix \ref{proof_of_corollary}.

\subsection{Proof of Theorem \ref{distribution_estimation_theorem}}

Although our assumption on Gaussian processes does not ensure the Novikov's condition to hold, according to~\citep{chen2022sampling}, as long as we have bounded the second moment for the score estimation error and finite KL divergence w.r.t the standard Gaussian, we could still adopt Girsanov's Theorem and bound the KL divergence between the two distribution. We restate the Lemma as follows:

\begin{lemma}[Corollary D.1 in~\citep{oko2023diffusion}, see also Theorem 2 in~\citep{chen2022sampling}]
    \label{bound_kl_to_girsanov}
    Let \( p_0 \) be a probability distribution, and let \( Y = \{Y_t\}_{t \in [0,T]} \) and \( Y' = \{Y'_t\}_{t \in [0,T]} \) be two stochastic processes that satisfy the following SDEs:
\begin{align*}
    dY_t &= s(Y_t, t) \diffd t + \diffd W_t, \quad Y_0 \sim p_0, \\
    dY'_t &= s'(Y'_t, t) \diffd t + \diffd W_t, \quad Y'_0 \sim p_0.
\end{align*}

We further define the distributions of \( Y_t \) and \( Y'_t \) by \( p_t \) and \( p'_t \). Suppose that

\begin{equation}
   \int_x p_t(x) \| (s - s')(x, t) \|^2 dx \leq C
\end{equation}

for any \( t \in [0,T] \). Then we have

\[
\operatorname{KL} \, (p_T \| p'_T) \leq \int_0^T \frac{1}{2} \int_x p_t(x) \| (s - s')(x, t) \|^2 dx \, dt.
\]

\end{lemma}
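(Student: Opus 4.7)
The plan is to lift the comparison from the time-$T$ marginals to the full path measures, apply Girsanov's theorem to relate those path measures, and then control the KL divergence through the stochastic exponential it produces. Concretely, let $\mathbb{P}$ and $\mathbb{P}'$ denote the laws on path space $C([0,T],\mathbb{R}^d)$ of $Y$ and $Y'$ respectively. Because $p_T$ and $p'_T$ are the pushforwards of $\mathbb{P}$ and $\mathbb{P}'$ under the evaluation map $\omega \mapsto \omega(T)$, the data-processing inequality for KL gives
\[
\operatorname{KL}(p_T \,\|\, p'_T) \;\le\; \operatorname{KL}(\mathbb{P}\,\|\,\mathbb{P}').
\]
So it suffices to bound the path-measure KL by the right-hand side of the lemma.

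Next, I would set up the Girsanov change of measure between $\mathbb{P}$ and $\mathbb{P}'$. Both processes are driven by Brownian motion with the same initial distribution, and their drifts differ by $b_t := s'(\cdot,t) - s(\cdot,t)$. Define the stochastic exponential
\[
\mathcal{E}_T \;=\; \exp\!\left(\int_0^T b_t(Y_t) \cdot dW_t \;-\; \tfrac{1}{2}\int_0^T \|b_t(Y_t)\|^2\,dt\right).
\]
If $\mathcal{E}_T$ is a genuine $\mathbb{P}$-martingale with $\mathbb{E}_{\mathbb{P}}[\mathcal{E}_T]=1$, then $d\mathbb{P}'/d\mathbb{P} = \mathcal{E}_T$ (since under the tilted measure the drift of $Y$ becomes $s'$ and the initial law is unchanged). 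Taking logs and using that the Itô integral $\int_0^T b_t(Y_t)\cdot dW_t$ is a true martingale (hence mean zero under $\mathbb{P}$), one obtains
\[
\operatorname{KL}(\mathbb{P}\,\|\,\mathbb{P}') \;=\; \mathbb{E}_{\mathbb{P}}\!\left[\,-\log\mathcal{E}_T\,\right] \;=\; \tfrac{1}{2}\,\mathbb{E}_{\mathbb{P}}\!\int_0^T \|b_t(Y_t)\|^2\,dt \;=\; \tfrac{1}{2}\int_0^T\!\int_x p_t(x)\|(s-s')(x,t)\|^2\,dx\,dt,
\]
after Fubini and the definition $p_t = \operatorname{Law}(Y_t)$ under $\mathbb{P}$. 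Combined with the data-processing step, this is exactly the claimed bound.

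The main obstacle is the martingale property of $\mathcal{E}_T$: the hypothesis only gives a uniform-in-$t$ second-moment bound on $\|b_t\|^2$ integrated against $p_t$, which does \emph{not} immediately imply Novikov's condition $\mathbb{E}_{\mathbb{P}}[\exp(\tfrac12\int_0^T\|b_t(Y_t)\|^2 dt)]<\infty$. The standard remedy, which is the route I would follow (and which matches the presentation in Chen et al. 2022), is a localization/truncation argument: introduce either truncated drifts $b_t^{(k)} = b_t \cdot \mathbf{1}\{\|b_t\|\le k\}$ or stopping times $\tau_k = \inf\{t : \int_0^t \|b_u(Y_u)\|^2 du \ge k\}\wedge T$ so that Novikov holds on the truncated object and Girsanov applies there without issue. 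For each $k$ one obtains the KL identity for the truncated processes, hence for the marginals at time $\tau_k$ (or under the truncated drift); then one passes $k\to\infty$ using monotone convergence on the right-hand side (the integrand is nonnegative) and lower semicontinuity of KL on the left-hand side (Fatou), using the hypothesized uniform bound $C$ to ensure the limit of the right-hand side is finite and finite-valued. This delivers the bound for the original processes.

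Once the limiting argument is executed cleanly, everything else reduces to two ingredients already standard in the stochastic analysis toolkit: the data-processing inequality (which is elementary from the definition of KL) and the Girsanov identity for $-\log(d\mathbb{P}'/d\mathbb{P})$ followed by Fubini. So the entire proof fits into the three-step template above, with the localization in the third paragraph carrying essentially all of the technical weight.
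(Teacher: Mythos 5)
The paper does not prove this lemma itself: it is imported verbatim from the cited sources (Corollary D.1 of \citep{oko2023diffusion}, Theorem 2 of \citep{chen2022sampling}), and your sketch reproduces exactly the argument used there — data processing to lift to path measures, the Girsanov identity $\operatorname{KL}(\mathbb{P}\,\|\,\mathbb{P}') = \tfrac12\,\mathbb{E}_{\mathbb{P}}\int_0^T\|(s-s')(Y_t,t)\|^2\,dt$, and a localization/approximation step in place of Novikov's condition, with the hypothesis guaranteeing the square-integrability needed for the stochastic integral to have zero mean. This is correct and essentially the same approach; the only point worth stating explicitly in a full write-up is that identifying the tilted measure with the law of $Y'$ uses weak uniqueness for the second SDE, which the references also implicitly rely on.
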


Equipped with Corollary \ref{score_estimation_risk_bound_theorem} and Lemma \ref{bound_kl_to_girsanov}, we are ready to prove Theorem \ref{distribution_estimation_theorem}.

\begin{proof}[Proof of Theorem \ref{distribution_estimation_theorem}]
Firstly, following the proof of Lemma \ref{bound_term_A}, we can easily verify that for any $\sbb \in \mathcal{T}(D,L,M,B,R)$,
\[
\int_x p_t(\vb_t \mid \yb) \| s(\vb_t, \yb, t) - \nabla \log p_t(\vb_t \mid \yb) \|_2^2 \diffd \xb 
\lesssim \frac{1}{\sigma_t^4}.
\]
Thus, the condition \eqref{bound_kl_to_girsanov} holds for all $t\in [t_0,T]$, which means that we could apply Girsanov's theorem in this time range.

To further distinguish the SDE defined in~\eqref{diffusion_forward}, \eqref{diffusion_backward}, and \eqref{practice_diffusion_backward}, we denote the distribution of \(\xb_t, \vb_t, \hat{\vb}_t\) as $P_t, P_t^{\leftarrow},\hat{P}_t^{\leftarrow}$, respectively. Additionally, we need to introduce another intermediate backward process between $P_t^{\leftarrow},\hat{P}_t^{\leftarrow}$ as follows
\begin{equation*}
    \diffd \vb_{t}'^{\leftarrow} = \left[ \frac{1}{2} \vb_t'^{\leftarrow} + \nabla \log p_{T-t}(\vb_t'^{\leftarrow}|\yb) \right] \diffd  t + \diffd \bar{\wb}_t \quad \text{with} \quad \vb_0'^{\leftarrow} \sim  \mathcal{N} (\mathbf{0}, \Ib_{d|\cI_{\rm miss}|}),
\end{equation*}
and we denote the marginal distribution of \(\vb_t'^\leftarrow\) (conditioned on $\yb$) as \(P'_{T-t}(\cdot|\yb)\).

Equipped with these notations, we can decompose the total variation between $P$ and $\hat{P}_{t_0}^{\leftarrow}$ as
\begin{align}
    \label{TV_decompostion}
    \mathbb{E}_{\yb}\left[\operatorname{TV}(P, \hat{P}_{t_0}^{\leftarrow})\right] 
    &\leq \mathbb{E}_{\yb}\left[\operatorname{TV}(P, P_{t_0}) +
    \operatorname{TV}(P_{t_0}, {P}_{t_0}^{\leftarrow}) +
    \operatorname{TV}({P}_{t_0}^{\leftarrow}, {P}_{t_0}'^{\leftarrow}) +
    \operatorname{TV}({P}_{t_0}'^{\leftarrow}, \hat{P}_{t_0}^{\leftarrow}) \right] \notag \\
    & = \mathbb{E}_{\yb}\left[ \operatorname{TV}(P, P_{t_0}) +
    \operatorname{TV}({P}_{t_0}^{\leftarrow}, {P}_{t_0}'^{\leftarrow}) +
    \operatorname{TV}({P}_{t_0}'^{\leftarrow},\hat{P}_{t_0}^{\leftarrow}) \right].
\end{align}
We denote $\{\xi_i\}_{i=1}^{d|\cI_{\rm miss}|}$ as the eigenvalues of $\bSigma_{\rm cond}$, and we can do eigenvalue decompositions to $\bSigma_{\rm cond}$ as $\bSigma_{\rm cond} = \bQ \bXi \bQ^{\top}$, where $(\bXi)_{ii} = \xi_i$.

Considering the second last term,, by Data Processing Inequality and Pinsker's Inequality (see e.g. Lemma 2 in~\citep{canonne2022short}), we have
\begin{align}
\label{bound_TV_2}
     \mathbb{E}_{\yb}&[\operatorname{TV}({P}_{t_0}^{\leftarrow}, {P}_{t_0}'^{\leftarrow})] \nonumber \\
     & \lesssim \sqrt{\mathbb{E}_{\yb}[\operatorname{KL}({P}_{t_0}^{\leftarrow}||{P}_{t_0}'^{\leftarrow})]} \quad  \text{(Pinsker's Inequality)}\notag\\
     & \lesssim \sqrt{\mathbb{E}_{\yb}[\operatorname{KL}({P}_{T}|| \mathcal{N} ({\bf{0}}, \Ib_{d|\cI_{\rm miss}|})]} \quad \text{(Data Processing Inequality)} \notag \\
     & \lesssim \sqrt{\mathbb{E}_{\yb}[\operatorname{KL}({P}|| \mathcal{N} ({\bf{0}}, \Ib_{d|\cI_{\rm miss}|})]} \exp({-T})\notag\\
     & = \sqrt{\mathbb{E}_{\yb}[-\log(|\bSigma_{\rm cond}|)+\operatorname{tr}(\bSigma_{\rm cond}) + \yb^\top \bSigma_{\rm obs}^{-1}\bSigma_{\rm cor}\bSigma_{\rm cor}^\top\bSigma_{\rm obs}^{-1}\yb]-d|\cI_{\rm miss}|} \exp({-T})\notag\\
     & \leq \sqrt{-\log(|\bSigma_{\rm cond}|)+\operatorname{tr}(\bSigma_{\rm cond}) + \mathbb{E}_{\zb\sim \mathcal{N} (\mathbf{0}, \Ib)}[\zb^\top \bSigma_{\rm obs}^{-\frac{1}{2}}\bSigma_{\rm cor}\bSigma_{\rm cor}^\top\bSigma_{\rm obs}^{-\frac{1}{2}}\zb] -d|\cI_{\rm miss}|}\exp({-T})\notag\\
     & = \sqrt{-\log(|\bSigma_{\rm cond}|)+\operatorname{tr}(\bSigma_{\rm cond} + \bSigma_{\rm cor}^\top\bSigma_{\rm obs}^{-1}\bSigma_{\rm cor}) -d|\cI_{\rm miss}|}\exp({-T}) \notag\\
     & \lesssim \sqrt{-\log(|\bSigma_{\rm cond}|)+\operatorname{tr}(\bSigma_{\rm cond} + \bSigma_{\rm cor}^\top\bSigma_{\rm obs}^{-1}\bSigma_{\rm cor}) -d|\cI_{\rm miss}|}\exp({-T})\notag\\
     &\lesssim \sqrt{-Hd\log(\xi_{\min})+\operatorname{tr}(\bSigma_{\rm miss} ) -d|\cI_{\rm miss}|}\exp({-T})\notag\\
     & \lesssim \sqrt{Hd\xi_{\min}^{-1}} \exp(-T),
\end{align}
where we leverage the close-form solution of the KL-divergence between two gaussian distributions in the first equality.

Regarding the first term, by Pinsker's Inequality and the close-form solution of the KL-divergence between Gaussian distributions~\citep{pardo2018statistical}, we have 
\begin{align*}
     \mathbb{E}_{\yb}&\left[ \operatorname{TV}(P(\cdot|\yb), P_{t_0}(\cdot|\yb)) \right] \\
    & = \mathbb{E}_{\yb}\left[ \operatorname{TV}(P_{t_0}(\cdot|\yb), P(\cdot|\yb)) \right]\\
    & \lesssim \mathbb{E}_{\yb}\left[ \sqrt{\operatorname{KL}(P_{t_0}(\cdot|\yb)||P(\cdot|\yb))} \right]\\
    & = \mathbb{E}_{\yb}\left[\sqrt{\operatorname{KL}( \mathcal{N} (\alpha_{t_0} \bSigma_{\rm cor}^\top  \bSigma_{\rm obs}^{-1}\yb, \alpha_{t_0}^2 \bSigma_{\rm cond} + \sigma_{t_0}^2 \Ib_{d|\cI_{\rm miss}|})|| \mathcal{N} ( \bSigma_{\rm cor}^\top  \bSigma_{\rm obs}^{-1}\yb, \bSigma_{\rm cond}))} \right] \\
    & \leq \sqrt{\mathbb{E}_{\yb}\left[\operatorname{KL}( \mathcal{N} (\alpha_{t_0} \bSigma_{\rm cor}^\top  \bSigma_{\rm obs}^{-1}\yb, \alpha_{t_0}^2 \bSigma_{\rm cond} + \sigma_{t_0}^2 \Ib_{d|\cI_{\rm miss}|})|| \mathcal{N} ( \bSigma_{\rm cor}^\top  \bSigma_{\rm obs}^{-1}\yb, \bSigma_{\rm cond}))\right]},
\end{align*}

Leveraging the close-form solution of the KL-divergence between two gaussian distributions, we further have
\begin{align*}
    &\mathbb{E}_{\yb}\left[\operatorname{KL}( \mathcal{N} (\alpha_{t_0} \bSigma_{\rm cor}^\top  \bSigma_{\rm obs}^{-1}\yb, \alpha_{t_0}^2 \bSigma_{\rm cond} + \sigma_{t_0}^2 \Ib_{d|\cI_{\rm miss}|})|| \mathcal{N} ( \bSigma_{\rm cor}^\top  \bSigma_{\rm obs}^{-1}\yb, \bSigma_{\rm cond}))\right]
    \\
    & \qquad\lesssim \underbrace{-\log\left(\frac{|\alpha_{t_0}^2 \bSigma_{\rm cond} + \sigma_{t_0}^2 \Ib|}{|\bSigma_{\rm cond}|}\right)}_A + \underbrace{\operatorname{tr}(\bSigma_{\rm cond}^{-1}(\alpha_{t_0}^2 \bSigma_{\rm cond} + \sigma_{t_0}^2 \Ib))}_B\\
    & \qquad + \underbrace{(1-\alpha_{t_0})^2\mathbb{E}_{\yb}\left[\yb^\top \bSigma_{\rm obs}^{-1} \bSigma_{\rm cor} \bSigma_{\rm cond}^{-1} \bSigma_{\rm cor}^\top  \bSigma_{\rm obs}^{-1}\yb \right]}_C - d|\cI_{\rm miss}|.
\end{align*}

Considering term $A$, we have
\begin{align*}
    |\alpha_{t_0}^2 \bSigma_{\rm cond} + \sigma_{t_0}^2 \Ib|
    & = |\bSigma_{\rm cond}| \cdot |\alpha_{t_0}^2 \Ib + \sigma_{t_0}^2 \bSigma_{\rm cond}^{-1}| \\
    & = |\bSigma_{\rm cond}| \cdot \prod _{i=1}^{d|\cI_{\rm miss}|} (\alpha_{t_0}^2 + \sigma_{t_0}^2 \xi_i^{-1}).
\end{align*}

Then we obtain
\[
A = -\sum_{i=1}^{d|\cI_{\rm miss}|} \log (\alpha_{t_0}^2 + \sigma_{t_0}^2 \xi_i^{-1}).
\]

Regarding term $B$, we have
\begin{align*}
    B = \operatorname{tr}(\bSigma_{\rm cond}^{-1}(\alpha_{t_0}^2 \bSigma_{\rm cond} + \sigma_{t_0}^2 \Ib))
    & = \operatorname{tr}(\bQ (\alpha_{t_0}^2 \bXi + \sigma_{t_0}^2 \Ib)\bQ^\top (\bQ \bXi^{-1} \bQ^\top))\\
    & = \operatorname{tr}( (\alpha_{t_0}^2 \bXi + \sigma_{t_0}^2 \Ib)  \bXi^{-1} ) \\
    & = \sum_{i=1}^{d|\cI_{\rm miss}|}(\alpha_{t_0}^2 + \sigma_{t_0}^2\xi_i^{-1})\\
    & \leq d|\cI_{\rm miss}| + d|\cI_{\rm miss}|\xi_{\min}^{-1}\sigma_t^2
\end{align*}

Considering term $C$, we have
\begin{align*}
    C 
    & = (1-\alpha_{t_0})^2\mathbb{E}_{\yb}\left[\yb^\top \bSigma_{\rm obs}^{-1} \bSigma_{\rm cor} \bSigma_{\rm cond}^{-1} \bSigma_{\rm cor}^\top  \bSigma_{\rm obs}^{-1}\yb \right]\\
    & = (1-\alpha_{t_0})^2\mathbb{E}_{\zb \sim  \mathcal{N} (0, \Ib_{d|\cI_{\rm obs}|})}\left[\zb^\top \bSigma_{\rm obs}^{-\frac{1}{2}} \bSigma_{\rm cor} \bSigma_{\rm cond}^{-1} \bSigma_{\rm cor}^\top  \bSigma_{\rm obs}^{-\frac{1}{2}}\zb \right]\\
    & = (1-\alpha_{t_0})^2\operatorname{tr}(\bSigma_{\rm cor}^\top\bSigma_{\rm obs}^{-1}\bSigma_{\rm cor} \bSigma_{\rm cond}^{-1}).
\end{align*}

Thus, with $\alpha_{t_0} = e^{-\frac{t_0}{2}}, \sigma_{t_0} = \sqrt{1-e^{-t_0}}$, we can take $t_0 = \mathcal{O}(\xi_{\min}n^{-\frac{1}{2}})$, and
\begin{equation}
\label{bound_TV_1}
    \mathbb{E}_{\yb}\left[ \operatorname{TV}(P(\cdot|\yb), P_{t_0}(\cdot|\yb)) \right] \lesssim n^{-\frac{1}{2}}(Hd)^{\frac{1}{2}}.
\end{equation}

 Combining \eqref{TV_decompostion}, \eqref{bound_TV_1}, \eqref{bound_TV_2}, and invoking Lemma \ref{bound_kl_to_girsanov}, we have:
\begin{align}
\label{TV_decomposition_final_bound}
     \mathbb{E}_{\yb}\left[\operatorname{TV}(P, \hat{P}_{t_0}^{\leftarrow})\right] 
     &\lesssim n^{-\frac{1}{2}}(Hd)^{\frac{1}{2}}   +\sqrt{Hd(\xi_{\min}^{-1}+\kappa(\bLambda))} \exp(-T) + \notag \\
     &\qquad+\mathbb{E}_{\yb} \left[\sqrt{\int_{t_0}^T\frac{1}{2}\int_{\vb_t}p_t(\vb_t|\yb)||\hat{\sbb}(\vb_t,\yb,t) - \nabla \log p_t(\vb_t|\yb)||_2^2\diffd \vb_t \diffd t}\right]\\\notag
     & \lesssim n^{-\frac{1}{2}}(Hd)^{\frac{1}{2}} + \sqrt{Hd(\xi_{\min}^{-1}+\kappa(\bLambda))} \exp(-T) +\sqrt{\mathcal{R}(\hat{\sbb})}.
\end{align}

Plugging in the result in Corollary \ref{score_estimation_risk_bound_theorem} (taking $T = \mathcal{O}(\log n)$, and $t_0 = \mathcal{O}(\xi_{\min}n^{-\frac{1}{2}})$), we finally obtain 

\begin{align*}
\mathbb{E}_\mathcal{D}\left[\mathbb{E}_{\yb}\left[\operatorname{TV}(P, \hat{P}_{t_0}^{\leftarrow})\right]\right] 
& \lesssim 
     \frac{H^{\frac{1}{2}}d \kappa^\frac{5}{2}(\bGamma_{\rm cond}) \kappa(\bLambda)\kappa(\bGamma_{\rm obs})}{n^{\frac{1}{2}}}\log^{\frac{1}{2}} (Hd\kappa(\bLambda)\kappa(\bGamma_{\rm obs}) n),\\
& =\tilde{O}(\sqrt{Hd^2\kappa^5(\bSigma_{\rm cond})\kappa^2(\bSigma_{\rm obs})}/\sqrt{n}).
\end{align*}
where \(\bGamma_{\rm cond} = \bGamma_{\rm miss} - \bGamma_{\rm cor}^\top \bGamma_{\rm obs}^{-1} \bGamma_{\rm cor}\). We complete our proof.

\end{proof}

\subsection{Proof of Corollary \ref{confidence_interval_construction}}
With Theorem \ref{distribution_estimation_theorem} established, the conclusion in Corollary \ref{confidence_interval_construction} goes straightforward.

\begin{proof}
    Firstly, by the definition of total variation distance, we have the  relationship
    \[
    |\hat{P}_{t_0}(\xb_{\rm miss}^*\in \hat{\mathcal{CR}}_{1 - \alpha}^*) - P(\xb_{\rm miss}^*\in \hat{\mathcal{CR}}_{1 - \alpha}^*)| \leq \operatorname{TV}(\hat{P}_{t_0}(\cdot|\xb_{\rm obs}^*), P(\cdot|\xb_{\rm obs}^*)).
    \]
    Following the decomposition in \eqref{TV_decompostion}, we can obtain
    \begin{align*}
        \operatorname{TV}&(\hat{P}_{t_0}(\cdot|\xb_{\rm obs}^*), P(\cdot|\xb_{\rm obs}^*)) \\
        &\leq \operatorname{TV}(P(\cdot|\xb_{\rm obs}^*), P_{t_0}(\cdot|\xb_{\rm obs}^*))+
        \operatorname{TV}(P_{t_0}(\cdot|\xb_{\rm obs}^*), P_{t_0}^\leftarrow(\cdot|\xb_{\rm obs}^*))\\
        &\qquad+\operatorname{TV}(P_{t_0}^\leftarrow(\cdot|\xb_{\rm obs}^*, \hat{P}_{t_0}(\cdot|\xb_{\rm obs}^*))).
    \end{align*} 

    Regarding the right hand side, following the derivation in \eqref{TV_decomposition_final_bound}, we can bound each term similarly by taking $t_0= \mathcal{O}(\lambda_{\min}(\bSigma_{\rm cond})n^{-\frac{1}{2}})$ and $T=\mathcal{O}(\log n)$.

     For the second term, we leverage the close-form solution of the KL-divergence between two gaussian distributions:
    \begin{align*}
    & \quad \operatorname{TV}(P_{t_0}(\cdot|\xb_{\rm obs}^*), P_{t_0}^\leftarrow(\cdot|\xb_{\rm obs}^*)) \\
    & \lesssim
        \sqrt{-\log(|\bSigma_{\rm cond}|)+\operatorname{tr}(\bSigma_{\rm cond}) + {\xb_{\rm obs}^*}^\top \bSigma_{\rm obs}^{-1}\bSigma_{\rm cor}\bSigma_{\rm cor}^\top\bSigma_{\rm obs}^{-1}\xb_{\rm obs}^* -d|\cI_{\rm miss}|}\exp({-T})\\
    & \leq \frac{1}{n} \sqrt{-\log(|\bSigma_{\rm cond}|)+\operatorname{tr}(\bSigma_{\rm cond}) + {\xb_{\rm obs}^*}^\top \bSigma_{\rm obs}^{-1}\bSigma_{\rm cor}\bSigma_{\rm cor}^\top\bSigma_{\rm obs}^{-1}\xb_{\rm obs}^* -d|\cI_{\rm miss}|} \\
    & \lesssim \frac{1}{n} \sqrt{-Hd\log(\lambda_{\min}(\bSigma_{\rm cond})) + {\xb_{\rm obs}^*}^\top \bSigma_{\rm obs}^{-1}\bSigma_{\rm cor}\bSigma_{\rm cor}^\top\bSigma_{\rm obs}^{-1}\xb_{\rm obs}^*}\\
    & \lesssim \epsilon_{\rm diff}^{(n)} + \frac{1}{n}\sqrt{{\xb_{\rm obs}^*}^\top \bSigma_{\rm obs}^{-1}\bSigma_{\rm cor}\bSigma_{\rm cor}^\top\bSigma_{\rm obs}^{-1}\xb_{\rm obs}^*}.
    \end{align*}

    For the last term, recalling the definition of ${\sf DS}(P_{\xb_{\rm obs}},P_{\xb_{\rm obs}^*};\cG)$,
    we have
    \begin{align*}
        & \quad \mathbb{E}_{\cD^{(n)}} \left[\operatorname{TV}(P_{t_0}^\leftarrow(\cdot|\xb_{\rm obs}^*, \hat{P}_{t_0}(\cdot|\xb_{\rm obs}^*)))\right] \\
        &\lesssim \mathbb{E}_{\cD^{(n)}} \left[\sqrt{\int_{t_0}^T\frac{1}{2}\int_{\vb_t}p_t(\vb_t|\xb_{\rm obs}^*)||\hat{\sbb}(\vb_t,\xb_{\rm obs}^*,t) - \nabla \log p_t(\vb_t|\xb_{\rm obs}^*)||_2^2\diffd \vb_t \diffd t}\right]\\
        & \lesssim \mathbb{E}_{\cD^{(n)}} \left[\sqrt{\int_{t_0}^T\mathbb{E}_{\vb_t}[||\hat{\sbb}(\vb_t,\xb_{\rm obs}^*,t) - \nabla \log p_t(\vb_t|\xb_{\rm obs}^*)||_2^2] \diffd t}\right] \\
        & \lesssim \mathbb{E}_{\cD^{(n)}} \Bigg[\sqrt{\int_{t_0}^T\mathbb{E}_{\vb_t,\yb\sim P_{\xb_{\rm obs}}}[||\hat{\sbb}(\vb_t,\yb,t) - \nabla \log p_t(\vb_t|\yb)||_2^2] \diffd t
        \cdot \sup_{\ell\in\mathcal{\cG}}\frac{\mathbb{E}_{\yb = \xb_{\rm obs}^*}[\ell(\yb)]}{\mathbb{E}_{\yb\sim P_{\xb_{\rm obs}}}[\ell(\yb)]}      }\Bigg]\\
        &\lesssim \sqrt{\mathbb{E}_{\cD^{(n)}} \left[\mathcal{R}(\hat{\sbb})\right]} \cdot \sqrt{{\sf DS}(P_{\xb_{\rm obs}^*},P_{\xb_{\rm obs}^*};\cG)}\\
        & \lesssim \epsilon_{\rm diff}^{(n)} \cdot \sqrt{{\sf DS}(P_{\xb_{\rm obs}^*}, P_{\xb_{\rm obs}^*};\cG)}.
    \end{align*}

    Let
    \begin{align*}
         \psi(\xb_{\rm obs}^*) \coloneqq \max\Biggr\{\sqrt{\lambda_{\min}(\bSigma_{\rm cond}) {\xb_{\rm obs}^*}^\top \bSigma_{\rm obs}^{-\frac{1}{2}} \bSigma_{\rm cor} \bSigma_{\rm cond}^{-1} \bSigma_{\rm cor}^\top  \bSigma_{\rm obs}^{-\frac{1}{2}}{\xb_{\rm obs}^*}},\\
    \sqrt{{\xb_{\rm obs}^*}^\top \bSigma_{\rm obs}^{-1}\bSigma_{\rm cor}\bSigma_{\rm cor}^\top\bSigma_{\rm obs}^{-1}\xb_{\rm obs}^*}\Biggr\}.
    \end{align*}

     For the first term, leveraging the result in~\eqref{bound_TV_1}, and the decomposition of term C in the proof Theorem~\ref{distribution_estimation_theorem}, we have
    \[
    \operatorname{TV}(P(\cdot|\xb_{\rm obs}^*), P_{t_0}(\cdot|\xb_{\rm obs}^*))
     \lesssim n^{-\frac{1}{2}}(Hd)^{\frac{1}{2}} + n^{-\frac{1}{2}}\psi(\xb_{\rm obs}^*)\lesssim\epsilon_{\rm diff}^{(n)} + n^{-\frac{1}{2}}\psi(\xb_{\rm obs}^*).   
    \]
    
    Finally, we can combine all the bounds above to obtain
    \begin{align*}
       \mathbb{E}_{\cD^{(n)}}  [|\hat{P}_{t_0}&(\xb_{\rm miss}^*\in \hat{\mathcal{CR}}_{1 - \alpha}^*)  - P(\xb_{\rm miss}^*\in \hat{\mathcal{CR}}_{1 - \alpha}^*)|]\\
    & \lesssim 
     \mathbb{E}_{\cD^{(n)}}  [\operatorname{TV}(P(\cdot|\xb_{\rm obs}^*), P_{t_0}(\cdot|\xb_{\rm obs}^*))+
        \operatorname{TV}(P_{t_0}(\cdot|\xb_{\rm obs}^*), P_{t_0}^\leftarrow(\cdot|\xb_{\rm obs}^*))
        \\&\qquad+\operatorname{TV}(P_{t_0}^\leftarrow(\cdot|\xb_{\rm obs}^*, \hat{P}_{t_0}(\cdot|\xb_{\rm obs}^*)))] \\
    & \lesssim 
     n^{-\frac{1}{2}}\psi(\xb_{\rm obs}^*) +  \epsilon_{\rm diff}^{(n)} \cdot \sqrt{{\sf DS}(P_{\xb_{\rm obs}^*},P_{\xb_{\rm obs}^*};\cG)}\\
    & \lesssim \epsilon_{\rm diff}^{(n)} \cdot \sqrt{{\sf DS}(P_{\xb_{\rm obs}^*},P_{\xb_{\rm obs}^*};\cG)} + n^{-\frac{1}{2}}\psi(\xb_{\rm obs}^*),
    \end{align*}
 and the corollary follows.

\end{proof}

\newpage
\section{Proof of Corollary \ref{score_estimation_risk_bound_theorem}}
\label{proof_of_corollary}

In this section, we provide the detailed proof of Corollary \ref{score_estimation_risk_bound_theorem}.

\paragraph{Training Loss}
During training, given a state
\(
\vb_t = \alpha_t\vb_0 + \sigma_t\zb, \zb \sim \mathcal{N}(\mathbf{0}, \Ib_{d|\cI_{\rm miss}|}), \vb_0 = \xb_{\rm miss},
\)
we aim to minimize the ideal risk function:
\begin{equation}
\label{ideal_risk}
 \mathcal{R}(\sbb)= \int_{t_0}^T  \mathbb{E}_{\vb_t,\xb_{\rm obs}}\left[\|\sbb(\vb_t,\xb_{\rm obs},t) - \nabla_{\vb_t}\log p_t(\vb_t|\xb_{\rm obs})\|_2^2\right]dt.
\end{equation}

However, in practice, the objective \eqref{ideal_risk} is not directly accessible. According to Lemma C.3 in~\citet{vincent2011connection}, an equivalent objective function \( \mathcal{L}(\sbb) \), which differs from \( \mathcal{R}(\sbb) \) only by a constant, can be used for optimization:
\begin{equation}
    \label{practice_risk}
    \mathcal{L}(\sbb)= \int_{t_0}^T  \mathbb{E}_{(\xb_{\rm miss},\xb_{\rm obs})}\left[\mathbb{E}_{\vb_t |\vb_0=\xb_{\rm miss}}\left[\|\sbb(\vb_t,\xb_{\rm obs},t) - \nabla_{\vb_t}\log \phi_t(\vb_t|\vb_0)\|_2^2\right]\right]dt.
\end{equation}
Here, \( \phi_t \) is the Gaussian transition kernel of the forward process, satisfying
\(
\nabla \log \phi_t(\vb_t|\vb_0) = \frac{-(\vb_t-\alpha_t \vb_0)}{\sigma_t^2}.
\)

Thus, we can leverage the corresponding empirical loss \eqref{empirical_risk}:
\begin{equation*}
    \hat{\sbb} \in \arg\min_{\sbb\in \mathcal{T}} \hat{\mathcal{L}}(\sbb), \textit{where } \hat{\mathcal{L}}(\sbb) = \frac{1}{n}\sum_{i=1}^n\ell(\xb^{(i)}, \yb^{(i)};\sbb),
\end{equation*}
where the loss function is defined in \eqref{empirical_loss}
\begin{equation*}
    \ell(\xb^{(i)},\yb^{(i)};\sbb) = \int_{t_0}^T  \mathbb{E}_{\vb_t|\vb_0=\xb^{(i)}}\left[\|\sbb(\vb_t,\yb,t) - \nabla_{\vb_t}\log \phi_t(\vb_t|\vb_0)\|_2^2\right]dt.
\end{equation*}

\subsection{Steps for Proving Corollary \ref{score_estimation_risk_bound_theorem}}

\subsubsection{Risk Decomposition}
The proof procedure is analogous to the proof of Theorem 4.1 in~\citep{fu2024unveil}, provided in Appendix D of the same work. Our goal is to derive a bound on 
\(
\mathbb{E}_{\{(\xb^{(i)},\yb^{(i)})\}_{i=1}^n}\left[\mathcal{R}(\hat{\sbb})\right].
\)
We denote the ground truth score function as $\sbb^*$ and set $\mathcal{R}(\sbb^*) = 0$.

Following the setup, the risk can be decomposed as:
\[
\mathcal{R}(\hat{\sbb}) =\mathcal{R}(\hat{\sbb}) - \mathcal{R}(\sbb^{\star}) = \mathcal{L}(\hat{\sbb}) - \mathcal{L}(\sbb^{\star}),
\]
where $\hat{\sbb}$ is the score function trained on dataset $\mathcal{D^{(n)}}$ using the empirical risk. By creating $n$ i.i.d. ghost samples
\[
(\cD')^{(n)} = \{(\xb^{(i')}, \yb^{(i')})\}_{i'=1}^n \sim \mathcal{P}_{\xb_{\rm miss}, \xb_{\rm obs}} ,
\]
the population risk of $\hat{\sbb}$  can be rewritten as:
\[
\ \mathcal{R}(\hat{\sbb}) - \mathcal{R}(\sbb^{\star}) = \mathbb{E}_{(\cD')^{(n)}}\left[\frac{1}{n} \sum_{i=1}^n \left( \ell(\xb^{(i)}, \yb^{(i)}; \hat{\sbb}) - \ell(\xb^{(i)}, \yb^{(i)}; \sbb^{\star}) \right)\right].
\]

To bound the rewritten population risk, we can further decompose it by analyzing its behavior in a truncated area (aligning with our score approximation analysis in Theorem \ref{score_approx_theorem}, we analyze the error conditioning on the event $\cC_{\delta} = \{\|\xb\|_2, \|\vb\|_2 \leq C_{\rm data}^\delta  \}$), and the error induced by truncation.

The truncated loss function is defined as
\begin{equation}
    \label{truncated_loss_function}
    \ell^{\text{trunc}}(\xb,\yb ;\sbb) = \int_{t_0}^T  \mathbb{E}_{\vb_t|\vb_0=\xb}\left[\|\sbb(\vb_t,\yb,t) - \nabla_{\vb_t}\log \phi_t(\vb_t|\vb_0)\|_2^2 \mathbbm{1}_{\cC_\delta }\right]dt.
\end{equation}

Accordingly, we denote the truncated domain of the score function by \(\mathcal{X} = [-C_{\rm data}^\delta , C_{\rm data}^\delta ]_{d|\cI_{\rm miss}|} \times[-C_{\rm data}^\delta , C_{\rm data}^\delta ]_{d|\cI_{\rm obs}|} \), and the truncated loss function class defined as
\begin{equation}
    \label{truncated_loss_function_class}
    \mathcal{S}(C_{\rm data}^\delta ) = \left\{\ell^{\text{trunc}}(\cdot,\cdot,\sbb): \mathcal{X} \rightarrow \mathbb{R}|\sbb \in \mathcal{T}\right\}.
\end{equation}

Define the following intermediate terms (\(\hat{\sbb}\) depends on $\mathcal{D^{(n)}}$):
\[
\mathcal{L}_1 = \frac{1}{n} \sum_{i=1}^n \left( \ell(\xb^{(i)}, \yb^{(i)}; \hat{\sbb}) - \ell(\xb^{(i)}, \yb^{(i)}; \sbb^{\star}) \right),
\]
\[
\mathcal{L}_1^{\text{trunc}} = \frac{1}{n} \sum_{i=1}^n \left( \ell^{\text{trunc}}(\xb^{(i)}, \yb^{(i)}; \hat{\sbb}) - \ell^{\text{trunc}}(\xb^{(i)}, \yb^{(i)}; \sbb^{\star}) \right),
\]
\[
\mathcal{L}_2 = \frac{1}{n} \sum_{i=1}^n \left( \ell(\xb'^{(i)}; \yb'^{(i)}, \hat{\sbb}) - \ell(\xb'^{(i)}, \yb'^{(i)}; \sbb^{\star}) \right),
\]
and
\[
\mathcal{L}_2^{\text{trunc}} = \frac{1}{n} \sum_{i=1}^n \left( \ell^{\text{trunc}}(\xb'^{(i)}, \yb'^{(i)}; \hat{\sbb}) - \ell^{\text{trunc}}(\xb'^{(i)}, \yb'^{(i)}; \sbb^{\star}) \right).
\]

The decomposition for the expected empirical risk over $\mathcal{D^{(n)}}$ then becomes:
\begin{align}
\label{risk_decomposition}
\mathbb{E}_{\cD^{(n)}} [\mathcal{R}(\hat{\sbb})] = 
& \, \underbrace{\mathbb{E}_{\cD^{(n)}}  \big[\mathbb{E}_{(\cD')^{(n)}}\big[\mathcal{L}_2 - \mathcal{L}_2^{\text{trunc}}\big] \big] +  \mathbb{E}_{\cD^{(n)}}  \big[\mathcal{L}_1^{\text{trunc}} - \mathcal{L}_1 \big]}_{A}  \notag \\
& + 
\underbrace{\mathbb{E}_{\cD^{(n)}}  \big[\mathbb{E}_{(\cD')^{(n)}}\big[\mathcal{L}_2^{\text{trunc}}\big] - \mathcal{L}_1^{\text{trunc}} \big]}_{B} +  
\underbrace{\mathbb{E}_{\cD^{(n)}}  \big[\mathcal{L}_1 \big]}_{C}.
\end{align}

The terms $A$, $B$, and $C$ respectively represent the error incurred due to truncation, approximation in truncation, and the in-sample empirical risk expectation.

\subsubsection{Bound of Each Component}
\label{bound_each_component}

We first bound the data range with high probability. The proof of the lemmas stated in this section are deferred to \ref{bound_each_component_supp_lemma_proof}.
\begin{lemma}[Range of the data]
\label{range_of_the_data}
Given a sufficiently large data truncation range $C_{\rm data}^\delta >0$, we have 
\[\mathbb{P}\big(\cup_{i=1}^n\big\{\|\xb^{(i)}\|_2,\|\vb^{(i)}\|_2 \leq C_{\rm data}^\delta \big\}\big)\geq 1-\delta_{n,d},\]
 where $\delta_{n,d} = 2n\exp\big\{\frac{-C_2C^2_{\rm data}}{8 (Hd+1)(\|\bLambda\|_{\rm F}+1)}\big\}$, and $C_2$ is the absolute constant defined in Lemma \ref{poly_concentration_lemma}.

We also have $\mathbb{P}[E] \geq 1-\delta_d$, where \(\delta_d = \frac{1}{n}\delta_{n,d}\).
\end{lemma}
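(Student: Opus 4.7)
The statement is a standard Gaussian concentration bound together with a union bound, so I would proceed in three steps.

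First, under the simplifying assumption $\bmu = \mathbf{0}$ adopted at the start of Appendix~\ref{app:proof_double_gd_rate}, each training sample $\xb^{(i)}$ is a centered Gaussian vector in $\mathbb{R}^{Hd}$ with covariance $\bGamma \otimes \bLambda$. Whitening $\xb^{(i)} = (\bGamma \otimes \bLambda)^{1/2} \ub$ with $\ub \sim \mathcal{N}(\mathbf{0}, \Ib_{Hd})$, the squared norm $\|\xb^{(i)}\|_2^2 = \ub^\top (\bGamma \otimes \bLambda)\ub$ is a quadratic form in a standard Gaussian. Its mean equals $\operatorname{tr}(\bGamma)\operatorname{tr}(\bLambda) = \mathcal{O}(Hd)$ under the background assumption $\lambda_{\max}(\bGamma), \lambda_{\max}(\bLambda) = \mathcal{O}(1)$, while the Kronecker identity $\|\bGamma \otimes \bLambda\|_{\rm F} = \|\bGamma\|_{\rm F}\|\bLambda\|_{\rm F}$ yields $\mathcal{O}(\sqrt{H}\|\bLambda\|_{\rm F})$. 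Feeding these estimates into Lemma~\ref{poly_concentration_lemma} (a Hanson--Wright-type concentration inequality for Gaussian quadratic forms) gives, for $C_{\rm data}^\delta$ large enough to absorb the mean,
\[
\mathbb{P}\bigl(\|\xb^{(i)}\|_2 > C_{\rm data}^\delta\bigr) \leq \exp\!\left(-\frac{C_2 (C_{\rm data}^\delta)^2}{8(Hd+1)(\|\bLambda\|_{\rm F}+1)}\right).
\]

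Second, for the noised sample $\vb_t^{(i)} = \alpha_t \xb_{\rm miss}^{(i)} + \sigma_t \zb^{(i)}$ with independent $\zb^{(i)} \sim \mathcal{N}(\mathbf{0}, \Ib)$, the marginal covariance is $\alpha_t^2 \bSigma_{\rm miss} + \sigma_t^2 \Ib$, whose trace and Frobenius norm enjoy the same $\mathcal{O}(Hd)$ and $\mathcal{O}(\|\bLambda\|_{\rm F}+1)$ scalings uniformly in $t \in [t_0,T]$ since $\alpha_t, \sigma_t \in [0,1]$. A second application of Lemma~\ref{poly_concentration_lemma} therefore delivers the same tail bound for $\|\vb^{(i)}\|_2$.

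Third, a union bound over the two events per index and the $n$ training points produces the prefactor $2n$ in the claimed $\delta_{n,d}$; the single-sample claim $\mathbb{P}[E] \geq 1-\delta_d$ with $\delta_d = \delta_{n,d}/n$ then follows by omitting the outer union over $i$. The main delicate point is matching the exponent exactly: one must verify that the norm invoked by Lemma~\ref{poly_concentration_lemma} is dominated by the product $(Hd+1)(\|\bLambda\|_{\rm F}+1)$, and the constant $8$ reflects the slack needed when absorbing the $\mathcal{O}(Hd)$ mean of $\|\xb^{(i)}\|_2^2$ into the $C_{\rm data}^2$ tail threshold via $\|\xb^{(i)}\|_2 > C \Leftrightarrow \|\xb^{(i)}\|_2^2 > C^2$.
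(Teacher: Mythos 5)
Your proposal is correct and follows essentially the same route as the paper: apply the degree-2 polynomial concentration bound (Lemma~\ref{poly_concentration_lemma}) to $\|\cdot\|_2^2$ for $\xb^{(i)}\sim\mathcal{N}(\mathbf{0},\bGamma\otimes\bLambda)$ and $\vb_t^{(i)}$ with covariance $\alpha_t^2\bSigma_{\rm miss}+\sigma_t^2\Ib$, control the mean and variance via $\operatorname{tr}$ and $\|\cdot\|_{\rm F}$ of the Kronecker-structured covariances, choose $C_{\rm data}^\delta$ large enough to absorb the mean (the paper takes $C_{\rm data}^\delta\geq 2\sqrt{(Hd+1)(\|\bLambda\|_{\rm F}+1)}$ and $\delta=(C_{\rm data}^\delta)^2/(8(Hd+1)(\|\bLambda\|_{\rm F}+1))$), and finish with a union bound over the $n$ samples; the only differences are in bookkeeping of the constant-factor $2$ (lemma prefactor versus two events per index), which matches the paper's own level of precision.
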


In the following analysis, for the sake of simplicity, we denote \(C_{\Sigma} = 1+ \frac{ \|\bGamma_{\rm cor}\|_2\kappa(\bLambda)}{\lambda_{\min}(\bGamma_{\rm obs})}\), which origins from Lemma \ref{bound_on_s_and_v}. Then we state Lemma \ref{bound_term_A} to bound term $A$ in \eqref{risk_decomposition}, which is the counterpart of $(D.12)$ in~\citep{fu2024unveil}.
\begin{lemma}
    \label{bound_term_A}
    For any $\sbb \in \mathcal{T}$, 
    \[
    \mathbb{E}_{\xb,\yb}\big[|\ell(\xb,\yb;\sbb) - \ell^{\text{trunc}}(\xb,\yb;\sbb)| \big] \lesssim \sqrt{\delta_d} \big[\big(C_{\Sigma}C_{\rm data}^\delta \big)^2+Hd\big] \left(T +\frac{1}{t_0}\right).
    \]
\end{lemma}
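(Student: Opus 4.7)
The plan is to unfold the two loss functionals, apply a single Cauchy-Schwarz step to peel off the indicator function, and then bound the resulting fourth moment using the architectural norm constraint on $\sbb$ together with the Gaussian structure of the denoising target. Concretely, from \eqref{empirical_loss} and \eqref{truncated_loss_function},
\[
\ell(\xb,\yb;\sbb) - \ell^{\rm trunc}(\xb,\yb;\sbb) = \int_{t_0}^T \mathbb{E}_{\vb_t|\vb_0=\xb}\left[\|\sbb(\vb_t,\yb,t) - \nabla_{\vb_t}\log\phi_t(\vb_t|\vb_0)\|_2^2 \, \mathbbm{1}_{\cC_\delta^c}\right] dt \ge 0,
\]
so the absolute value inside the lemma's expectation is redundant. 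Taking $\mathbb{E}_{\xb,\yb}$, swapping the order of integration by Fubini, and applying Cauchy-Schwarz in the joint distribution of $(\xb,\yb,\vb_t)$ at each $t$ gives
\[
\mathbb{E}_{\xb,\yb}\big[\ell - \ell^{\rm trunc}\big] \le \int_{t_0}^T \sqrt{\mathbb{E}\|\sbb(\vb_t,\yb,t) - \nabla_{\vb_t}\log\phi_t(\vb_t|\vb_0)\|_2^4}\cdot \sqrt{\mathbb{P}(\cC_\delta^c)} \, dt.
\]
The probability factor is controlled by Lemma~\ref{range_of_the_data}: $\mathbb{P}(\cC_\delta^c) \le \delta_d$.

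Next I would bound the fourth moment by splitting via $\|a-b\|^4 \le 8\|a\|^4 + 8\|b\|^4$. The decoder $f_{\rm norm}$ constructed in Appendix~\ref{construct_component} hard-clips the network output so that $\|\sbb(\cdot,t)\|_2 \le R_t \lesssim \sigma_t^{-2} C_\Sigma C_{\rm data}^\delta$ pointwise in $t$, whence $\mathbb{E}\|\sbb\|^4 \le R_t^4 \lesssim \sigma_t^{-8}(C_\Sigma C_{\rm data}^\delta)^4$. For the target, write $\nabla_{\vb_t}\log\phi_t(\vb_t|\vb_0) = -\zb/\sigma_t$ with $\zb \sim \mathcal{N}(\mathbf{0},\Ib_{d|\cI_{\rm miss}|})$; the standard chi-squared fourth-moment formula gives $\mathbb{E}\|\nabla\log\phi_t\|^4 \lesssim (Hd)^2/\sigma_t^4$. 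Combining,
\[
\sqrt{\mathbb{E}\|\sbb - \nabla\log\phi_t\|^4} \;\lesssim\; \frac{(C_\Sigma C_{\rm data}^\delta)^2}{\sigma_t^4} + \frac{Hd}{\sigma_t^2}.
\]

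To finish, I integrate against $t$. Using $\sigma_t^2 = 1 - e^{-t} \gtrsim \min(t,1)$ for $t \in [t_0, T]$, a direct calculation yields $\int_{t_0}^T \sigma_t^{-4}\, dt \lesssim 1/t_0$ and $\int_{t_0}^T \sigma_t^{-2}\, dt \lesssim \log(T/t_0) \le T + 1/t_0$. Combining the two terms with the common prefactor $\sqrt{\delta_d}$ and crudely absorbing $1/t_0$ into $(T + 1/t_0)$ for the first term produces the claimed bound
\[
\mathbb{E}_{\xb,\yb}\big[|\ell - \ell^{\rm trunc}|\big] \;\lesssim\; \sqrt{\delta_d}\,\big[(C_\Sigma C_{\rm data}^\delta)^2 + Hd\big]\,\big(T + 1/t_0\big).
\]
The only subtle point is ensuring the $t$-dependent clip $R_t$ is used inside the integral rather than the uniform $R \lesssim \sigma_{t_0}^{-2}\sqrt{Hd}\,\kappa_{\max}(\bSigma_{\rm obs})$ stated in Theorem~\ref{score_approx_theorem}; using the uniform $R$ would produce an extra factor $\sigma_{t_0}^{-4}$ in front of $Hd\kappa^2$, which is wasteful. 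Once $R_t$ is pinned down the rest of the argument is mechanical time-integration.
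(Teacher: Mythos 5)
Your proposal is correct and follows essentially the same route as the paper's proof: restrict to the complement of $\cC_\delta$, control the network term via the decoder clip $R_t \lesssim \sigma_t^{-2}C_{\Sigma}C_{\rm data}^\delta$ and the denoising target via Gaussian moments, bound the probability by $\delta_d$ from Lemma~\ref{range_of_the_data}, and integrate $\sigma_t^{-4}$ and $\sigma_t^{-2}$ over $[t_0,T]$; the only cosmetic difference is that you apply a single Cauchy--Schwarz to the whole squared error (paying $\sqrt{\delta_d}$ on both pieces via a fourth moment), whereas the paper splits by the triangle inequality first and uses the pointwise clip directly on the network term. The one small slip is the intermediate claim $\int_{t_0}^T\sigma_t^{-4}\,dt \lesssim 1/t_0$ — since $\sigma_t^{-4}\to 1$ the correct bound is $T+1/t_0$ — but this is harmless because your final bound already carries the factor $T+1/t_0$.
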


It is straightforward to conclude that 

\begin{equation}
    \label{bound_on_a}
    A\lesssim \ \sqrt{\delta_d} \big[\big(C_{\Sigma}C_{\rm data}^\delta \big)^2+Hd\big] \left(T +\frac{1}{t_0}\right).
\end{equation}

Then we proceed to the term $C$ in \eqref{risk_decomposition}. For any $\sbb \in \mathcal{T}$, we have the following relationship
\begin{align*}
    C & = \mathbb{E}_{\cD^{(n)}} [\mathcal{L}_1] \\ 
    & = \mathbb{E}_{\cD^{(n)}} \left[ \frac{1}{n} \sum_{i=1}^n \left( \ell(\xb^{(i)}, \yb^{(i)}; \hat{\sbb}) - \ell(\xb^{(i)}, \yb^{(i)}; \sbb^{\star}) \right)\right]\\
    & \leq \mathbb{E}_{\cD^{(n)}} \left[ \frac{1}{n} \sum_{i=1}^n \left( \ell(\xb^{(i)}, \yb^{(i)}; \sbb) - \ell(\xb^{(i)}, \yb^{(i)}; \sbb^{\star}) \right)\right]\\
    & \leq \mathcal{R}(\sbb),
\end{align*}
the inequality holds due to $\hat{\sbb}$ minimizes $\hat{\mathcal{L}}$.

Taking minimum $w.r.t. \;\sbb \in \mathcal{T}$, we have 
\begin{equation}
\label{bound_on_c}
    C \leq \min_{\sbb \in \mathcal{T}} \mathcal{R}(\sbb) = \min_{\sbb \in \mathcal{T}}  \int_{t_0}^T  \mathbb{E}_{\vb_t,\yb}\left[\|\sbb(\vb_t,\yb,t) - \nabla_{\vb_t}\log p_t(\vb_t|\yb)\|_2^2\right]dt.
\end{equation}

\begin{lemma}
    \label{bound_on_C}
    Given an error level $\epsilon \in (0,1)$,
    \[
    \min_{\sbb \in \mathcal{T}} \mathcal{R}(\sbb) \lesssim  \epsilon^2 \left(T+\log\left(\frac{1}{t_0}\right)\right) + \sqrt{\delta_d}
    \big[\big(C_{\Sigma}C_{\rm data}^\delta \big)^2+Hd\big] \left(T +\frac{1}{t_0}\right).
    \]
\end{lemma}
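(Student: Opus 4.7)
The plan is to upper bound the minimum over $\sbb \in \cT$ by exhibiting an explicit good candidate and controlling its population risk. By Theorem~\ref{score_approx_theorem}, for any target accuracy $\epsilon \in (0,1)$, there exists a transformer $\tilde{\sbb} \in \cT(D, L, M, B, R)$ whose approximation error is uniformly at most $\sigma_t^{-1}\epsilon$ on the compact event $\cC_\delta$ for all $t \in [t_0, T]$. Hence $\min_{\sbb \in \cT} \cR(\sbb) \le \cR(\tilde{\sbb})$, and it suffices to bound $\cR(\tilde{\sbb})$.

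To handle the fact that the approximation guarantee of Theorem~\ref{score_approx_theorem} only holds on $\cC_\delta$, I would split the risk via the indicator of $\cC_\delta$:
\[
\cR(\tilde{\sbb}) \;=\; \underbrace{\int_{t_0}^T \EE\big[\|\tilde{\sbb} - \nabla \log p_t(\cdot \mid \xb_{\rm obs})\|_2^2 \,\mathbbm{1}_{\cC_\delta}\big]\diffd t}_{\text{(in-range)}} \;+\; \underbrace{\int_{t_0}^T \EE\big[\|\tilde{\sbb} - \nabla \log p_t(\cdot \mid \xb_{\rm obs})\|_2^2 \,\mathbbm{1}_{\cC_\delta^c}\big]\diffd t}_{\text{(out-of-range)}}.
\]
For the in-range term, I would plug in the pointwise bound $\|\tilde{\sbb} - \nabla \log p_t\|_2 \le \sigma_t^{-1}\epsilon$ from Theorem~\ref{score_approx_theorem}, yielding $\epsilon^2 \int_{t_0}^T \sigma_t^{-2}\diffd t = \epsilon^2 \int_{t_0}^T (1-e^{-t})^{-1}\diffd t$. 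An elementary computation (using $1-e^{-t} \asymp t$ near zero and $\asymp 1$ for $t \ge 1$) gives the estimate $\lesssim \epsilon^2(T + \log(1/t_0))$, which matches the first summand in the claimed bound.

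For the out-of-range term, I would follow the strategy already used in the proof of Lemma~\ref{bound_term_A}. The transformer output satisfies $\|\tilde{\sbb}\|_2 \le R$ by construction of $f_{\rm out}$, and the true conditional score admits the bound $\|\nabla \log p_t(\vb_t \mid \xb_{\rm obs})\|_2 \le \sigma_t^{-2}\|\vb_t - \alpha_t \bmu_{\rm cond}(\xb_{\rm obs})\|_2$ from Lemma~\ref{bound_on_s_and_v}. Applying Cauchy--Schwarz gives
\[
\EE\big[\|\tilde{\sbb} - \nabla \log p_t\|_2^2 \,\mathbbm{1}_{\cC_\delta^c}\big] \;\le\; \sqrt{\PP(\cC_\delta^c)} \cdot \sqrt{\EE\big[\|\tilde{\sbb} - \nabla \log p_t\|_2^4\big]},
\]
so that $\PP(\cC_\delta^c) \le \delta_d$ from Lemma~\ref{range_of_the_data} contributes the $\sqrt{\delta_d}$ factor. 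The fourth moment is evaluated by Gaussian moment computations on $\vb_t - \alpha_t \bmu_{\rm cond}(\xb_{\rm obs})$, yielding factors that combine into $(C_\Sigma C_{\rm data}^\delta)^2 + Hd$ after integrating the $\sigma_t^{-4}$ weight against $\diffd t$ into a $T + 1/t_0$ factor (the same $(T + 1/t_0)$ envelope appearing in Lemma~\ref{bound_term_A}).

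The main technical obstacle I anticipate is producing the polynomial prefactor $(C_\Sigma C_{\rm data}^\delta)^2 + Hd$ with the correct scaling: the fourth-moment bound mixes contributions from the transformer truncation norm $R$, the Gaussian tail of $\vb_t$ conditional on $\xb_{\rm obs}$, and the integral $\int_{t_0}^T \sigma_t^{-4}\diffd t \lesssim 1/t_0$. These have to be tracked carefully so that the dependence on $(C_\Sigma, C_{\rm data}^\delta, H, d, t_0, T)$ matches the statement; this is essentially the same bookkeeping already executed in Lemma~\ref{bound_term_A}, which I would invoke or re-run with $\sbb = \tilde{\sbb}$ and $\sbb^\star$ in place of a generic transformer. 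Combining the two pieces and using $\cR(\sbb^\star) = 0$ to identify $\cR(\tilde{\sbb})$ as $\cL(\tilde{\sbb}) - \cL(\sbb^\star)$ completes the argument.
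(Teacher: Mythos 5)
Your proposal is correct and follows essentially the same route as the paper's proof: take the approximating transformer $\tilde{\sbb}$ from Theorem~\ref{score_approx_theorem} as the candidate, split the risk over $\cC_\delta$ and its complement, use the $\sigma_t^{-1}\epsilon$ guarantee with $\int_{t_0}^T \sigma_t^{-2}\,\diffd t \lesssim T+\log(1/t_0)$ for the in-range part, and control the out-of-range part via Cauchy--Schwarz with $\PP(\cC_\delta^c)\le \delta_d$, the decoder's output truncation, and Gaussian moment bounds on the score, exactly the bookkeeping of Lemma~\ref{bound_term_A}. The only cosmetic difference is that you apply Cauchy--Schwarz to the squared difference directly (fourth moment of the difference), whereas the paper first splits via the triangle inequality and treats the transformer and score terms separately; both yield the same $\sqrt{\delta_d}\,[(C_\Sigma C_{\rm data}^\delta)^2 + Hd](T+1/t_0)$ term.
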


Finally, we can proceed to the bound of term $B$. In an attempt to providing the bound, we first need to calculate the covering number of the loss function class $\mathcal{S}(C_{\rm data}^\delta )$, and correspondingly, the covering number of our transformer architecture function class. The covering number is defined as follows:

\begin{definition}
    \label{covering_number}
    We denote $\mathcal{N}(\delta, \mathcal{F}, \|\cdot\|)$ to be the $\delta-$covering number of any function class \(\mathcal{F}\) w.r.t the norm \(\|\cdot\|\), i.e.,
    \[
    \mathcal{N}(\delta;\mathcal{F}, \|\cdot\|) = \arg\min_H \{\exists \{f_i\}_{i=1}^N \subseteq \mathcal{F}, \text{s.t. } \forall f\in \mathcal{F}, \exists i \in [N], \|f_i-f\|\leq \delta \}.
    \]
\end{definition}

A modified version of Lemma 23 in~\citep{fu2024diffusion} provides the following result on transformer covering numbers:
\begin{lemma}
    \label{transformer_covering_number}
    Consider the entire transformer architecture $\mathcal{F} = \mathcal{T}(D,L,M,B,R)$ (i.e. with encoder and decoder). If the input to the transformer satisfy \(\|\vb_t\|_2, \|\yb
    \|_2 \leq C_{\rm data}^\delta \), the time embedding $\eb$ and the diffusion time-step embedding $\phi(t)$ satisfy \(\|\eb\|_2=r, \|\phi(t)\|_2 \leq C_{\rm diff}\) and $r,C_{\rm diff} \leq \mathcal{O}(\sqrt{Hd})$, then the log-covering number of the transformer architecture is bounded by
    \[
    \log \mathcal{N}(\delta_s;\mathcal{F},\|\cdot\|_{F,\infty}) \lesssim
    D^2M\left(L^2\log\left(BMNRC_{\rm data}^\delta C_{\Sigma}\right) + \log\left(\frac{BMLHdC_{\rm data}^\delta C_{\Sigma}}{\delta_s}\right)\right).
    \]
\end{lemma}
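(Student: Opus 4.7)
The plan is a standard Lipschitz-plus-weight-covering argument adapted to the specific DiT architecture in \eqref{transformer_architecture}. I will (i) establish a uniform forward bound on the output of every layer, (ii) quantify how the final output depends on each weight matrix through a Lipschitz constant, and (iii) form an external cover of the parameter space and invoke the standard inequality $\log \mathcal{N}(\delta_s;\mathcal{F},\|\cdot\|_{F,\infty}) \le \sum_{\text{weights}} \log \mathcal{N}(\delta_s / \mathrm{Lip};\{W:\|W\|_F \le B\},\|\cdot\|_F)$, where the covering number of an $F$\nobreakdash-norm ball of dimension $D^2$ is bounded by $(3B\cdot\mathrm{Lip}/\delta_s)^{D^2}$.

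First I would bound, by induction on the layer index $\ell = 0,1,\dots,L$, the spectral and $F,\infty$ norms of the hidden representation $\Yb^{(\ell)}$. The encoder $f_{\rm in}$ gives $\|\Yb^{(0)}\|_{F,\infty} \lesssim C_{\rm data}^{\delta} + r + C_{\rm diff}$. Since the attention is a residual update $\Yb + \sum_m \Vb^m \Yb \sigma((\Qb^m\Yb)^\top \Kb^m \Yb)$ with entrywise ReLU activation, and since all weight matrices have Frobenius norm at most $B$, one layer multiplies the $F,\infty$ norm by at most $(1 + M B^3 H \|\Yb^{(\ell)}\|_{F,\infty}^2)$; the FFN adds a similar but milder $(1+B^2)$ factor. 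This yields a recursion giving $\|\Yb^{(L)}\|_{F,\infty}$ bounded by an expression that is at most doubly exponential in $L$ but only in the \emph{logarithm} that eventually shows up inside $\log(\cdot)$. Finally $f_{\rm out} = f_{\rm norm}\circ f_{\rm linear}$ clips the output so that the final score estimator has norm at most $R$, which is the bound that matters for the rest of the derivation.

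Next, for the Lipschitz step, I would perturb each weight matrix ($\Qb^m,\Kb^m,\Vb^m$ in every attention head, and $\Wb_1,\Wb_2,\bbb_1,\bbb_2$ in every FFN) by $\Delta$ with $\|\Delta\|_F \le \delta_W$ and track the first-order propagation layer by layer, using the Lipschitzness of ReLU (constant $1$) and the Lipschitzness of the softmax/ReLU-activated attention. The key observation, already used in~\citet{fu2024diffusion}, is that a single-weight perturbation of norm $\delta_W$ produces a change in $\Yb^{(L)}$ whose $F,\infty$-norm is at most $\delta_W \cdot \mathrm{Lip}$ where $\log \mathrm{Lip} \lesssim L \log(BMHdRC_{\rm data}^{\delta} C_{\Sigma})$, because each of the $L$ layers contributes a multiplicative factor of size $\mathrm{poly}(B,M,H,d,R,C_{\rm data}^{\delta},C_{\Sigma})$. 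The clipping in $f_{\rm norm}$ is $1$-Lipschitz and therefore only helps.

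Finally I would assemble the cover. There are $\mathcal{O}(L M)$ weight matrices, each living in a Frobenius ball of radius $B$ in $\mathbb{R}^{D\times D}$, so each contributes $\log \mathcal{N} \lesssim D^2 \log(B\cdot \mathrm{Lip}/\delta_s)$. Multiplying by the number of weight matrices and absorbing $L$ into an $L^2$ factor (the extra $L$ coming from $\log \mathrm{Lip}$) gives the claimed
\[
\log \mathcal{N}(\delta_s;\mathcal{F},\|\cdot\|_{F,\infty}) \lesssim D^2 M\left(L^2 \log(BMNRC_{\rm data}^{\delta} C_{\Sigma}) + \log\!\Bigl(\tfrac{BMLHdC_{\rm data}^{\delta} C_{\Sigma}}{\delta_s}\Bigr)\right).
\]
The main obstacle, and what will require the most care, is the bookkeeping in the Lipschitz step: because the softmax/ReLU attention is cubic in its inputs, a naive recursion blows up to $B^{3^L}$, and one has to use the clipping at output level $R$ together with the fact that only $\mathcal{O}(d)$ entries of each constructed weight matrix are nonzero (as noted in Lemma~\ref{mult_approx_lemma} and the attention construction in Appendix~B) to keep the base of the exponential polynomial, so that $\log \mathrm{Lip}$ stays linear in $L$. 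Once that is pinned down, the rest is a direct invocation of the standard epsilon-net bound on a product of Frobenius balls, and minor modifications relative to~\citep{fu2024diffusion} suffice to handle our enlarged embedding dimension $D = \mathcal{O}(d+d_e)$ and the truncating decoder $f_{\rm out}$.
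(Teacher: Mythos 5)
Your overall route — forward norm bounds, Lipschitz-in-weights propagation, then an $\varepsilon$-net over the product of Frobenius balls with $\mathcal{O}(LM)$ weight matrices of dimension $D^2$ — is the standard argument and is essentially what underlies this lemma; the paper itself does not re-derive it but simply invokes a modified version of Lemma 23 in \citep{fu2024diffusion}. Your assembly also gives $L\log(1/\delta_s)$ rather than $\log(1/\delta_s)$ on the second term, which is harmless in the downstream application (there $\log(1/\delta_s)\lesssim L\log(B\cdots)$) but does not literally match the stated bound.

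The genuine gap is in the step you yourself flag as the main obstacle. To keep $\log \mathrm{Lip}$ linear in $L$ you must prevent the cubic recursion $a_{\ell+1}\le a_\ell(1+MB^3H a_\ell^2)$ from producing doubly exponential intermediate norms, and the two devices you invoke do not accomplish this for the class being covered. First, the sparsity (``only $\mathcal{O}(d)$ nonzero entries'') and the fact that the constructed attention scores approximate indicators in $[0,1]$ are properties of the particular approximant built in the proof of Theorem~\ref{score_approx_theorem}; the covering number, however, must be taken over the \emph{entire} class $\mathcal{T}(D,L,M,B,R)$, since $\hat{\sbb}$ is an empirical risk minimizer over that class, and a generic member of the class has dense weights and unbounded attention scores. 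Second, the clipping $f_{\rm norm}$ acts only on the final output: it is $1$-Lipschitz, so it caps the range $R$ of the function values, but it does nothing to control the magnitudes of intermediate representations, which is exactly what drives the Lipschitz constant with respect to the weights of the inner layers. As written, your recursion therefore yields $\log\mathrm{Lip}\sim 3^L\log(\cdot)$ for the full class, and the claimed $L^2$ factor does not follow. Closing the gap requires either building per-layer boundedness into the function class (e.g., clipping or normalizing each block's output, or restricting attention-score magnitudes), as is effectively assumed in the covering lemma of \citep{fu2024diffusion} that the paper cites, or redefining the cover over a suitably restricted subclass that still contains the constructed approximant and the ERM — a modeling choice, not something the output-level truncation or the approximant's sparsity can substitute for.
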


Then, we can leverage the following lemma to calculate the covering number of the corresponding truncated loss function class.

\begin{lemma}
    \label{bound_s_to_bound_l}
    Suppose \(\hat{\sbb}^{(1)}, \hat{\sbb}^{(2)} \in \mathcal{T}(D,L,M,B,R)\) such that \(\|\hat{\sbb}^{(1)}(\vb_t,\yb;t) - \hat{\sbb}^{(2)}(\vb_t,\yb;t) \|_2 \leq \delta_s\) for any $\|\vb_t\|_2, \|\yb\|_2,\|\xb\|_2 \leq C_{\rm data}^\delta $ and $t\geq t_0$, then we have
    \[
    |\ell^{\text{trunc}}(\hat{\sbb}^{(1)})- \ell^{\text{trunc}}(\hat{\sbb}^{(2)})| \leq 4\delta_s \left(T+\log\left(\frac{1}{t_0}\right)\right)(C_{\Sigma}C_{\rm data}^\delta  + \sqrt{Hd}).
    \]
\end{lemma}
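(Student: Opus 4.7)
The plan is to push the absolute value inside the time integral and the conditional expectation defining $\ell^{\rm trunc}$, and then bound the pointwise difference of the two squared norms. Writing $\xi = \nabla_{\vb_t}\log \phi_t(\vb_t\mid\vb_0) = -(\vb_t - \alpha_t\vb_0)/\sigma_t^2$ and applying the elementary identity $\|u\|_2^2 - \|v\|_2^2 = \langle u - v, u + v\rangle$ with $u = \hat{\sbb}^{(1)} - \xi$, $v = \hat{\sbb}^{(2)} - \xi$, Cauchy--Schwarz yields the pointwise estimate
\[
\bigl|\|\hat{\sbb}^{(1)} - \xi\|_2^2 - \|\hat{\sbb}^{(2)} - \xi\|_2^2\bigr|
\leq \|\hat{\sbb}^{(1)} - \hat{\sbb}^{(2)}\|_2 \cdot \bigl\|\hat{\sbb}^{(1)} + \hat{\sbb}^{(2)} - 2\xi\bigr\|_2.
\]

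On $\cC_\delta$ the first factor is at most $\delta_s$ by hypothesis. For the second, the triangle inequality gives $\|\hat{\sbb}^{(1)} + \hat{\sbb}^{(2)} - 2\xi\|_2 \leq \|\hat{\sbb}^{(1)}\|_2 + \|\hat{\sbb}^{(2)}\|_2 + 2\|\xi\|_2$. Each transformer output is clipped by the decoder $f_{\rm norm}$ of Section~\ref{construct_component} to norm at most $R_t = \sigma_t^{-2} C_\Sigma C_{\rm data}^\delta$, the cap calibrated against the true-score magnitude bound in Lemma~\ref{bound_on_s_and_v}. Since $\vb_t - \alpha_t\vb_0 = \sigma_t z$ with $z \sim \cN(\mathbf{0}, \Ib_{d|\cI_{\rm miss}|})$, we have $\|\xi\|_2 = \|z\|_2/\sigma_t$, and Jensen's inequality gives $\mathbb{E}_{\vb_t \mid \vb_0}[\|\xi\|_2 \mathbbm{1}_{\cC_\delta}] \leq \sqrt{d|\cI_{\rm miss}|}/\sigma_t \leq \sqrt{Hd}/\sigma_t$.

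Combining these bounds and integrating over $t\in[t_0,T]$ gives
\[
|\ell^{\rm trunc}(\hat{\sbb}^{(1)}) - \ell^{\rm trunc}(\hat{\sbb}^{(2)})|
\leq 2\delta_s \int_{t_0}^T \Bigl( \sigma_t^{-2} C_\Sigma C_{\rm data}^\delta + \sigma_t^{-1}\sqrt{Hd} \Bigr)\, \diffd t.
\]
Since $\sigma_t = \sqrt{1 - e^{-t}} \leq 1$ on $[t_0,T]$ we have $\sigma_t^{-1} \leq \sigma_t^{-2}$, and a direct split of the integral at $t = 1$ (with $1 - e^{-t} \geq t/2$ for $t \leq 1$) yields $\int_{t_0}^T \sigma_t^{-2}\diffd t \lesssim T + \log(1/t_0)$. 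Substituting absorbs the scalar constants into a factor of at most $4$ and completes the argument.

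The hard part will be not the calculus but the bookkeeping on the transformer output norm: using the uniform architecture-level cap $R$ from the definition of $\mathcal{T}(D,L,M,B,R)$ in~\eqref{transformer_architecture} would reintroduce an extraneous $\sigma_{t_0}^{-2}$ factor outside the integral and destroy the target form. To obtain the claimed bound one must exploit the time-dependent truncation $R_t$ supplied by the decoder $f_{\rm norm}$, so that the $\sigma_t^{-2}$ weight lives under the integral and combines cleanly with the $z/\sigma_t$ magnitude of $\xi$. Everything else is routine.
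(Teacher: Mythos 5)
Your proposal is correct and follows essentially the same route as the paper's proof: the difference-of-squares identity with Cauchy--Schwarz, bounding the network outputs by the decoder's time-dependent clip $R_t=\sigma_t^{-2}C_\Sigma C_{\rm data}^\delta$ (rather than the uniform architecture cap $R$, exactly as the paper does) and the Gaussian term by $\sqrt{Hd}/\sigma_t$, then integrating $\sigma_t^{-2}$ over $[t_0,T]$ to produce the $T+\log(1/t_0)$ factor. Your constant bookkeeping (factor $4$) even matches the paper's, so there is nothing to add.
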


Equipped with this lemma, it is straight forward to derive that
\[
\log \mathcal{N}(\delta_l;\mathcal{S}(C_{\rm data}^\delta ),\|\cdot\|_{\infty}) \lesssim D^2M\left(L^2\log\left(BMNRC_{\rm data}^\delta C_{\Sigma}\right) + \log\left(\frac{BMLHdC_{\rm data}^\delta C_{\Sigma}}{\delta_s}\right)\right),
\]
and $\delta_s$ satisfies
\[
\delta_l = 4\delta_s \left(T+\log\left(\frac{1}{t_0}\right)\right)(C_{\Sigma}C_{\rm data}^\delta  + \sqrt{Hd}).
\]

Invoking the bound provided in (D.16) of~\citep{fu2024unveil}, we have
\begin{align}
    \label{bound_on_b}
    B 
    &\lesssim C + A + \frac{1}{n}\log  \mathcal{N}(\delta_l;\mathcal{S}(C_{\rm data}^\delta ),\|\cdot\|_{\infty})\int_{t_0}^T \sigma_t^{-4}\diffd t + 7\delta_l \notag\\
    & \lesssim C+A + \frac{(T+\frac{1}{t_0})}{n} D^2M\left(L^2\log\left(BMNRC_{\rm data}^\delta C_{\Sigma}\right) + \log\left(\frac{BMLHdC_{\rm data}^\delta C_{\Sigma}}{\delta_s}\right)\right) \notag\\
    & \quad+ 28\delta_s \left(T+\log\left(\frac{1}{t_0}\right)\right)(C_{\Sigma}C_{\rm data}^\delta  + \sqrt{Hd}).
\end{align}

Combining the bound of A, B and C (\eqref{bound_on_a},\eqref{bound_on_b}, \eqref{bound_on_c}), we can leverage the empirical risk decomposition \eqref{risk_decomposition} to finalize the proof of Corollary \ref{score_estimation_risk_bound_theorem}.

\subsection{Proof of Corollary \ref{score_estimation_risk_bound_theorem}}
\begin{proof}[Proof of Corollary \ref{score_estimation_risk_bound_theorem}]

By \eqref{risk_decomposition}, \eqref{bound_on_a},\eqref{bound_on_b} and \eqref{bound_on_c}, we have
\begin{align*}
    \mathbb{E}_{\cD^{(n)}} [\mathcal{R}(\hat{\sbb})]
    & \leq A+B+C\\
    & \lesssim 2\sqrt{\delta_d} \big[\big(C_{\Sigma}C_{\rm data}^\delta \big)^2+Hd\big] \left(T +\frac{1}{t_0}\right)\\
    & \quad +\frac{(T+\frac{1}{t_0})}{n} D^2M\left(L^2\log\left(BMNRC_{\rm data}^\delta C_{\Sigma}\right) + \log\left(\frac{BMLHdC_{\rm data}^\delta C_{\Sigma}}{\delta_s}\right)\right) \notag\\
    & \quad+ 28\delta_s \left(T+\log\left(\frac{1}{t_0}\right)\right)(C_{\Sigma}C_{\rm data}^\delta  + \sqrt{Hd})
    \quad+  \epsilon^2 \left(T+\log\left(\frac{1}{t_0}\right)\right).
\end{align*}

Plugging in the configuration of our transformer architecture in Theorem \ref{score_approx_theorem}, and take 
$$C_{\rm data}^\delta  = \mathcal{O}(\sqrt{Hd}\kappa(\bLambda)\kappa(\bGamma_{\rm obs})\|\Gamma_{\rm cor}\|_2\log(Hdn)), \epsilon = \frac{1}{\sqrt{n}}, \delta_s = \frac{1}{nC_{\rm data}^\delta C_{\Sigma}}, T = \mathcal{O}(\log(n)),$$
the inequality above gives rise to 
\[
\mathbb{E}_{\cD^{(n)}} [\mathcal{R}(\hat{\sbb})] \lesssim \frac{Hd^2 \kappa_{t_0}^4 \kappa^2(\bLambda)\kappa^2(\bGamma_{\rm obs}) t_0^{-1}}{n}\log (Hd\kappa(\bLambda)\kappa(\bGamma_{\rm obs}) nt_0^{-1}).
\]

\end{proof}

\subsection{Proof of Supporting Lemmas in \ref{bound_each_component}}
\label{bound_each_component_supp_lemma_proof}

\begin{proof}[Proof of Lemma \ref{range_of_the_data}]
We first state the polynomial concentration lemma for Gaussian random variables.
\begin{lemma}[Lemma 24 in~\citep{fu2024diffusion}]
\label{poly_concentration_lemma}
    Let $g$ be a polynomial of degree $p$ and $x \sim  \mathcal{N} (0, I_d)$. Then there exists an absolute positive constant $C_p$, depending only on $p$, such that for any $\delta < 1$,
\[
\mathbb{P}\left[ |g(x) - \mathbb{E}[g(x)]| \geq \delta \sqrt{\mathrm{Var}(g(x))} \right] \leq 2 \exp\left(-C_p \delta^{2/p}\right).
\]
\end{lemma}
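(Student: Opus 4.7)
The plan is to derive this tail bound from the classical Gaussian hypercontractive estimate combined with a moment--Markov argument optimized over the moment order. Throughout, write $\sigma^2 = \mathrm{Var}(g(x))$ and $Y = g(x) - \mathbb{E}[g(x)]$; note that $Y$ is itself a polynomial of degree at most $p$ in the Gaussian coordinates, with mean zero and variance $\sigma^2$.

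The key input is the Gaussian hypercontractivity inequality (Nelson's theorem, equivalently the Bonami--Beckner inequality for Hermite expansions): for any polynomial of degree at most $p$ in $x \sim \mathcal{N}(0, I_d)$ and every $q \geq 2$,
\[
\|Y\|_{L^q} \;\leq\; (q-1)^{p/2}\,\|Y\|_{L^2} \;=\; (q-1)^{p/2}\,\sigma.
\]
I would obtain this either by decomposing $Y$ in the Hermite chaos and applying the Ornstein--Uhlenbeck semigroup degree by degree, or via Gross's equivalence with the Gaussian log-Sobolev inequality applied to $|Y|^{q/2}$.

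Applying Markov's inequality to $|Y|^q$ and substituting the hypercontractive bound yields, for any $\delta > 0$,
\[
\mathbb{P}(|Y| \geq \delta \sigma) \;\leq\; \frac{\|Y\|_{L^q}^q}{(\delta \sigma)^q} \;\leq\; \exp\!\Big(\tfrac{pq}{2}\log(q-1) \;-\; q\log \delta\Big).
\]
I then optimize the exponent over $q \geq 2$. The natural scaling is $q \asymp \delta^{2/p}$: this forces $(p/2)\log(q-1) - \log \delta$ to be negative and of order $-1$, so the exponent simplifies to $-C_p\,\delta^{2/p}$ for an explicit $C_p > 0$ depending only on $p$ (computable by plugging, say, $q = e^{-1}\delta^{2/p} + 1$ and bookkeeping the lower-order terms). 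In the complementary bounded regime where this optimal $q$ would fall below $2$, the bound $2 \exp(-C_p \delta^{2/p}) \geq 1$ holds trivially after possibly shrinking $C_p$, so no additional work is required. The prefactor $2$ in the conclusion arises from applying the one-sided tail bound separately to $Y$ and $-Y$.

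The main obstacle is establishing the hypercontractive moment estimate itself; once taken as given, the rest is a routine one-parameter optimization together with a union bound over the two tails. I would cite Nelson's hypercontractivity as a black box, or, for a self-contained argument, derive the $L^q$--$L^2$ comparison through the Gaussian log-Sobolev inequality and Gross's theorem. No delicate property of $g$ beyond its degree enters the argument, which is why the constant $C_p$ can be taken to depend only on $p$.
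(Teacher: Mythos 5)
Your proposal is correct, but note that the paper does not prove this lemma at all: it is imported verbatim as Lemma 24 of \citet{fu2024diffusion} and used as a black box, so there is no in-paper argument to compare against. What you supply is the standard self-contained derivation of Gaussian polynomial concentration: Nelson's hypercontractivity gives $\|Y\|_{L^q}\le (q-1)^{p/2}\|Y\|_{L^2}$ for a degree-$p$ chaos, Markov's inequality on $|Y|^q$ gives the exponent $\tfrac{pq}{2}\log(q-1)-q\log\delta$, and choosing $q-1\asymp \delta^{2/p}$ (e.g.\ $q=1+e^{-1}\delta^{2/p}$, which makes the exponent $-pq/2\le -\tfrac{p}{2e}\delta^{2/p}$) yields the claimed tail, with the regime where this $q$ drops below $2$ absorbed by shrinking $C_p$ so the bound is vacuously true. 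Two small remarks: the factor $2$ does not come from a union bound over the two tails, since Markov applied to $|Y|^q$ already controls the two-sided event $\{|Y|\ge\delta\sigma\}$ — it is simply harmless slack; and the restriction ``$\delta<1$'' in the quoted statement is where the bound is essentially trivial (for $C_p\le\log 2$ the right-hand side exceeds $1$), whereas your argument proves the inequality for all $\delta>0$, which is the form actually needed downstream in Lemma~\ref{range_of_the_data}, where the lemma is invoked with a large deviation parameter. So your route is sound and, if anything, strengthens the paper by making an externally cited ingredient self-contained.
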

    For a random variable $\rb \sim  \mathcal{N} (\mathbf{0}, \bSigma_0)$, consider $g(\cdot) = \|\cdot\|_2^2$, we have
    \[
    \mathbb{E}[g(\ub)] = \operatorname{tr}(\bSigma_0), \mathbb{E}[g(\ub)^2] \leq 3\|\bSigma_0\|_{\rm F}^2.
    \]
    Applying Lemma \ref{poly_concentration_lemma}, we can conclude that with high probability at least $1-2\exp(-C_2\delta)$,
    \[
    |\|\ub\|_2^2 - \mathbb{E}[\|\ub\|_2^2]| \leq \delta\sqrt{\operatorname{Var}(\|g(\ub)\|_2^2)} \leq \sqrt{3}\delta\|\bSigma_0\|_{\rm F}.
    \]
    
    Considering $\vb_t$, we have $\bSigma_1 = \alpha_t^2 (\bGamma_{\rm miss} \otimes \bLambda) + \sigma_t^2 \Ib$; and for $\xb$, we have $\bSigma_2 = \bGamma \otimes \bLambda$.
    
    Therefore,
    \begin{align*}
         \|\vb^{(m)}\|_2 
         & \leq \sqrt{\operatorname{tr}(\bSigma_1) + \sqrt{3}\delta \|\bSigma_1\|_{\rm F}}\\
         & \leq \sqrt{\operatorname{tr}(\bGamma_{\rm miss})\operatorname{tr}(\bLambda)+ d|\cI_{\rm miss}| + \sqrt{3}\delta(\|\bGamma_{\rm miss}\|_{\rm F}\|\bLambda\|_{\rm F}+d|\cI_{\rm miss}|)}\\
         &\leq \sqrt{d \|\bLambda\|_{\rm F} + d|\cI_{\rm miss}| + \sqrt{3}\delta(\|\bGamma_{\rm miss}\|_{\rm F}\|\bLambda\|_{\rm F}+d|\cI_{\rm miss}|)}\\
         &\leq \sqrt{ (Hd+1)(\|\bLambda\|_{\rm F}+1)} (1+\sqrt{2\delta}),\\
    \end{align*}
    the last inequality holds for $\delta<1$. Similar inequalities hold for $\|\xb^{(i)}\|_2$.

    Consider $C_{\rm data}^\delta  \geq 2\sqrt{ (Hd+1)(\|\bLambda\|_{\rm F}+1)}$ and let $\delta = \frac{(C_{\rm data}^\delta)^2}{8 Hd+1)(\|\bLambda\|_{\rm F}+1)}$. We can then obtain a union bound. With probability at least $1-2n\exp\big\{\frac{-C_2(C_{\rm data})^2}{8 (Hd+1)(\|\bLambda\|_{\rm F}+1)}\big\}$,
    \[
   \max\{ \|\xb^{(i)}\|_2, \|\vb^{(m)}\|_2\}_{i=1}^n \leq  C_{\rm data}^\delta .
    \]

    We finish the proof by setting $\delta_{n,d} = 2n\exp\big\{\frac{-C_2C^2_{\rm data}}{8 Hd+1)(\|\bLambda\|_{\rm F}+1)}\big\}$.
\end{proof}

\begin{proof}[Proof of Lemma \ref{bound_term_A}]
For any $\sbb \in \mathcal{T}$ ($\sbb$ can depend on $\xb,\yb$),
\begin{align*}
& \quad~ \mathbb{E}_{\xb,\yb}\big[|\ell(\xb,\yb;\sbb) - \ell^{\text{trunc}}(\xb,\yb;\sbb)| \big] \\
&  = \int_{t_0}^T\int_{\xb,\yb}\mathbb{E}_{\vb_t|\vb_0=\xb}\big[\|\sbb(\vb,\yb,t)- \nabla\log\phi_t(\vb_t|\vb_0)\|_2^2 \mathbbm{1}_{\cC_\delta }\big] p(\xb,\yb)\diffd \xb \diffd \yb \diffd t \\
    & \leq 2\int_{t_0}^T\int_{\xb,\yb}\mathbb{E}_{\vb_t|\vb_0=\xb}\left[\left(\|\sbb(\vb,\yb,t)\|_2^2 + \left\| \frac{\vb_t - \alpha_t\vb_0}{\sigma_t^2}\right\|_2^2\right) \mathbbm{1}_{\cC_\delta }\right] p(\xb,\yb)\diffd \xb \diffd \yb dt\\
    & \lesssim \int_{t_0}^T\mathbb{P}({\cC_\delta })R_t^2\diffd t + \int_{t_0}^T\mathbb{E}_{\vb_t,\xb,\yb, \zb \sim  \mathcal{N} (\mathbf{0}, \Ib_{d|\cI_{\rm miss}|})}\left[\ \left\| \frac{\sigma_t \zb}{\sigma_t^2}\right\|_2^2\mathbbm{1}_{\cC_\delta } \right]  dt\\
    & \leq \delta_d (C_{\Sigma}C_{\rm data}^\delta )^2 \int_{t_0}^T\sigma_t^{-4}\diffd t + \int_{t_0}^T\mathbb{P}^{1/2}[{\cC_\delta }]\mathbb{E}_{\zb \sim  \mathcal{N} (\mathbf{0}, \Ib_{d|\cI_{\rm miss}|})}\left[\ \left\| \frac{\zb}{\sigma_t}\right\|_2^2\right]  dt\\
    & \leq \delta_d (C_{\Sigma}C_{\rm data}^\delta )^2 \int_{t_0}^T\sigma_t^{-4}\diffd t + \sqrt{\delta_d}\int_{t_0}^T \sigma_t^{-2} Hd dt\\
    & \leq   \sqrt{\delta_d} \big[\big(C_{\Sigma}C_{\rm data}^\delta \big)^2+Hd\big] \int_{t_0}^T \sigma_t^{-4} dt\\
    & =  \sqrt{\delta_d} \big[\big(C_{\Sigma}C_{\rm data}^\delta \big)^2+Hd\big] \int_{t_0}^T \left(\frac{e^t}{e^t-1}\right)^2 dt\\
    & \lesssim \sqrt{\delta_d} \big[\big(C_{\Sigma}C_{\rm data}^\delta \big)^2+Hd\big] \left(T +\frac{1}{t_0}\right),
\end{align*}
where $R_t$ is the truncation range of the decoder, and we apply triangular inequality in the third line.
    
\end{proof}

\begin{proof}[Proof of Lemma \ref{bound_on_C}]
Since $\tilde{\sbb} \in \mathcal{T}$, we can invoke Theorem \ref{score_approx_theorem} and triangular inequality: 
\begin{align*}
    \min_{\sbb \in \mathcal{T}} \mathcal{R}(\sbb)
    & = \min_{\sbb \in \mathcal{T}}  \int_{t_0}^T  \mathbb{E}_{\vb_t,\yb}\left[\|\sbb(\vb_t,\yb,t) - \nabla_{\vb_t}\log p_t(\vb_t|\yb)\|_2^2\right]dt\\
    & \leq \int_{t_0}^T  \mathbb{E}_{\vb_t,\yb}\left[\|\tilde{\sbb}(\vb_t,\yb,t) - \nabla_{\vb_t}\log p_t(\vb_t|\yb)\|_2^2\mathbbm{1}_{\cC_\delta }\right]\diffd t \\
    & \quad + 2\int_{t_0}^T\mathbb{E}_{\vb_t,\yb}\left[(\|\tilde{\sbb}(\vb_t,\yb,t)\|_2^2 + \|\nabla_{\vb_t}\log p_t(\vb_t|\yb)\|_2^2)\mathbbm{1}_{\cC_\delta }\right]dt. \\
    & \leq  \epsilon^2\int_{t_0}^T\sigma_t^{-2}\diffd t + 2\delta_d \int_{t_0}^T \|\tilde{\sbb}(\vb_t,\yb,t)\|_2^2 \diffd t \\
    &\qquad+ 2\sqrt{\delta_d}\int_{t_0}^T\mathbb{E}_{\vb_t,\yb}^{1/2}\left[\| \nabla_{\vb_t}\log p_t(\vb_t|\yb)\|_2^4\right]\diffd t \\
    & \leq  \epsilon^2\int_{t_0}^T\sigma_t^{-2}\diffd t + 2\delta_d \int_{t_0}^T \|\tilde{\sbb}(\vb_t,\yb,t)\|_2^2 \diffd t \\
    &\qquad+ 2\sqrt{\delta_d}\int_{t_0}^T\mathbb{E}_{\vb_t,\yb}\left[\| \nabla_{\vb_t}\log p_t(\vb_t|\yb)\|_2^2\right]\diffd t ,
\end{align*}
where we apply Cauchy-Schwarz inequality in the second step, Jensen's inequality in the last step.

For the second last term, similar to the proof of Lemma \ref{bound_term_A}, we have
\[
\int_{t_0}^T\mathbb{E}_{\vb_t,\yb}\left[\|\tilde{\sbb}(\vb_t,\yb,t)\|_2^2\right]\diffd t \leq (C_{\rm data}^\delta C_{\Sigma})^2\int_{t_0}^T\sigma_t^{-4}dt.
\]

For the last term, we have
\begin{align*}
    &\hspace{-2em}\int_{t_0}^T\mathbb{E}_{\vb_t,\yb}\left[ \|\nabla_{\vb_t}\log p_t(\vb_t|\yb)\|_2^2\right]dt\\
    &\leq \int_{t_0}^T \sigma_t^{-4} \mathbb{E}_{\vb_t,\yb}\left[ \|\vb_t-\alpha_t\bSigma_{\rm cor}\bSigma_{\rm obs}^{-1}\yb\|_2^2\right]dt\\
    &\leq 2\int_{t_0}^T \sigma_t^{-4}  \mathbb{E}_{\xb,\yb,\zb \sim  \mathcal{N} (0,\Ib_{d|\cI_{\rm miss}|})}\left[ \|\xb-\bSigma_{\rm cor}\bSigma_{\rm obs}^{-1}\yb\|_2^2 + \|\sigma_t\zb\|_2^2\right]dt\\
    & \leq  2\int_{t_0}^T \sigma_t^{-4} \left[ \mathbb{E}_{\ub \sim  \mathcal{N} (0,\bSigma_{\rm cond})}\left[\|\ub\|_2^2\right] + \sigma_t^2 Hd \right]dt\\
    & = 2\left[\sigma_t^2 Hd + \operatorname{tr}(\bSigma_{\rm miss} - \bSigma_{\rm cor}^\top \bSigma_{\rm obs}^{-1} \bSigma_{\rm cor})\right]\int_{t_0}^T \sigma_t^{-4}dt\\
    & \leq 2\left[\sigma_t^2 Hd + \operatorname{tr}(\bGamma_{\rm miss})\operatorname{tr}(\bLambda)\right]\int_{t_0}^T \sigma_t^{-4}dt,
\end{align*}
where we utilize the positive definiteness of \(\bSigma_{\rm cor}^\top \bSigma_{\rm obs}^{-1} \bSigma_{\rm cor}\) in the last inequality.

Combining all the terms together, we have
\begin{align*}
    \min_{\sbb \in \mathcal{T}} \mathcal{R}(\sbb)
    & \leq  \epsilon^2\int_{t_0}^T\sigma_t^{-2}\diffd t + 2\sqrt{\delta_d} \int_{t_0}^T\mathbb{E}_{\vb_t,\yb}\left[\|\tilde{\sbb}(\vb_t,\yb,t)\|_2^2 + \|\nabla_{\vb_t}\log p_t(\vb_t|\yb)\|_2^2\right]dt\\
    & \lesssim  \epsilon^2 \left(T+\log\left(\frac{1}{t_0}\right)\right) + \sqrt{\delta_d}
    \big[\big(C_{\Sigma}C_{\rm data}^\delta \big)^2+Hd\big] \left(T +\frac{1}{t_0}\right).
\end{align*}

\end{proof}

\begin{proof}[Proof of Lemma \ref{bound_s_to_bound_l}]
We have
\begin{align*}
    &\quad~\left|\mathbb{E}_{\vb_t|\vb_0=\xb}\left[\|\hat{\sbb}^{(1)}(\vb_t,\yb;t) - \nabla\log \phi_t(\vb_t|\vb_0)\|_2^2
    - \|\hat{\sbb}^{(2)}(\vb_t,\yb;t) - \nabla\log \phi_t(\vb_t|\vb_0)\|_2^2
    \right]\right| \\
    & = \left|\mathbb{E}_{\zb\sim  \mathcal{N} (0,\Ib_{d|\cI_{\rm miss}|})}\left[\|\hat{\sbb}^{(1)}(\alpha_t\xb + \sigma_t\zb,\yb;t) + \frac{\zb}{\sigma_t}\|_2^2
    - \|\hat{\sbb}^{(2)}(\alpha_t\xb + \sigma_t\zb,\yb;t) + \frac{\zb}{\sigma_t}\|_2^2
    \right]\right|\\
    & = \left|\mathbb{E}_{\zb\sim  \mathcal{N} (0,\Ib_{d|\cI_{\rm miss}|})}\left[\left((\hat{\sbb}^{(1)}- \hat{\sbb}^{(2)})(\alpha_t\xb + \sigma_t\zb,\yb;t)\right)^\top \left( 
    (\hat{\sbb}^{(1)}+ \hat{\sbb}^{(2)})(\alpha_t\xb + \sigma_t\zb,\yb;t) + \frac{2\zb}{\sigma_t}\right)
    \right]\right| \\ 
    & \leq \left|\mathbb{E}_{\zb\sim  \mathcal{N} (0,\Ib_{d|\cI_{\rm miss}|})}\left[\left\|(\hat{\sbb}^{(1)}- \hat{\sbb}^{(2)})(\alpha_t\xb + \sigma_t\zb,\yb;t)\right\|_2 \left\|
    (\hat{\sbb}^{(1)}+ \hat{\sbb}^{(2)})(\alpha_t\xb + \sigma_t\zb,\yb;t) + \frac{2\zb}{\sigma_t}\right\|_2
    \right]\right| \\
    & \leq 2\delta_s \mathbb{E}_{\zb\sim  \mathcal{N} (0,\Ib_{d|\cI_{\rm miss}|})}\left[ \left\|
    (\hat{\sbb}^{(1)}+ \hat{\sbb}^{(2)})(\alpha_t\xb + \sigma_t\zb,\yb;t)\right\|_2 + \left\|\frac{2\zb}{\sigma_t}\right\|_2
    \right] \\
    & \leq 2\delta_s \left[2\|\hat{\sbb}\|_2 +  2\sigma_t^{-1}\mathbb{E}_{\zb\sim  \mathcal{N} (0,\Ib_{d|\cI_{\rm miss}|})}\left[  \left\|\zb\right\|_2
    \right]\right] \\
    & \leq 4\delta_s \sigma_t^{-2}(C_{\Sigma}C_{\rm data}^\delta  + \sqrt{Hd}).
\end{align*}

Therefore, we obtain
\begin{align*}
    |\ell^{\text{trunc}}(\hat{\sbb}^{(1)})- \ell^{\text{trunc}}(\hat{\sbb}^{(2)})| 
    & \leq 4\delta_s (C_{\Sigma}C_{\rm data}^\delta  + \sqrt{Hd}) \int_{t_0}^T \sigma_t^{-2} dt\\
    & \leq 4\delta_s \left(T+\log\left(\frac{1}{t_0}\right)\right)(C_{\Sigma}C_{\rm data}^\delta  + \sqrt{Hd}).
\end{align*}
    
\end{proof}

\clearpage
\newpage
\section{Experiment Details}
\label{app:exp_details}

For our numerical experiments, we trained the models using a batch size of 64. Our adapted DiT model architecture used a hidden size of 256, 12 transformer layers, and 16 attention heads per layer. We utilized the PyPOTS~\citep{du2023pypots} framework to implement and handle hyperparameter tuning for the baseline methods CSDI and GP-VAE. This tuning process aimed to find the best settings and ensure the models had a comparable number of trainable parameters. Experiments were conducted on hardware consisting of an NVIDIA RTX A6000 GPU (48GB) and an Intel(R) Xeon(R) Gold 6242R CPU @ 3.10GHz. We report all the results as the average of 5 runs. Our implementation of DiT for imputation is attached in supplementary materials.

\subsection{Real World Datasets}

\paragraph{Dataset Descriptions.}
We utilize two real-world datasets, BeijingAir~\citep{zhang2017cautionary} and ETT\_m1, to benchmark the imputation performance of DiT. The \textbf{BeijingAir} dataset comprises hourly measurements of six air pollutants and meteorological variables collected from 12 monitoring sites in Beijing. The \textbf{ETT\_m1} dataset, part of the Electricity Transformer Temperature benchmark, records clients' electricity consumption data, including power load and oil temperature. Detailed statistics for both datasets are provided in Table~\ref{tab:dataset_description}.

\begin{table}[!ht]
    \centering
    \resizebox{\textwidth}{!}{
    \begin{tabular}{lcccc}
        \toprule
        \textbf{Dataset} & \textbf{Total Samples} & \textbf{Sequence Length} & \textbf{Time Interval} & \textbf{Number of Variables} \\
        \midrule
        Air Quality & 1168 & 30 & 1H & 132 \\
        Electricity & 2321 & 48 & 15min & 7 \\
        \bottomrule
    \end{tabular}}
    \caption{80\% of the data is used for training, and 20\% for testing.}
    \label{tab:dataset_description}
\end{table}

 \paragraph{Results.}
 We report the Mean Absolute Error (MAE) in Table~\ref{tab:mae_results}, the Mean Squared Error (MSE) in Table~\ref{tab:mse_results} and the Mean Relative Error (MRE) in Table~\ref{tab:mre_results}. Results are shown across different missing data rates (10\%, 20\%, and 50\%) for both datasets. The experimental results indicate that DiT consistently outperforms the baseline methods on both datasets, demonstrating its effectiveness, and our mixed-masking strategy can also enhance DiT's performance on real-world datasets.

Figure~\ref{fig:ettm1_imputation} presents a comparison of imputation results on the ETT\_m1 dataset, where we randomly select samples from a 50\% missing data scenario. From the plots, it is evident that although both DiT and CSDI generate CRs that largely encompass the true data points, DiT achieves a tighter bandwidth, leading to improved uncertainty quantification performance.

\begin{figure}
    \centering
    \includegraphics[width=1.0\linewidth]{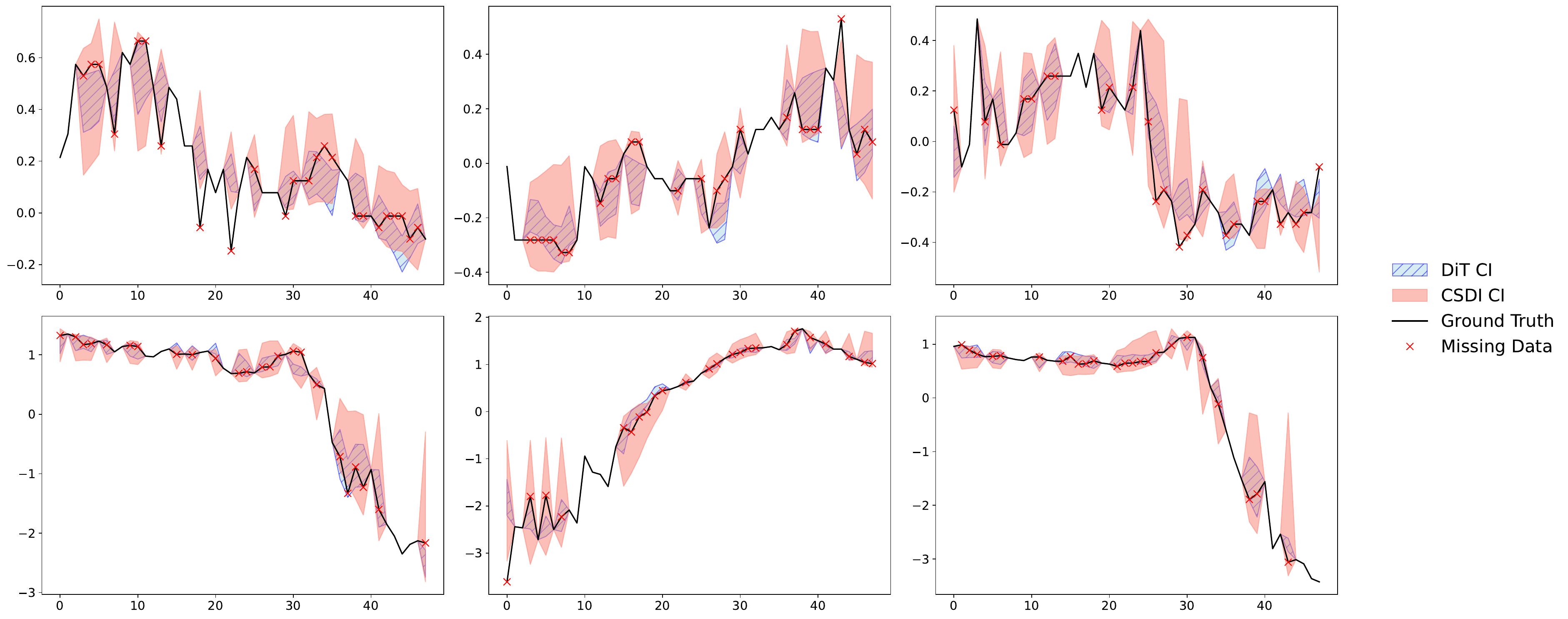}
    \caption{Comparison of imputation methods on the Electricity dataset, with 95\% CR.}
    \label{fig:ettm1_imputation}
\end{figure}

\begin{table}[!h]
    \centering

    \resizebox{\textwidth}{!}{
    \begin{tabular}{ccccccc}
        \toprule
\textbf{Model} & \multicolumn{3}{c}{\textbf{ETTm\_1 (Missing \%)}} & \multicolumn{3}{c}{\textbf{BeijingAir (Missing \%)}} \\
        \cmidrule(lr){2-4} \cmidrule(lr){5-7}
        & 10\% & 20\% & 50\% & 10\% & 20\% & 50\% \\
        \midrule
        CSDI~\citep{tashiro2021csdi} & 0.1448 (±0.0105) & 0.1521 (±0.0114) & 0.1650 (±0.0097) & 0.1780 (±0.0138) & 0.1800 (±0.0129) & 0.2141 (±0.0119) \\
        GP-VAE~\citep{fortuin2020gp} & 0.2786 (±0.0077) & 0.3267 (±0.0044) & 0.4666 (±0.0073) & 0.4152 (±0.0088) & 0.4401 (±0.0080) & 0.5265 (±0.0054) \\
        DiT & \textcolor{ao(english)}{\bf 0.1269 (±0.0076)} & \textcolor{ao(english)}{\bf 0.1377 (±0.0095)} & \textcolor{ao(english)}{\bf 0.1543 (±0.0102)} & \textcolor{ao(english)}{\bf 0.1753 (±0.0094)} & \textcolor{ao(english)}{\bf 0.1815 (±0.0208)} & \textcolor{ao(english)}{\bf 0.2057 (±0.0145)} \\
        \bottomrule
    \end{tabular}}
        
    \caption{Time Series Imputation MAE Results}
    \label{tab:mae_results}
\end{table}

\begin{table}[!h]
    \centering
    \resizebox{\textwidth}{!}{
    \begin{tabular}{ccccccc}
        \toprule
        \textbf{Model} & \multicolumn{3}{c}{\textbf{ETT\_m1 (Missing \%)}} & \multicolumn{3}{c}{\textbf{BeijingAir (Missing \%)}} \\
        \cmidrule(lr){2-4} \cmidrule(lr){5-7}
        & 10\% & 20\% & 50\% & 10\% & 20\% & 50\% \\
        \midrule
        CSDI~\citep{tashiro2021csdi} & 0.0615 (±0.0097) & 0.0698 (±0.0106) & 0.0797 (±0.0106) & 0.4196 (±0.1726) & 0.3926 (±0.0790) & 0.4534 (±0.0379) \\
        GP-VAE~\citep{fortuin2020gp} & 0.1567 (±0.0094) & 0.2138 (±0.0067) & 0.4249 (±0.0127) & 0.4096 (±0.0202) & 0.4777 (±0.0179) & 0.7017 (±0.0189) \\
        DiT & 0.0534 (±0.0063) & 0.0606 (±0.0076) & 
        
        \textcolor{ao(english)}{\textbf{0.0684 (±0.0070)}} & 0.3683 (±0.0351) & 0.4025 (±0.0424) & 0.4255 (±0.0670) \\
    
        \textbf{DiT w/ mixed-masking strategy} & 
        \textcolor{ao(english)}{\textbf{0.0502 (±0.0055)}} & 
        \textcolor{ao(english)}{\textbf{0.0588 (±0.0081)}} & 
        
        0.0711 (±0.0092) & 
        \textcolor{ao(english)}{\textbf{0.3428 (±0.0275)}} & 
        \textcolor{ao(english)}{\textbf{0.3864 (±0.0403)}} & 
        \textcolor{ao(english)}{\textbf{0.4229 (±0.0539)}} \\
        \bottomrule
    \end{tabular}
    }
    \caption{Time Series Imputation MSE Results}
    \label{tab:mse_results}
\end{table}

\begin{table}[!h]
    \centering
    \resizebox{\textwidth}{!}{
    \begin{tabular}{ccccccc}
        \toprule
        \textbf{Model} & \multicolumn{3}{c}{\textbf{ETT\_m1 (Missing \%)}} & \multicolumn{3}{c}{\textbf{BeijingAir (Missing \%)}} \\
        \cmidrule(lr){2-4} \cmidrule(lr){5-7}
        & 10\% & 20\% & 50\% & 10\% & 20\% & 50\% \\
        \midrule
        CSDI~\citep{tashiro2021csdi} & 0.1706 (±0.0123) & 0.1808 (±0.0135) & 0.1938 (±0.0114) & 0.2380 (±0.0186) & 0.2420 (±0.0174) & \textcolor{ao(english)}{\textbf{0.2929 (±0.0159)}} \\
        GP-VAE~\citep{fortuin2020gp} & 0.3285 (±0.0091) & 0.3882 (±0.0052) & 0.5478 (±0.0085) & 0.5598 (±0.0118) & 0.5917 (±0.0107) & 0.7042 (±0.0072) \\
        DiT & \textcolor{ao(english)}{\textbf{0.1592 (±0.0084)}} & \textcolor{ao(english)}{\textbf{0.1701 (±0.0102)}} & \textcolor{ao(english)}{\textbf{0.1825 (±0.0094)}} & \textcolor{ao(english)}{\textbf{0.2154 (±0.0125)}} & \textcolor{ao(english)}{\textbf{0.2578 (±0.0375)}} & 0.3073 (±0.0241) \\
        \bottomrule
    \end{tabular}
    }
    \caption{Time Series Imputation MRE Results}
    \label{tab:mre_results}
\end{table}

\end{document}